\newcommand{\xhdr}[1]{\vspace{1mm} \noindent{\bf #1}}
\newcommand{\corral}{\ensuremath{\mathtt{Corral}}\xspace}
\newcommand{\cats}{\ensuremath{\mathtt{CATS}}\xspace}
\newcommand{\catsop}{\ensuremath{\mathtt{CATS\_Off}}\xspace}
\newcommand{\treetrain}{\ensuremath{\mathtt{Train\_tree}}\xspace}
\newcommand{\tree}{\ensuremath{\Tcal}\xspace}
\newcommand{\dl}{\ensuremath{\mathtt{dLinear}}\xspace}
\newcommand{\dt}{\ensuremath{\mathtt{dTree}}\xspace}
\newcommand{\mina}{a_{\min}\xspace}
\newcommand{\maxa}{a_{\max}\xspace}
\newcommand{\id}{\mathtt{id}\xspace}
\newcommand{\onlyleft}{\mathtt{only\_left}}
\newcommand{\onlyright}{\mathtt{only\_right}}
\newcommand{\rc}{\ensuremath{\mathtt{Return\_cost}}\xspace}
\newcommand{\at}{\alpha}
\newcommand{\bt}{\beta}
\newcommand{\cost}{\mathtt{cost}}
\newcommand{\parent}{\mathtt{parent}}
\newcommand{\learn}{\mathtt{learn}}
\DeclareMathOperator{\Reg}{Reg}
\DeclareMathOperator*{\E}{\mathbb{E}}
\DeclareMathOperator*{\range}{range}
\DeclareMathOperator*{\labelt}{label}
\newcommand{\lt}{\mathtt{l}}
\newcommand{\vt}{\mathtt{v}}
\newcommand{\ut}{\mathtt{u}}
\newcommand{\wt}{\mathtt{w}}
\newcommand{\rt}{\mathtt{r}}
\newcommand{\leftt}{\mathtt{left}}
\newcommand{\rightt}{\mathtt{right}}
\newcommand{\getaction}{\mathtt{get\_action}}
\newcommand{\roott}{\mathtt{root}}
\newcommand{\vol}{\mathrm{vol}}
\newcommand{\onlinetreetrain}{\ensuremath{\mathtt{Online\_train\_tree}}\xspace}
\newcommand{\buildtree}{\ensuremath{\mathtt{Build\_tree}}\xspace}
\newcommand{\defeq}{\triangleq}
\newcommand{\Pen}{\mathrm{Pen}}
\newcommand{\OMIT}[1]{}
\newcommand{\eps}{\ensuremath{\epsilon}}
\newtheorem*{theorem*}{Theorem}
\newtheorem*{runex*}{Running Example}
\newtheorem*{ex*}{Example}
\newtheorem*{remark*}{Remark}
\definecolor{darkgreen}{rgb}{0,0.5,0}
\definecolor{darkred}{rgb}{0.7,0,0}
\definecolor{teal}{rgb}{0.3,0.8,0.8}
\definecolor{orange}{rgb}{1.0,0.5,0.0}
\definecolor{purple}{rgb}{0.8,0.0,0.8}
\newcommand{\kibitz}[2]{\ifnum\Comments=1{\textcolor{#1}{\textsf{\footnotesize #2}}}\fi}
\renewcommand{\citet}{\cite}
\newcommand{\term}[1]{\ensuremath{\mathtt{#1}}\xspace}
\newcommand{\ExpL}{\lambda} 
\newcommand{\base}{\nu}
\newcommand{\smooth}{\term{Smooth}}
\newcommand{\expf}{\term{EXP4}}
\newcounter{protocol}
\title{Efficient Contextual Bandits with Continuous Actions}
\author{%
Maryam Majzoubi\thanks{\texttt{mm7918@nyu.edu}}\\
New York University \\

\And
Chicheng Zhang\thanks{\texttt{chichengz@cs.arizona.edu}}\\
University of Arizona\\
\And
Rajan Chari\thanks{\texttt{rajan.chari@microsoft.com}}\\
Microsoft Research\\
\And
Akshay Krishnamurthy\thanks{\texttt{akshaykr@microsoft.com}}\\
Microsoft Research\\
\And
John Langford\thanks{\texttt{jcl@microsoft.com}}\\
Microsoft Research\\
\And
Aleksandrs Slivkins\thanks{\texttt{slivkins@microsoft.com}}\\
Microsoft Research\\
}
\begin{document}


\maketitle
\setcounter{footnote}{0}
\begin{abstract}
We create a computationally tractable algorithm for contextual bandits with continuous actions having unknown structure.  Our reduction-style algorithm composes with most supervised learning representations.  We prove that it works in a general sense and verify the new functionality with large-scale experiments.
\end{abstract}

\section{Introduction}

In contextual bandit learning~\citep{auer2002nonstochastic,abe2003reinforcement,Langford-nips07,monster-icml14}, an agent repeatedly observes its environment, chooses an action, and receives a reward feedback, with the goal of optimizing cumulative reward. When the action space is discrete, there are many solutions to contextual bandit learning with successful deployments in personalized health, content recommendation, and elsewhere~\citep[e.g.,][]{li2010contextual,tewari2017ads,DS-arxiv, reb1, reb2, reb3}.  However, in many practical settings the action chosen is actually continuous.  How then can we efficiently choose the best action given the context?  This question is also extremely relevant to reinforcement learning more generally since contextual bandit learning is one-step reinforcement learning.

There are many concrete examples of reinforcement learning problems with continuous actions.  In precision medicine~\cite{chen2016personalized,kallus2018policy}, doctors may prescribe to a patient a medication with a continuous value of dosage~\cite{klein2009estimation}. In data center optimization, the fan speeds and liquid coolant flow may be controllable continuous values~\cite{lazic2018data}.  In operating systems, when a computer makes a connection over the network, we may be able to adjust its packet send rate in response to the current network status~\cite{jay2019deep}.  All of these may be optimizable based on feedback and context.

A natural baseline approach here is to posit smoothness assumptions on the world, as in much prior work, e.g., \citep{agrawal1995continuum, LipschitzMAB-stoc08, bubeck2011x, slivkins2014contextual, cesa2017algorithmic}. This approach comes with practical drawbacks. Many applications do not exhibit any smoothness structure. When/if they do, the smoothness parameters (such as a Lipschitz constant) must be known in advance. Unfortunately, discovering the smoothness parameters is challenging, and requires knowing some other parameters and/or extensive exploration. 





A recent approach to continuous actions~\cite{Krish2019colt} realizes similar performance guarantees without knowing the Lipschitz constant
(let alone a more refined smoothness structure),  while leveraging any preferred policy representation. Here, each action is ``smoothed'' to a distribution over an interval, and the benchmark one competes with is ``smoothed'' similarly. Unfortunately, their algorithm is computationally infeasible since it requires enumeration of all possible policy parameter settings.

In this paper, we realize benefits similar to this approach with a computationally practical algorithm, 
for contextual bandits with continuous action space $[0,1]$.
Our algorithms are {\em oracle-efficient}~\citep{Langford-nips07,dudik2011efficient,monster-icml14,rakhlin2016bistro,syrgkanis2016efficient}: computationally efficient whenever we can
solve certain supervised learning problems.
Our main algorithm chooses actions by navigating a tree with supervised learners acting as routing functions in each node. Each leaf corresponds to an action, which is then ``smoothed'' to a distribution from which the final action is sampled. 
We use the reward feedback to update the supervised learners in the nodes to improve the ``tree policy.'' 

Our contributions can be summarized as follows:

\begin{itemize}[leftmargin=*]
\setlength{\itemsep}{0pt}
\setlength{\parsep}{0pt}
    \item We propose \cats, a new algorithm for contextual bandits with continuous actions (Algorithm~\ref{alg:greedy-tree}). It uses $\eps$-greedy exploration with tree policy classes (Definition~\ref{def:tree-policy-class}) and is implemented in a fully online and oracle-efficient manner. We prove that \cats has prediction and update times scaling as log of the tree size, an exponential improvement over traditional approaches.
        Assuming realizability, \cats has a sublinear regret guarantee against the tree policy class (Theorem~\ref{thm:greedy-tree}).
    \item We propose \catsop, an off-policy optimization version of \cats (Algorithm~\ref{alg:off-policy-opt}) that can utilize logged data to train and select tree policies of different complexities. We also establish statistical guarantees for this algorithm (Theorem~\ref{thm:off-policy-opt}).
    \item We implement our algorithms in Vowpal Wabbit
    (\emph{vowpalwabbit.org}),
    and compare with baselines on real datasets. Experiments demonstrate the efficacy and efficiency of our approach (Section~\ref{sec:expt}).
\end{itemize}

\xhdr{Discussion.} The smoothing approach has several appealing properties. 
We look for a good interval of actions, which is possible even when the best single action is impossible to find. 
We need to guess a good \emph{width}, but the algorithm adjusts to the best location for the interval. 
This is less guessing compared to uniform discretization (where the width and location are tied to some extent). 
While the bandwidth controls statistical performance, an algorithm is free to discretize actions for the sake of computational feasibility. 
An algorithm can improve accuracy by reusing datapoints for overlapping bands. Finally, the approach is principled, leading to specific, easily interpretable guarantees.

The tree-based classifier is a successful approach for supervised learning with a very large number of actions (which we need for computational feasibility). However, adapting it for smoothing runs into some challenges. First, a naive implementation leads to a prohibitively large per-round running time; we obtain an exponential improvement as detailed in Section~\ref{sec:greedy-tree}.
Second, existing statistical guarantees do not carry over to regret in bandits: they merely ``transfer" errors from tree nodes to the root~\cite{beygelzimer2009error,beygelzimer2009offset}, but the former errors could be huge. 
We posit a realizability assumption; even then, the analysis is non-trivial because the errors accumulate as we move down the tree. 


Another key advantage of our approach is that it allows us to use off-policy model selection. 
For off-policy evaluation, we use smoothing to induce exploration distribution supported on the entire action space. 
Hence, we can discover when refinements in tree depth or smoothing parameters result in superior performance. Such model selection is not possible when using discretization approaches.
When employed in an offline setup with data collected by a baseline \emph{logging policy}, our experiments show that off-policy optimization can yield dramatic performance improvements. 

\xhdr{Related work.}
Contextual bandits are quite well-understood for small, discrete action spaces, with rich theoretical results and successful deployments in practice. To handle large or infinite action spaces, most prior work either makes strong parametric assumptions such as linearity, or posits some continuity assumptions such as Lipschitzness. More background can be found in 
~\citet{bubeck2012regret,slivkins2019introduction,lattimore2018bandit}.

Bandits with Lipschitz assumptions were introduced
in~\citet{agrawal1995continuum}, and optimally solved in the worst
case by
\citet{Bobby-nips04}. \citet{LipschitzMAB-stoc08,kleinberg2013bandits,bubeck2011x,slivkins2014contextual}
achieve optimal data-dependent regret bounds, while several papers
relax global smoothness assumptions with various local
definitions~\citep{auer2007improved,LipschitzMAB-stoc08,kleinberg2013bandits,bubeck2011x,ImplicitMAB-nips11,minsker2013estimation,grill2015black}.
This literature mainly focuses on the non-contextual version, except for
\citet{slivkins2014contextual,NIPS2011_4487,cesa2017algorithmic,wang2019towards} (which only consider a fixed policy set $\Pi$). As argued in \citet{Krish2019colt}, the smoothing-based approach is productive in these settings, and extends far beyond, e.g., to instances when the global optimum is a discontinuity.

Most related to this paper is \citep{Krish2019colt}, which introduces the smoothness approach to contextual bandits and achieves data-dependent and bandwidth-adaptive regret bounds. Their approach extends to generic ``smoothing distributions" (kernels), as well as to adversarial losses. However, their algorithms are inherently computationally inefficient, because they build on the techniques from \citep{auer2002nonstochastic,dudik2011efficient}.

Our smoothing-based reward estimator was used in~\citet{Krish2019colt} for contextual bandits, as well as in~\citet{kallus2018policy, chen2016personalized} in the observational setting. The works of~\citet{kallus2018policy, chen2016personalized} learn policies that are linear functions of the context, and perform policy optimization via gradient descent on the IPS loss estimate.





\section{Preliminaries}
\label{sec:prelims}

\noindent {\bf Setting and key definitions.}
We consider the stochastic (i.i.d.) contextual bandits (CB) setting.  At each round $t$, the environment produces a (context, loss) pair $(x_t,\ell_t)$ from a distribution $\Dcal$.
Here, context $x_t$ is from the context space $\Xcal$, and the loss function $\ell_t$ is a mapping from the action space $\Acal \defeq [0,1]$ to $[0,1]$.
Then, $x_t$ is revealed to the learner, based on which it chooses an action $a_t \in \Acal$ and observes loss $\ell_t(a_t)$. The learner's goal is to minimize its cumulative loss,
$\sum_{t=1}^T\ell_t(a_t)$.

Define a \emph{smoothing operator}:
 $\smooth_h : \Acal \to \Delta(\Acal)$, that maps each action $a$ to a uniform distribution over the interval $\{a'\in \Acal:\; |a - a'| \leq h\} = [a-h,a+h]\cap [0,1]$.
 As notation, let $\base$ denote the Lebesgue measure, i.e. the uniform distribution over $[0,1]$. Denote by
$\smooth_h(a'|a)$ the probability density function w.r.t., $\base$ for $\smooth_h(a)$ at action $a'$.
We define $\smooth_0(a) \defeq \delta_a$, where $\delta_a$ is the Dirac point mass at $a$.
For a \emph{policy} $\pi: \Xcal \to \Acal$, we define $\pi_h(a'|x) \defeq \smooth_h(a'|\pi(x))$ to be the probability density value for action $a'$ of the smoothed policy $\pi_h$ on context $x$.

Equivalently, we define $h$-smoothed loss $\ell_h(a) \defeq \EE_{a' \sim \smooth_h(\cdot | a)} \sbr{\ell(a')}$. For policy $\pi: \Xcal \to \Acal$ we define the corresponding \emph{$h$-smoothed expected loss} as $\ExpL_h(\pi) \defeq \EE_{(x,\ell)\sim \Dcal}\; \EE_{a \sim \smooth_h(\pi(x))} \sbr{ \ell(a) }$. This is equivalent to defining $\pi_h: x \mapsto \smooth_h(\pi(x))$, and evaluating $\pi_h$ on the original loss, i.e., $\lambda_0(\pi_h) = \lambda_h(\pi)$.
The bandwidth $h$ governs an essential bias-variance trade-off in the continuous-action setting: with small $h$, the smoothed loss $\lambda_h(\pi)$ closely approximates the true expected loss function $\lambda_0(\pi)$,
whereas the optimal performance guarantees scale inversely with $h$.



Over the $T$ rounds, the learner accumulates a history of interaction. After round $t$, this is $(x_s, a_s, p_s, \ell_s(a_s))_{s=1}^{t}$, where $x_s$ is the context, $a_s$ is the chosen action, $p_s = P_s(a_s \mid x_s)$ is the value of the density $P_s(\cdot \mid x_s)$ used at round $s$ at $a_s$, and $\ell_s(a_s)$ is the observed loss. From this history, we use an {\em inverse propensity score} (IPS) estimator~\cite{horvitz1952generalization} to compute an unbiased estimate of the smoothed loss $\lambda_h(\pi)$: $\hat{V}_{t}(\pi_h) = \frac1t\sum_{s=1}^t \frac{\pi_h(a_s \mid x_s)}{P_s(a_s \mid x_s)} \ell_s(a_s)$. A useful primitive in contextual bandits is to find a policy $\pi$ that minimizes $\hat{V}_t(\pi_h)$, which is a surrogate for $\lambda_0(\pi)$.


A natural approach for policy optimization is to reduce to cost-sensitive multiclass classification (CSMC).
We choose a discretization parameter $K$, and instantiate a policy class $\Pi: \Xcal \to \Acal_K$
where $\Acal_K \defeq \{0,\tfrac{1}{K}, \tfrac{2}{K},\ldots,\tfrac{K-1}{K}\}$.
Then, as
$ \pi_h(a_s \mid x_s) = \smooth_h( a_s \mid \nicefrac i K ), \text{ if } \pi(x_s) = \nicefrac i K $,
policy optimization can be naturally phrased as a CSMC problem.
For each round, we create a cost-sensitive example $(x_s, \tilde{c}_s)$ where $\tilde{c}_s(\nicefrac i K) = \frac{\ell_s(a_s)}{P_s(a_s \mid x_s)} \smooth_h \rbr{a_s \lvert \nicefrac i K}$, for all $\nicefrac i K$ in $\Acal_K$.
Then, optimizing $\hat{V}_t(\pi_h)$ is equivalent to computing $\argmin_{\pi \in \Pi} \sum_{s=1}^t \tilde{c}_s(\pi(x_s))$.
When working with $h$-smoothed losses, the error incurred by using the discretized action space $\Acal_K$ can be controlled, as we can show that $\ell_h(a)$ is $\nicefrac{1}{h}$-Lipschitz~\citep{Krish2019colt}.\footnote{Although we use the $1/h$-Lipschitz property of $h$-smoothed losses here, in general, $h$-smoothed losses have more structure than $1/h$-Lipschitz losses, which admit better regret guarantees in general.} So, this discretization strategy can compete with policies that are not restricted to $\Acal_K$, incurring an additional error of $\tfrac{1}{hK}$ per round.

\xhdr{Tree policies.}
One challenge with applying the CSMC approach is computational: for general classes $\Pi$, classical methods for CSMC (such as one-versus-all)
have $\Omega(K)$ running time.
This is particularly problematic since we want $K$ to be quite large in order to compete with policies that are not restricted to $\Acal_K$.
To overcome this challenge, we consider a structured policy class induced by a binary tree $\Tcal$, where each node $\vt$ is associated with a binary classifier $f^\vt$ from some base class $\Fcal$.\footnote{We assume that $\Fcal$ is finite for simplicity. This can be extended with empirical process arguments.}

\begin{definition}[Tree policy]
Let $K = 2^D$ for some natural number $D$, and $\Fcal$ be a class of binary classifiers from $\Xcal$ to $\cbr{\leftt, \rightt}$. $\tree$ is said to be a tree policy over action space $\Acal_K = \cbr{\nicefrac{i}{K}}_{i=0}^{K-1}$ using $\Fcal$, if: (1) $\tree$ is a complete binary tree of depth $D$ with $K = 2^D$ leaves, where each leaf $\vt$ has label $\labelt(\vt) = \nicefrac{0}{K}, \ldots, \nicefrac{K-1}{K}$ from left to right, respectively; (2) in each internal node $\vt$ of $\tree$, there is a classifier $f^{\vt}$ in $\Fcal$; (3) the prediction of $\tree$ on an example $x$, $\tree.\getaction(x)$, is defined as follows. Starting from the root of $\tree$, repeatedly route $x$ downward by entering the subtree that follows the prediction of the classifier in the tree nodes. When a leaf is reached, its label is returned (see Algorithm~\ref{alg:filter-tree-pred} in Appendix~\ref{sec:add-not} for a formal description).
\end{definition}


In other words, a tree policy over action space $\Acal_K$ can be viewed as a decision tree of depth $D$, where its nodes form a hierarchical partition of the discretized action space $\Acal_K$. For each node in the tree, there is a subset of the context space that gets routed to it; therefore, given a tree policy $\tree$ over $\Acal_K$, it also implicitly defines a hierarchical partition of the context space $\Xcal$.
The crucial difference between a tree policy and a decision tree in the usual sense, is that each leaf node corresponds to a distinct action.
Our tree policy approach is also fundamentally different from the approach of~\cite{slivkins2014contextual,wang2019towards} in contextual bandits, in that their usages of trees are in performing regression of reward as a function of (context, action) pairs.
Our policy classes of interest are tree policy classes:
\begin{definition}
\label{def:tree-policy-class}
Let $\Fcal_{K}$ denote the policy class of all tree policies over action space $\Acal_{K}$ using base class $\Fcal$, that is, the set of tree policies $\Fcal_{K} = \cbr{\tree: \tree \text{ is a tree policy over action space $\Acal_K$ using $\Fcal$} }$.
Furthermore, Let $\Fcal_\infty$ denote the policy class of all tree policies of arbitrary depths using base class $\Fcal$, formally, $\Fcal_\infty = \bigcup_{K: K \in 2^{\NN}}\Fcal_K$.
\end{definition}

As a computational primitive, we assume that we can solve CSMC problems over the base class $\Fcal$. Note that formally these are \emph{binary} classification problems.
The main advantage of using these structured policy classes is computational efficiency. As we demonstrate in the next section, we can use fast online CSMC algorithms to achieve a running time of $\order{\log K}$ per example. At the same time, due to the hierarchical structure, choosing an action using a policy in $\Fcal_K$ also takes $\order{\log K}$ time. Both of these are exponentially faster than the $\order{K}$ running time that typically arises from flat representations. Finally, given a tree policy, we define the tree policy rooted at one of its nodes:

\begin{definition}
Let $\vt$ be an internal node in $\tree$. We define $\Tcal^\vt$ as the tree-based policy with root at $\vt$. We will abbreviate $\tree^\vt.\getaction(x)$ as $\tree^\vt(x)$ or $\vt.\getaction(x)$.
\end{definition}



\vspace{-2mm}

\xhdr{The performance benchmark.}
We define the performance benchmark:
$
    \Reg(T,\Fcal_\infty,h) \defeq \EE\sbr{\sum_{t=1}^T \ell_t(a_t)} - T\inf_{\pi \in \Fcal_{\infty}} \lambda_h(\pi).
$
In words, we are comparing the cumulative expected loss of our algorithm, with the $h$-\emph{smoothed} cumulative expected loss of the best tree policy of arbitrary depth. We call this the $h$-\emph{smoothed regret} w.r.t. $\Fcal_{\infty}$. Although the focus of this paper is on contextual bandit algorithms with computational efficiency guarantees, in Appendix~\ref{sec:general}, we also present several extensions of our results to general policy classes.

\xhdr{Miscellaneous notation.}
Given a set of CSMC examples $S = \cbr{(x,c)}$ of size $n$, and a function $f$, we use $\EE_S \sbr{f(x,c)} \defeq \frac1n\sum_{(x,c) \in S} f(x,c)$ to denote empirical expectation of $f$ over $S$.
Given a function $f$ with domain $\Zcal$, define its range to be the set of values it can take, i.e. $\range(f) \defeq \cbr{f(z): z \in \Zcal}$. Specifically, given a tree $\tree$ over action space $\Acal_K$ and a node $\vt$
 in $\tree$, $\range(\tree^\vt)$ denotes the actions reachable by $\tree$, i.e. the action labels of the leaves that are descendants of $\vt$. Given a natural number $n$, we denote by $[n] \defeq \cbr{1,\ldots,n}$.

\section{Algorithm}
\label{sec:algorithm}

We describe our main algorithm \cats for learning with continuous actions using tree policies in Algorithm~\ref{alg:greedy-tree} and an off-policy version in Algorithm~\ref{alg:off-policy-opt} for unknown $h$. In Appendix~\ref{sec:adaptive-tree}, we also present a variant of \cats that works online for unknown $h$.

\subsection{Smoothed $\eps$-greedy algorithm with trees}
\label{sec:greedy-tree}

We present Algorithm~\ref{alg:greedy-tree} in this section. It consists of two main components: first, a smoothed $\epsilon$-greedy exploration strategy (lines~\ref{line:greedy-start} to~\ref{line:greedy-end}); second,
a tree training procedure \treetrain called at line~\ref{line:tree-train}, namely Algorithm~\ref{alg:tree-train}. We discuss each component in detail next.

	
\xhdr{$\epsilon$-greedy exploration with smoothing.} At time step $t$, the algorithm uses the policy $\pi_t$ learned from data collected in previous time steps to perform action selection. Specifically, with probability $\epsilon$, it chooses an action uniformly at random from $\Acal$; otherwise, it chooses an action based on the prediction of $\pi_{t,h}$, the $h$-smoothing of policy $\pi_t$. As we will see, $\pi_{t,h}$ has expected loss competitive with any smoothed policy $\pi_h$ with $\pi$ in $\Fcal_K$ (and is therefore competitive with $\Fcal_\infty$).
This component is similar to the $\epsilon$-greedy algorithm for discrete action contextual bandits~\citep[e.g.][]{Langford-nips07}; here $\epsilon$ is a parameter that trades off between exploration and exploitation, where a larger $\epsilon$ yields better quality data for learning, and a smaller $\epsilon$ implies actions with better instantaneous losses are taken.


\xhdr{Tree training.} Given the interaction log collected up to time $t$, $\cbr{(x_s, a_s, P_s(a_s \mid x_s), \ell_t(a_s))}_{s=1}^t$, Algorithm~\ref{alg:greedy-tree} incorporates it to produce a policy $\pi_{t+1}$ for time $t+1$. Specifically, $\pi_{t+1}$ is a tree policy $\tree$ in $\Fcal_K$ that approximately minimizes $\hat{V}_t(\tree'_h)$ over all policies $\tree'$ in $\Fcal_\infty$. To this end,
we use \treetrain (Algorithm~\ref{alg:tree-train}) over the set of cost-sensitive examples $(x_s, \tilde{c}_s)_{s=1}^t$ constructed by IPS.
For technical reasons\footnote{We need to partition the input CSMC dataset in a delicate manner to ensure \treetrain's theoretical guarantees; see Lemma~\ref{lem:off-policy-fixed} and its proof in Appendix~\ref{sec:proofs} for more details. In our implementation we ignore such subtlety; see Algorithm~\ref{alg:tree-train-no-part} in Appendix~\ref{sec:cats-implementation}.}, \treetrain differs from the filter tree algorithm~\cite{beygelzimer2009error} in that it partitions the dataset into $D = \log K$ subsets, with their indices $B_0, \ldots, B_{D-1}$ being disjoint subsets in $[n]$. For every $i$, the examples with indices in $B_i$ are dedicated to training classifiers in tree nodes at level $i$.
\treetrain trains the classifiers in the tree nodes in a bottom-up fashion. At the bottom layer, each node $\vt$ with two leaves as children seeks a classifier $f_\vt$ in $\Fcal$ that directly classifies the context $x$ to the action in $\range(\tree^\vt)$ with smaller expected cost $\EE[\ell(a) \mid x]$.
For this, it invokes CSMC learning with class $\Fcal$ where costs are the IPS costs for the two children $\vt.\leftt$ and $\vt.\rightt$.
At other internal nodes $\vt$, given that all the downstream classifiers in subtrees rooted at $\vt.\leftt$ and $\vt.\rightt$ have been trained, it aims to find a classifier $f_\vt$ in $\Fcal$ such that $f_\vt$, in conjunction with other classifiers in $\tree^\vt$, routes context $x$ to the action in $\range(\tree^\vt)$ with the smallest expected cost. 

\begin{algorithm}[t]
\caption{\cats: continuous action tree with smoothing}
\label{alg:greedy-tree}
	\begin{algorithmic}[1]
    \REQUIRE{Exploration and smoothing parameters $(\epsilon, h)$, discretization scale $K = 2^D$, base class $\Fcal$}
    \STATE Initialize dataset $S_0 \gets \emptyset$, and tree policy $\tree$ with classifiers $f^\vt \in \Fcal$ at every internal node $\vt$.
    \label{line:cats-init}
	\FOR{$t = 1,2,\ldots, T$}	
	\STATE Let $\pi_t$ be the tree policy $\tree$ with $\cbr{f^\vt}$ as classifiers.
	\label{line:tree-t}
    \STATE Define policy $P_t(a \mid x) := (1-\epsilon) \pi_{t,h}(a|x) + \epsilon$.
    \label{line:policy-t}
    \label{line:greedy-start}
    \STATE Observe context $x_t$, select action $a_t \sim P_t(\cdot \mid x_t)$, observe loss $\ell_t(a_t)$.
    \label{line:cb-inter}
    \STATE 
    Let
    $\tilde{c}_t(\nicefrac{i}{K}) \gets \frac{\smooth_h(a_t \mid \nicefrac{i}{K})}{P_t(a_t \mid x_t)} \ell_t(a_t)$ for all $i$ 
    \label{line:greedy-end}
    \label{line:csmc-gen}
    \STATE $\Tcal \gets \treetrain(K,\Fcal,\{(x_s,\tilde{c}_s)\}_{s=1}^t)$.
    \label{line:tree-train}
	\ENDFOR
\end{algorithmic}
\end{algorithm}
\begin{algorithm}[t]
 \caption{Tree training: \treetrain}\label{alg:tree-train}
 \begin{algorithmic}[1]
    \REQUIRE $K=2^D$, $\Fcal$, data $\{(x_s,c_s)\}_{s=1}^n$ with $c_s \in \RR^K$
    \STATE For level $d = 0,\ldots,D-1$: $B_{d} \gets \cbr{ (D-d-1)n' + 1, \ldots, (D-d) n' }$, where $n' = \lfloor \nicefrac{n}{D} \rfloor $.
    \FOR{level $d$ from $D-1$ down to $0$}
    \FOR{nodes $\vt$ at level $d$}  
    \STATE For each $(x_s,c_s)$ define binary cost $c_s^{\vt}$ with\\ {$c_s^{\vt}(\leftt) = c_s(\vt.\leftt.\getaction(x_s)), c_s^{\vt}(\rightt) = c_s(\vt.\rightt.\getaction(x_s)).$}
    \STATE Train classifier at node $\vt$: $f^{\vt} \gets \argmin_{f \in \Fcal} \EE_{S^\vt} \sbr{c^\vt(f(x))}$, where $S^{\vt} = \{(x_s,c_s^{\vt}): s \in B_l, c_s^{\vt}(\leftt) \ne c_s^{\vt}(\rightt)\}$:
    \label{line:tree-node-erm}
    \ENDFOR
    \ENDFOR
    \RETURN tree $\Tcal$ with $\{f^{\vt}\}$ as node classifiers.
 \end{algorithmic}
\end{algorithm}
\begin{algorithm}[t]
\caption{\catsop}
\begin{algorithmic}[1]
\REQUIRE{logged data $\cbr{ (x_t, a_t, P_t(a_t \mid x_t), \ell_t(a_t)) }_{t=1}^T$, minimum density of action distribution $p_{\min}$, set of (bandwidth, discretization level) combinations $\Jcal \subset [0,1] \times 2^\NN$, base class $\Fcal$.}
\FOR{(bandwidth, discretization level) $(h,K)$ in $\Jcal$}
\label{line:po-tree-start-off}
	\STATE For every $t$ in $[T]$, let
    $\tilde{c}_t^h(\nicefrac{i}{K}) \gets \frac{\smooth_h(a_t \mid \nicefrac{i}{K})}{P_t(a_t \mid x_t)} \ell_t(a_t)$ for all $i \in \cbr{0,\ldots,K-1}$.
	\FOR{$t = 1,2,\ldots,T$}
	\STATE $\tree^{h, K}_t \gets \treetrain(K,\Fcal,\cbr{(x_s, \tilde{c}_s^h)}_{s=1}^{t-1})$. 	
	\ENDFOR
\ENDFOR
\label{line:po-tree-end-off}
	\STATE Let
$
	(\hat{h}, \hat{K}) \gets \argmin_{(h,K) \in \Jcal} \Big( \frac1T \sum_{t=1}^T \tilde{c}_t^h(\tree^{h, K}_t(x_t))
	+ \Pen(h, K) \Big),
$
	where $\Pen(h, K) =
	\sqrt{ \frac1T \sum_{t=1}^T \tilde{c}_t^h(\tree^{h, K}_t(x_t)) \cdot \frac{64 \ln\frac {4 T \abr{\Jcal}} \delta}{T p_{\min} h}} + \frac{64 \ln\frac {4 T \abr{\Jcal}} \delta}{T p_{\min} h}.$
	\label{line:srm-off}
	\RETURN $\hat{\pi}$ drawn uniformly at random over set $\cbr{\tree^{\hat{h}, \hat{K}}_{t, \hat{h}}}_{t=1}^T$.
\end{algorithmic}
\label{alg:off-policy-opt}
\end{algorithm}

\xhdr{Computational complexity.} \cats can be implemented in a fully online and oracle-efficient fashion, using online CSMC learners.
Specifically, line~\ref{line:tree-node-erm} in \treetrain can be implemented by maintaining a stateful online learner for each tree node $\vt$, which at time $t$ maintains $f_t^{\vt}$, an approximation of $\argmin_{f \in \Fcal} \sum_{s=1}^{t-1} \sbr{c_s^\vt(f(x_s))}$. Then, upon seeing a binary CSMC example $(x_t, c_t^\vt)$, the learner employs incremental update rules such as stochastic gradient descent to update its internal state to $f_{t+1}^{\vt}$, an approximate solution to the next CSMC problem.




We now look at the per-example computational cost of \cats using the above online implementation of CSMC oracle.
Naively, in line~\ref{line:tree-node-erm} of \treetrain, if we instead define $S^\vt = \cbr{(x_s, c^\vt_s): s \in B_i}$ for every node $\vt$, i.e. we do not filter out examples with identical costs for left and right sides at node $\vt$, the time for processing each example would be $\order{ K }$, since it contributes a binary CSMC example to $S^\vt$ for every node $\vt$.

Our first observation is that, if at time $t$, $c_t^{\vt}(\leftt) = c_t^{\vt}(\rightt)$, the online CSMC learner can skip processing example $(x_t, c_t^\vt)$, as is done in line~\ref{line:tree-node-erm} of \treetrain. This is because adding this example does not change the cost-sensitive ERM from round $t$ to round $t+1$.
However, the algorithm still must decide whether this happens for each node $\vt$, which still requires $\order{K}$ time naively.

Our second observation is that, by carefully utilizing the piecewise constant nature of the IPS cost vector $\tilde{c}_t$, we can find the nodes that need to be updated and compute the costs of their left and right children, both in $\order{\log K}$ time per example. Specifically, as $\tilde{c}_t$ is piecewise constant with two discontinuities, only two root-to-leaf paths contain nodes that have children with differing costs and must be updated (see Appendix~\ref{sec:cats-implementation}, specifically Lemma~\ref{lem:update-correctness} and its proof for more explanations).
Exploiting these observations, we implement \cats to have $\order{\log K}$ update time, an exponential improvement over naive implementations. This is summarized in the next theorem.

\begin{theorem}\label{thm:cats-log-time}
\cats with an online learner at each node requires $\order{\log K}$ computation per example.
\end{theorem}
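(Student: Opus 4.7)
The plan is to break the per-round work of \cats into two phases, action selection and online update, and bound each by $O(\log K)$. Action selection at round $t$ requires computing $\pi_t(x_t) = \tree.\getaction(x_t)$, which descends the tree one classifier per level for $D = \log K$ oracle prediction calls, and then sampling from $P_t(\cdot \mid x_t) = (1-\epsilon)\pi_{t,h}(\cdot\mid x_t) + \epsilon\,\base$, which is a mixture of a uniform distribution over $[\pi_t(x_t)-h,\pi_t(x_t)+h]\cap[0,1]$ and a uniform distribution over $[0,1]$, both supporting $O(1)$ sampling and density evaluation. So action selection is $O(\log K)$.

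The main content is the update. The key is to exploit the piecewise-constant structure of the IPS cost. Since $\smooth_h(a_t \mid \nicefrac{i}{K}) = \frac{1}{2h}\mathbbm{1}\{|a_t - \nicefrac{i}{K}|\le h\}$ (modulo boundary clipping), the cost $\tilde{c}_t$ defined in line~\ref{line:csmc-gen} takes at most two distinct values as $i$ ranges over $\{0,\ldots,K-1\}$, with at most two ``transition'' indices $i_-, i_+$ near $K(a_t-h)$ and $K(a_t+h)$. I plan to show next that the set of nodes satisfying the filter $c_t^\vt(\leftt)\ne c_t^\vt(\rightt)$ from line~\ref{line:tree-node-erm} -- i.e., the only nodes whose classifier needs to be updated -- is contained in the union of two root-to-leaf paths: if the filter fires at $\vt$, then $\vt.\leftt.\getaction(x_t)$ and $\vt.\rightt.\getaction(x_t)$, which both lie in $\range(\tree^\vt)$, must sit on opposite sides of a transition, forcing $i_-$ or $i_+$ to lie in $\range(\tree^\vt)$; the ancestors of each transition form a single root-to-leaf path of length $D$, giving at most $2D = O(\log K)$ such nodes.

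It then remains to locate these nodes and produce the binary costs $(c_t^\vt(\leftt), c_t^\vt(\rightt))$ in $O(\log K)$ total time. I plan to do this by a bottom-up dynamic program along the two paths, maintaining $\hat{c}^\vt \defeq \tilde{c}_t(\vt.\getaction(x_t))$. At each on-path internal node $\vt$, one classifier evaluation $f^\vt(x_t)$ selects a child $c^\star\in\{\leftt,\rightt\}$ and sets $\hat{c}^\vt = \hat{c}^{\vt.c^\star}$ in $O(1)$. Each child is either on-path (value already computed by the DP) or off-path; in the latter case $\range(\tree^{\vt.c})$ is a contiguous block of action indices not containing any transition, so $\tilde{c}_t$ is constant on this block, and $\hat{c}^{\vt.c}$ is read off in $O(1)$ from the block's endpoints. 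Finally, the pair $(c_t^\vt(\leftt), c_t^\vt(\rightt)) = (\hat{c}^{\vt.\leftt}, \hat{c}^{\vt.\rightt})$ is handed to the online oracle at node $\vt$, which performs an $O(1)$ update. Summing over the $\le 2D$ on-path nodes yields $O(\log K)$ work, which combined with action selection gives the theorem.

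The main obstacle I anticipate is the bookkeeping behind the ``off-path child lookup'': one must argue that for any off-path node $u$, the actions in $\range(\tree^u)$ form a contiguous block in $\Acal_K$ that is \emph{strictly} bounded away from both transitions, so $\hat{c}^u$ can be read off from the block's endpoints without any classifier calls in $\tree^u$. This follows from the fact that a complete binary tree's leaves partition $\Acal_K$ into contiguous dyadic intervals, together with the characterization of the two paths as the ancestors of $i_-$ and $i_+$, but it requires a careful indexing argument that I would formalize via an explicit interval for each node together with a case analysis at the deepest common ancestor of the two transition leaves.
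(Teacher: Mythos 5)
Your overall strategy matches the paper's: action selection costs one classifier evaluation per level; the IPS cost vector is (claimed to be) piecewise constant with two transitions, so only nodes on the two root-to-leaf paths above the transition leaves can have $c_t^\vt(\leftt)\ne c_t^\vt(\rightt)$; and a bottom-up pass with $\order{1}$ constant-block lookups for off-path children produces the binary costs. The two-path containment is the paper's Lemma~\ref{lem:update-correctness}, and your ``off-path child lookup'' is exactly the paper's $\rc$ subroutine together with Claim~\ref{claim:cost-v}.

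There is, however, a genuine gap hiding in your parenthetical ``(modulo boundary clipping).'' The density is $\smooth_h(a_t\mid \nicefrac{i}{K}) = \one\rbr{|a_t - \nicefrac{i}{K}|\le h} / \vol\rbr{[\nicefrac{i}{K}-h,\nicefrac{i}{K}+h]\cap[0,1]}$, and the normalizing volume depends on the \emph{center} $\nicefrac{i}{K}$: it equals $2h$ only when $\nicefrac{i}{K}\in[h,1-h]$. When $a_t < 2h$ (or $a_t > 1-2h$), the support window $[a_t-h,a_t+h]$ contains up to $\Theta(hK)$ centers within $h$ of the boundary, and there $\tilde c_t(\nicefrac{i}{K}) = \ell_t(a_t)/\rbr{(\nicefrac{i}{K}+h)\,P_t(a_t\mid x_t)}$ takes a different value for each $i$. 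Since $a_t$ is drawn from a distribution with an $\epsilon$-uniform component, such rounds occur with positive probability. In that case $\tilde c_t$ is not two-valued, the filter $c_t^\vt(\leftt)\ne c_t^\vt(\rightt)$ can fire at $\Theta(hK)$ nodes rather than along two paths, and your dynamic program does not run in $\order{\log K}$ time. The paper resolves this not by a sharper analysis but by modifying the algorithm: $\buildtree$ installs read-only classifiers $f^{\vt^{\onlyleft}}\equiv\leftt$ and $f^{\vt^{\onlyright}}\equiv\rightt$ so the tree policy never reaches actions in $\Acal_K\cap([0,h]\cup[1-h,1])$, and $\onlinetreetrain$ skips updates at those nodes; the two-transition piecewise-constant structure is then needed only on $\Acal_K\cap[h,1-h]$, where it genuinely holds. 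You would need this modification (or some other explicit treatment of the boundary blocks) for your argument to close.
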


We elaborate on our online implementation and present the proof of the theorem in Appendix~\ref{sec:cats-implementation}. Our theorem generalizes the computational time analysis of the offset tree algorithm for discrete-action contextual bandits~\cite{beygelzimer2009offset}, in that we allow input IPS CSMC examples to have multiple nonzero entries.

\subsection{Off-policy optimization}
As discussed above, one major advantage of the smoothing approach to  contextual bandits with continuous actions is that policy optimization can be easily reduced to
a CSMC learning problem via counterfactual techniques. This allows off-policy optimization, in the sense that the logged data can be collected using one policy that takes action in $\Acal$, while we can optimize over (smoothed) policy classes that take actions in $\Acal_{K}$. In the special setting that we learn from a tree policy class $\Fcal_{K}$, the \catsop algorithm (Algorithm~\ref{alg:off-policy-opt}) can be used.


The algorithm receives an interaction log $\cbr{(x_t, a_t, P_t(a_t \mid x_t), \ell_t(a_t)}_{t=1}^T$, collected by another algorithm such that $P_t( a \mid x_t) \geq p_{\min}$ for all $a \in \Acal$,
a collection of (bandwidth, disretization levels) $\Jcal$, and a base policy class $\Fcal$ as input.
It consists of two stages: tree training and policy selection.
In the tree training stage (lines~\ref{line:po-tree-start-off} to~\ref{line:po-tree-end-off}), for each $( (h, K), t )$ combination
in $\Jcal \times [T]$, the algorithm again calls \treetrain over cost-sensitive examples $\cbr{(x_s, \tilde{c}_s^h)}_{s=1}^{t-1}$ induced by the interaction log and the bandwidth $h$. As a result, we obtain a set of tree policies $\cbr{\tree^{h, K}_t: t \in [T], (h,K) \in \Jcal}$
In the policy selection stage (line~\ref{line:srm-off}), we choose a pair $(\hat{h}, \hat{K})$ from the set $\Jcal$ using structural risk minimization~\cite{vapnik1995nature}, by trading off $\frac1T \sum_{t=1}^T \tilde{c}_t^h(\tree^{h, K}_t(x_t))$, the progressive validation loss estimate of smoothed policies $\cbr{\tree^{h, K}_{t,h}: t \in [T]}$ on logged data~\cite{PV99} and its deviation bound $\Pen(h, K)$ that depends on $h$ and $K$.
A similar procedure has been proposed in the discrete-action contextual bandit learning setting~\cite{swaminathan2015counterfactual}.
As we see from Theorem~\ref{thm:off-policy-opt} below, the obtained tree policy $\tree$ has expected loss competitive with all policies in the set $\cup_{(h,K) \in \Jcal} \cbr{\pi_h: \pi \in \Fcal_K}$.




\section{Performance guarantees}
\label{sec:theory}

In this section, we show that \cats and \catsop achieve sublinear regret or excess loss guarantees
under a realizability assumption over the (context, loss) distribution
$\Dcal$. We defer the formal statements of our theorems and their proofs to Appendix~\ref{sec:proofs}.

As learning decision trees is computationally hard in general~\cite{hancock1996lower}, many existing positive results pose strong assumptions on the learning model, such as uniform or product unlabeled distribution~\cite{blanc2019top,brutzkus2019optimality}, separability~\cite{ehrenfeucht1989learning,rivest1987learning,blum1992rank} or allowing membership queries~\cite{kushilevitz1993learning,gopalan2008agnostically}. Our tree policy training guarantee under the following realizability assumption is complementary to these works:
\begin{definition}
A hypothesis class $\Fcal$ and data distribution $\Dcal$ is said to be $(h, K)$-realizable, if there exists a tree policy $\tree$ in $\Fcal_K$ such that the following holds:
for every internal node $\vt$ in $\tree$, there exists a classifier $f^{\vt,\star}$ in $\Fcal$, such that
\begin{align*}
&f^{\vt,\star}(x) = \leftt
\Rightarrow \min_{a \in \range(\tree^\lt)} \EE[\ell_h(a) \mid x] \leq \min_{a \in \range(\tree^\rt)} \EE[\ell_h(a) \mid x],\\
 &f^{\vt,\star}(x) = \rightt
 \Rightarrow \min_{a \in \range(\tree^\rt)} \EE[\ell_h(a) \mid x] \leq \min_{a \in \range(\tree^\lt)} \EE[\ell_h(a) \mid x],
\end{align*}
where $\lt = \vt.\leftt$ and $\rt = \vt.\rightt$ are $\vt$'s two children; recall that $\ell_h(a) \defeq \EE_{a' \sim \smooth_h(\cdot|a)} \ell(a')$.
\label{def:well-specified-lh}
\end{definition}


Intuitively, the above realizability assumption states that our base class $\Fcal$ is expressive enough, such that for every discretization parameter $K$ in $2^\NN$, there exists a set of $(K-1)$ classifiers $\cbr{f^{\vt,\star}}_{\vt \in \tree} \subset \Fcal$ occupying the internal nodes of a tree $\tree$ of $K$ leaves, and $\tree$ routes any context $x$ to its Bayes optimal discretized action in $\Acal_K$, formally $\argmin_{a \in \Acal_K} \EE[\ell_h(a) \mid x]$.
As $\ell_h(\cdot)$ is $\frac1h$-Lipschitz, $\min_{a \in \Acal_K}\EE[\ell_h(a) \mid x] - \min_{a \in \Acal}\EE[\ell_h(a) \mid x] \leq \frac{1}{hK}$.
This implies that, if $K$ is large enough,
the Bayes optimal policy $\pi^\star(x) = \argmin_{a \in \Acal} \EE[\ell_h(a) \mid x]$ can be well-approximated by a tree policy in $\Fcal_K$ with little excess loss. Under the above realizability assumption, we now present a theorem that characterizes the regret guarantee when Algorithm~\ref{alg:greedy-tree} uses policy class $\Fcal_K$.





\begin{theorem}[Informal]\label{thm:greedy-tree}
Given $h$, suppose $(\Fcal, \Dcal)$ is $(h, K)$-realizable for any $K \in 2^{\NN}$.  Then with appropriate settings of greedy parameter $\epsilon$ and discretization scale $K$, with high probability, Algorithm~\ref{alg:greedy-tree} run with inputs $\epsilon, h, K, \Fcal$ has regret bounded as:
$\Reg(T, \Fcal_\infty, h)
  \leq
  \order{
  	  \rbr{
	  	  \nicefrac{T^4 \ln\abr{\Fcal}}{h^3}
		  }^{\nicefrac{1}{5}}
		}$.
\end{theorem}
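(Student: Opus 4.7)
The plan is to decompose the smoothed regret into an exploration cost, a learning (tree-training) cost, and a discretization cost, bound each separately, and then tune $\eps$ and $K$ to balance them. Let $\pi_K^\star \in \Fcal_K$ be a minimizer of $\lambda_h(\pi)$ over $\Fcal_K$; then
\[
\Reg(T,\Fcal_\infty,h) \;=\; \underbrace{\sum_{t=1}^T \big(\EE[\ell_t(a_t)] - \lambda_h(\pi_t)\big)}_{\text{exploration}} \;+\; \underbrace{\sum_{t=1}^T \big(\lambda_h(\pi_t) - \lambda_h(\pi_K^\star)\big)}_{\text{learning}} \;+\; \underbrace{T\big(\lambda_h(\pi_K^\star) - \inf_{\pi \in \Fcal_\infty}\lambda_h(\pi)\big)}_{\text{discretization}}.
\]

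Because $P_t = (1-\eps)\pi_{t,h} + \eps\,\base$ and losses lie in $[0,1]$, a direct computation gives $\EE[\ell_t(a_t)] - \lambda_h(\pi_t) \leq \eps$, so the exploration term is at most $\eps T$. For the discretization term, $(h,K)$-realizability says that some tree in $\Fcal_K$ routes every context $x$ to an action in $\Acal_K$ minimizing $\EE[\ell_h(a)\mid x]$; combining this with the $1/h$-Lipschitz property of $\ell_h$ bounds $\lambda_h(\pi_K^\star)$ by $\inf_{\pi \in \Fcal_\infty}\lambda_h(\pi) + \tfrac{1}{hK}$, giving a discretization cost of at most $T/(hK)$.

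The learning term is the heart of the proof. Because $P_s(a_s\mid x_s) \geq \eps$ and $\smooth_h(\cdot\mid a) \leq 1/(2h)$, the importance-weighted summands of $\hat V_t(\pi_h)$ are bounded by $1/(2h\eps)$. A Hoeffding/Bernstein-type inequality for martingale sums (to handle the data-dependent choice of $P_s$), together with a union bound over $\Fcal_K$, whose cardinality is $|\Fcal|^{K-1}$, yields with high probability and uniformly over $\pi \in \Fcal_K$ and $t \leq T$ that
\[
|\hat V_t(\pi_h) - \lambda_h(\pi)| \;\leq\; \tilde{O}\!\left(\tfrac{1}{h\eps}\sqrt{\tfrac{K \ln |\Fcal|}{t}}\right).
\]
I then invoke the tree-training guarantee underlying Lemma~\ref{lem:off-policy-fixed}: under $(h,K)$-realizability, the bottom-up \treetrain procedure -- using the level-disjoint batches $B_0,\ldots,B_{D-1}$, so that the cost functions driving the CSMC problem at level $d$ are conditionally deterministic given the already-trained lower subtrees -- returns $\pi_{t+1}$ whose empirical smoothed loss is within the same $\tilde{O}(\tfrac{1}{h\eps}\sqrt{K\ln |\Fcal|/t})$ additive slack of $\min_{\pi \in \Fcal_K}\hat V_t(\pi_h)$. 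Combining with the uniform IPS bound produces a per-round learning error of the same order, and summing over $t$ bounds the learning term by $\tilde{O}(\sqrt{KT\ln|\Fcal|}/(h\eps))$.

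Putting the three bounds together, I tune $K \asymp T^{1/5} h^{-2/5}$ and $\eps \asymp T^{-1/5} h^{-3/5} (\ln|\Fcal|)^{1/5}$ to equalize $\eps T$, $T/(hK)$, and $\sqrt{KT\ln|\Fcal|}/(h\eps)$, each becoming of order $(T^4 \ln|\Fcal|/h^3)^{1/5}$. The main obstacle is the learning step: a naive per-node generalization argument for \treetrain would add up $\Omega(K)$ node-level errors and introduce an extra polynomial factor of $K$, degrading the final rate beyond $T^{4/5}$. Recovering the $\sqrt{K}$ rate of global ERM over $\Fcal_K$ is what forces the careful level-disjoint partitioning of the dataset (to restore independence between layers) and the use of $(h,K)$-realizability to argue that all per-level CSMC problems target the same fixed Bayes-optimal tree; this is precisely what Lemma~\ref{lem:off-policy-fixed} in the appendix is designed to deliver.
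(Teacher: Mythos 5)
Your three-way decomposition into exploration, learning, and discretization terms, the $\eps T$ bound for exploration, and the $T/(hK)$ discretization cost via $1/h$-Lipschitzness all match the paper's proof of Theorem~\ref{thm:greedy-tree-formal}, and your tuning recovers the stated rate. The gap is in the learning term, which you yourself call the heart of the proof. You assert that \treetrain returns a tree whose empirical smoothed loss is within the \emph{global} uniform-convergence slack $\tilde{O}\bigl(\tfrac{1}{h\eps}\sqrt{K\ln\abr{\Fcal}/t}\bigr)$ of $\min_{\pi\in\Fcal_K}\hat V_t(\pi_h)$, i.e.\ that the level-disjoint partition lets \treetrain ``recover the $\sqrt{K}$ rate of global ERM over $\Fcal_K$.'' No such approximate-ERM guarantee is proved anywhere, and it does not follow from the construction: each node of \treetrain greedily solves its own binary CSMC problem, and there is no argument that the composed tree nearly minimizes the global empirical IPS loss $\hat V_t$. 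What Lemma~\ref{lem:off-policy-fixed} actually proves is weaker and different in kind: a node-by-node induction, using the per-node realizability of Definition~\ref{def:well-specified-lh}, in which each of the $2K-1$ nodes contributes an error of order $\sqrt{\ln(n'K\abr{\Fcal}/\delta)/(n'p_{\min}h)}$ and these errors add up \emph{linearly} over the nodes, giving a population excess loss of order $\sqrt{K^2\log K\cdot\ln(nK\abr{\Fcal}/\delta)/(np_{\min}h)}$ --- a factor of roughly $\sqrt{K\log K}$ worse than the global-ERM rate you claim. The level-disjoint partition is not there to restore the $\sqrt{K}$ rate; it is there to make the induction valid (the CSMC costs at level $d$ depend on the classifiers already trained at lower levels, so fresh data is needed at each level for the concentration argument). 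The paper explicitly conjectures that this extra factor of $K$ is the unavoidable price of using \treetrain rather than exact ERM.

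Your final answer nonetheless agrees with the theorem because of a coincidence of the tuning: at the balance point $K=1/(\eps h)$, your learning term $\sqrt{KT\ln\abr{\Fcal}}/(h\eps)$ and the paper's $K\sqrt{T\ln\abr{\Fcal}/(\eps h)}$ both equal $K^{3/2}\sqrt{T\ln\abr{\Fcal}}$. You arrived at the former by pairing the optimistic ERM claim with a loose Hoeffding-style deviation that keeps the full range $1/(h\eps)$ outside the square root; had you instead used the Bernstein/Freedman bound of Lemma~\ref{lem:unif-conv-log}, your argument would ``prove'' a rate of order $(T^3\ln\abr{\Fcal}/h^2)^{1/4}$, which the discussion after Theorem~\ref{thm:greedy-tree} indicates is what exact ERM --- but not \treetrain --- would deliver. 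To repair the proof, replace your approximate-ERM step by the paper's inductive argument, accept the $K^2\log K$ factor inside the square root, and re-verify the balancing, which still yields $\order{(T^4\ln\abr{\Fcal}/h^3)^{\nicefrac{1}{5}}}$.
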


We remark that we actually obtain a stronger result: with appropriate
tuning of $\epsilon$, the $h$-smoothed regret of \cats against
$\Fcal_K$ is $\order{ \rbr{\nicefrac{K^2T^2\ln
      \frac{\abr{\Fcal}}{\delta}}{h}}^{1/3}}$, which is similar to the
$\order{T^{\nicefrac{2}{3}}|\Acal|^{\nicefrac{1}{3}}}$ regret for
$\epsilon$-greedy in the discrete actions setting,
with $\nicefrac1h$ serving as the ``effective number of actions.''
We also note that if we used exact ERM over $\Fcal_K$ instead of the computationally efficient \treetrain procedure, the dependence on $K$ would improve from $K^{\nicefrac{2}{3}}$ to $K^{\nicefrac{1}{3}}$.
This $K^{\nicefrac{2}{3}}$ dependence is due to compounding errors accumulating in each node, and we conjecture that it is the price we have to pay for using the computationally-efficient \treetrain for approximate ERM.

The aforementioned $h$-smoothed regret bound against $\Fcal_K$ reflects a natural bias-variance tradeoff in the choice of $h$ and $K$: for a smaller value of $h$, the $h$-smoothed loss more closely approximates the true loss, while achieving a low $h$-smoothed regret bound is harder. A similar reasoning applies to $K$: For larger $K$,  $\Fcal_K$ more closely approximates $\Fcal_\infty$, while the regret of \cats against $\Fcal_K$ can be higher.

We now present learning guarantees of \catsop under the same realizability assumption.




\begin{theorem}[Informal]
Suppose $(\Fcal, \Dcal)$ is $(h, K)$-realizable for all $(h,K) \in \Jcal$. In addition, the logged data $S = \cbr{(x_t, a_t, P_t(a_t \mid x_t), \ell_t(a_t))}_{t=1}^T$ has a sufficient amount of exploration: $P_t(a \mid x_t) \geq p_{\min}$. Then, with high probability, Algorithm~\ref{alg:off-policy-opt} run with inputs $S, p_{\min}, \Jcal, \Fcal$ outputs a policy $\hat{\pi}$ such that: $\lambda_0(\hat{\pi})
	\leq \min_{(h,K) \in \Jcal, \pi \in \Fcal_K}
	             \rbr{ \lambda_h(\pi) +  \order{K \sqrt{\nicefrac{\ln\frac{\abr{\Fcal} \abr{\Jcal}}\delta}{(p_{\min} h T)}} } }$.
	             \label{thm:off-policy-opt}
\end{theorem}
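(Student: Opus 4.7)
The plan is to establish the off-policy guarantee by combining three ingredients: (i) a progressive-validation concentration inequality that links, for each $(h,K)\in\Jcal$, the IPS-based empirical average $\hat L^{h,K}\defeq \tfrac{1}{T}\sum_{t=1}^T \tilde c_t^h(\tree^{h,K}_t(x_t))$ to its population counterpart $\tfrac{1}{T}\sum_{t=1}^T \lambda_h(\tree^{h,K}_t)$; (ii) a structural risk minimization argument that pays for uniform convergence across $\Jcal$ via the data-dependent penalty $\Pen(h,K)$; and (iii) a realizability-based offline guarantee for \treetrain that bounds the smoothed loss of each trained tree against the best policy in $\Fcal_K$. I would start by noting that $\hat\pi$, drawn uniformly over $\cbr{\tree^{\hat h,\hat K}_t}_{t=1}^T$ with the $\hat h$-smoothing applied at inference, satisfies $\lambda_0(\hat\pi) = \tfrac{1}{T}\sum_{t=1}^T \lambda_{\hat h}(\tree^{\hat h,\hat K}_t)$, so the goal reduces to controlling this cumulative smoothed loss.

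For ingredient (i), observe that $\tree^{h,K}_t$ is measurable with respect to the log up to time $t-1$, so $X_t \defeq \tilde c_t^h(\tree^{h,K}_t(x_t)) - \lambda_h(\tree^{h,K}_t)$ forms a martingale-difference sequence bounded in magnitude by $\tfrac{1}{2hp_{\min}}$ (from the smoothing-kernel density bound and the propensity lower bound), with conditional second moment at most $\tfrac{1}{2hp_{\min}}\cdot \lambda_h(\tree^{h,K}_t)$. Applying Freedman's inequality together with a union bound over $\Jcal$ gives, with probability at least $1-\delta/2$, the two-sided bound $|\hat L^{h,K} - \tfrac{1}{T}\sum_t \lambda_h(\tree^{h,K}_t)| \leq \Pen(h,K)$ simultaneously for every $(h,K)\in\Jcal$; this is exactly the form of the data-dependent penalty chosen in line~\ref{line:srm-off}. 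Chaining this concentration inequality with the SRM inequality $\hat L^{\hat h,\hat K} + \Pen(\hat h,\hat K) \leq \hat L^{h,K} + \Pen(h,K)$, which holds by construction for every $(h,K)\in\Jcal$, yields $\lambda_0(\hat\pi) \leq \tfrac{1}{T}\sum_t \lambda_h(\tree^{h,K}_t) + 2\,\Pen(h,K)$ for any comparator $(h,K)\in\Jcal$.

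For ingredient (iii), I would invoke Lemma~\ref{lem:off-policy-fixed}: the disjoint per-level data split $B_0,\ldots,B_{D-1}$ in \treetrain ensures that classifiers at each depth $d$ are trained on fresh samples independent of the already-fixed subtrees below, so under $(h,K)$-realizability each node's CSMC problem has excess risk of order $\sqrt{\ln(|\Fcal|/\delta')/(|B_d|\, p_{\min} h)}$, after accounting for the $1/(2hp_{\min})$ range of the IPS costs. Propagating these errors up through the $K-1$ internal nodes, averaging the bound on the cumulative smoothed loss using $\sum_{t=1}^T 1/\sqrt{t}\le 2\sqrt{T}$, and unioning over $(h,K)\in\Jcal$ delivers $\tfrac{1}{T}\sum_t \lambda_h(\tree^{h,K}_t) \leq \inf_{\pi\in\Fcal_K}\lambda_h(\pi) + O\bigl(K\sqrt{\ln(|\Fcal||\Jcal|/\delta)/(T p_{\min} h)}\bigr)$. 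Combining this with the chained inequality from ingredient (ii), observing that $\Pen(h,K)$ is itself dominated by the same $O(\cdot)$ term, and taking the infimum over $(h,K,\pi)$ gives the theorem. The main obstacle is the aggregation in (iii): showing that the compounded node-wise errors scale only linearly in $K$ under realizability and the per-level split, rather than worse as in the general unrealizable filter-tree analysis. This is exactly the role Lemma~\ref{lem:off-policy-fixed} plays, and it is also the reason the penalty absorbs a $K$-factor rather than a $\log K$ factor; handling the interplay between the per-level split $B_d$ and the realizability at each node is the principal technical subtlety.
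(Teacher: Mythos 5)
Your proposal follows essentially the same route as the paper's proof: the progressive-validation concentration via Freedman's inequality (the paper's Lemma~\ref{lem:unif-conv-log}, item~\ref{item:pv}, defining event $E_2$), the SRM chaining through the data-dependent penalty $\Pen(h,K)$ and the optimality of $(\hat h,\hat K)$, and the realizability-based per-round guarantee from Lemma~\ref{lem:off-policy-fixed} averaged with $\sum_{t}1/\sqrt{t}\leq 2\sqrt{T}$ (event $E_1$) are exactly the three ingredients the paper combines. The decomposition, the key lemmas, and the final assembly all match; only inessential constants (e.g., your $1/(2hp_{\min})$ range versus the paper's $1/(hp_{\min})$, which accounts for boundary effects of the smoothing kernel) differ.
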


The above theorem shows the adaptivity of \catsop: so long as the logged data is generated by an sufficiently explorative logging policy,
its learned policy is competitive with any policy in the set $\cup_{(h,K) \in \Jcal}\cbr{\pi_h: \pi \in \Fcal_K}$, under realizability assumptions.



\section{Experiments}
\label{sec:expt}
Following the contextual bandit learning evaluation protocol of~\cite{bietti2018contextual}, we evaluate our approach on six large-scale regression datasets, where regression predictions are treated as continuous actions in $\Acal = [0,1]$.
To simulate contextual bandit learning, we first perform scaling and offsetting to ensure $y_t$'s are also in $[0,1]$.
Every regression example $(x_t, y_t)$ is converted to $(x_t, \ell_t)$, where $\ell_t(a) = \abs{a - y_t}$ is the absolute loss induced by $y_t$. When action $a_t$ is taken, the algorithm receives bandit feedback $\ell_t(a_t)$, as opposed to the usual label $y_t$.

Of the six datasets, five are selected from OpenML with the criterion of having millions of samples with unique regression values (See Appendix~\ref{sec:experimental_details} for more details).
We also include a synthetic dataset \texttt{ds}, created by the linear regression model with additive Gaussian noise. 

\xhdr{Online contextual bandit learning using \cats.}
We compare \cats with two baselines that perform $\epsilon$-greedy contextual bandit learning~\cite{Langford-nips07} over the discretized action space $\Acal_K$. The first baseline, \dl, reduces policy training to cost-sensitive one-versus-all multiclass classification~\cite{beygelzimer2005weighted} which takes $\order{K}$ time per example. The second baseline, \dt, uses the filter tree algorithm~\cite{beygelzimer2009error} as a cost-sensitive multiclass learner for policy training, which takes $\order{\log K}$ time per example, but does not perform information sharing among actions through smoothing. 
We run \cats with $(h, K)$ combinations in the following set:
\begin{align}
\Jcal = \cbr{ (h,K): h \in \cbr{2^{-13}, \ldots, 2^{-1}},  K \in \cbr{2^2, \cdots, 2^{13}},
 h K \in \cbr{2^0, \ldots, 2^{11}} }.  \label{eqn:j-expt}
\end{align}
We also run \dl and \dt with values of $K$ in $\cbr{2^1, 2^2, \cdots, 2^{13}}$. All algorithms use $\epsilon = 0.05$; see Appendix~\ref{sec:experimental_details} for additional experimental details.

\begin{figure}[t!]
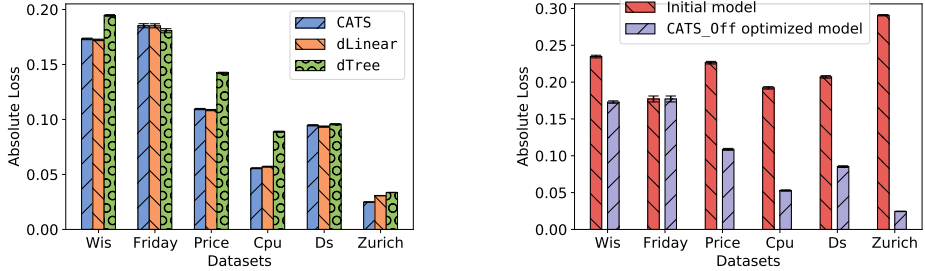

\centering
\includegraphics[width=0.4\textwidth]{Figures/abs_online.png}
\hspace{1cm}
 \centering
\includegraphics[width=0.4\textwidth]{Figures/abs_offline_srm.png}
\caption{({\bf left}) Best progressive validation losses obtained by parameter search for different online learning algorithms on six regression datasets.
({\bf right}) Test-set absolute losses for initial online-trained model using \cats with an initial set of discretization and smoothing parameter $(K_{\texttt{init}}, h_{\texttt{init}}) = (4, 1/4)$, and off-policy optimized models output by \catsop. All confidence intervals are calculated with a single run using the Clopper-Pearson interval with 95\% confidence level (note that they are very small for most of the datasets).}
\label{fig:offline_abs}
\end{figure}





In the left panel of Figure~\ref{fig:offline_abs} we compare \cats with \dl and \dt.
Using progressive validation~\citep{PV99} for online evaluation, our algorithm (with optimally-tuned discretization and bandwidth) achieves performance similar to \dl, and is better than \dt for most of the datasets.


As discussed in Section~\ref{sec:algorithm}, the time cost of our implementation of \cats is $\order{\log(K)}$ per example. Figure~\ref{fig:time} demonstrates that the training time of \cats is constant w.r.t. bandwidth $h$,  and grows logarithmically w.r.t. the discretization $K$. This shows that \cats has the same computational complexity as \dt. In contrast, \dl has $\order{K}$ time complexity per example. The time improvement of \cats compared with \dl becomes more significant when $K$ becomes larger. In summary, \cats outperforms \dt statistically, and has much better scalability than \dl.
\begin{figure}[t!]
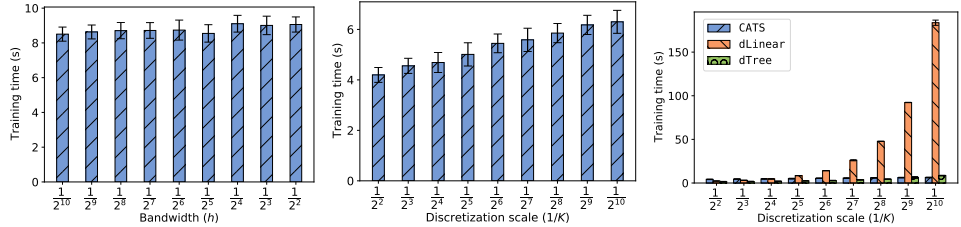

  \begin{center}
\includegraphics[width=0.3\textwidth]{Figures/ds_5_timeh.png}
\includegraphics[width=0.3\textwidth]{Figures/ds_5_timen.png}
\includegraphics[width=0.3\textwidth]{Figures/ds_5_timenn.png}
\end{center}
\caption{Online learning time costs of \cats (blue bar) w.r.t: ({\bf left}) bandwidth ($h$) with a fixed discretization scale $K = 2^{13}$; ({\bf middle}) discretization scale ($1/K$) with a fixed $h = 1/4$; ({\bf right}) discretization scale ($1/K$) with a fixed $h = 1/4$, compared against \dl (orange bar) and \dt (green bar), on the \texttt{ds} dataset.Similar figures for the rest of the datasets can be found in the Appendix~\ref{sec:add-experiments}.}
\label{fig:time}
\end{figure}
\xhdr{Off-policy optimization using \catsop.}
A major advantage of the \cats approach over na\"{i}ve discretization methods is that the interaction log collected by our algorithm with one setting of $(h,K)$ can be used to optimize policies with alternate settings of $(h,K)$. To validate this, we first create an 80-20\% split or training and test sets. With the training set, we first collect interaction log tuples of $(x_t, a_t, P_t(a_t \mid x_t), \ell_t(a_t))$ using \cats with initial discretization and smoothing parameter $(K_{\texttt{init}}, h_{\texttt{init}}) = (4,\frac{1}{4})$, and greedy parameter $\epsilon = 0.05$. We then run \catsop over the logged data using $\Jcal$, defined in~\eqref{eqn:j-expt}, as the set of parameters.
Since standard generalization error bounds are loose in practice, we replaced $64\ln\frac {2T \abr{\Jcal}} \delta$ in the penalty term in line~\ref{line:srm-off} with constant $1$. Note that this constant term as well as the learning rate and the greedy parameter are fixed for all of the datasets in our experiments.

The right panel in Figure~\ref{fig:offline_abs} shows the test losses of the models obtained by \cats after making a pass over the training data, and the test losses of the optimized models obtained through \catsop by optimizing counterfactual estimates offline. It can be seen that offline policy training produces tree policies that have dramatically smaller test losses than the original policies.





\section{Conclusion}
Contextual bandit learning with continuous actions with unknown
structure is quite tractable via the \cats algorithm, as we have shown
theoretically and empirically.  This broadly enables deployment of
contextual bandit approaches across a wide range of new applications.

\section*{Broader Impact}
Our study of
efficient contextual bandits with continuous actions can be applied to
a wide range of applications, such as precision medicine, personalized
recommendations, data center optimization, operating systems,
networking, etc.  Many of these applications have potential for
significant positive impact to society, but these methods can also
cause unintend harms, for example by creating filter bubble effects
when deployed in recommendation engines.  More generally our research
belongs to the general paradigm of interactive machine learning, which
must always be used with care due to the presence of feedback
loops. We are certainly mindful of these issues, and encourage
practitioners to consider these consequences when deploying
interactive learning systems.



\begin{ack}
We thank the anonymous reviewers for their helpful feedback.
Much of this work was done while Maryam Majzoubi and Chicheng Zhang were visiting Microsoft Research NYC.
This work was supported by Microsoft.  
\end{ack}

\bibliographystyle{plain}
\bibliography{refs}

\newpage
\onecolumn

\appendix
\section{Additional Notation}
\label{sec:add-not}
Throughout the appendices, we will use all notation from Section~\ref{sec:prelims}, without further recap, as well as some additional notation presented below. For a policy $\pi$, define $V(\pi) = \lambda_0(\pi) = \EE_{(x, \ell) \sim \Dcal} \EE_{a \sim \pi(\cdot \mid x) } \sbr{\ell(a)}$ to be its expected loss. We will use the notations $V(\pi)$ and $\lambda_0(\pi)$ interchangably throughout the appendix.

For a subset of indices $B \subset [n]$ and a policy $\pi$, denote by $\hat{V}_{B}(\pi_h) = \frac{1}{\abr{B}} \sum_{s \in B} \frac{\pi_{h}(a_s \mid x_s)}{P_s(a_s \mid x_s)} \ell_s(a_s)$.

For a general policy class $\Pi \subset (\Xcal \to \Acal)$, we define the $h$-smoothed regret of an algorithm against $\Pi$ for a time horizon of $T$ as:
\begin{align*}
    \Reg(T,\Pi,h) \defeq \sum_{t=1}^T \EE \sbr{\ell_t(a_t)} - T\inf_{\pi \in \Pi} \lambda_h(\pi) = \sum_{t=1}^T \EE\sbr{\ell_t(a_t)} - T \inf_{\pi \in \Pi} V(\pi_h).
\end{align*}

We will be using the following property of logged data, which has the essential independence structure to guarantee the quality of the model trained with \treetrain on its induced CSMC examples using IPS.

\begin{definition}[Well-formed logged data]
The logged data $\cbr{(x_s, a_s, P_s(a_s \mid x_s), \ell_s(a_s))}_{s=1}^n$ is said to be $p_{\min}$-{\em well-formed}, if it is generated by the following process: $(x_s, \ell_s)_{s=1}^n$ are drawn iid from $\Dcal$, action distribution $P_s(\cdot \mid \cdot)$ depends only on $(x_{s'}, a_{s'}, \ell_{s'})_{s'=1}^{s-1}$, $P_s(a \mid x) \geq p_{\min}$ for all $a \in \Acal$, $x \in \Xcal$, and $s \in [n]$.
\label{def:well-formed}
\end{definition}

A formal description of the execution of tree policies, i.e. $\tree.\getaction(x)$, is given in Algorithm~\ref{alg:filter-tree-pred}.
\begin{algorithm}[t]
\caption{Execution of tree policy $\tree$: $\tree.\getaction$}
\label{alg:filter-tree-pred}
\begin{algorithmic}
\REQUIRE Tree policy $\tree$ using classifiers $\cbr{f^\vt} \subset (\Fcal \to \cbr{\leftt, \rightt})$, context $x$.
\STATE Let $\vt \gets \tree.\roott$. 
\WHILE{$\vt$ is an internal node of $\tree$}
	\STATE $\vt \gets \vt.f^\vt(x)$
\ENDWHILE
\RETURN $a \gets \labelt(\vt)$, the action label of $\vt$.
\end{algorithmic}
\end{algorithm}


\section{Proofs of Theorems~\ref{thm:greedy-tree} and~\ref{thm:off-policy-opt}}
\label{sec:proofs}

In this section, we first prove a key lemma, namely Lemma~\ref{lem:off-policy-fixed}, and use it to show Theorems~\ref{thm:greedy-tree} and~\ref{thm:off-policy-opt} in the main text respectively.

\subsection{Off-policy optimization guarantees on trees with well-formed logged data}

Recall that $\Fcal$ is a class of binary classifiers, and $\Dcal$ is a distribution over (context, loss) pairs.
In words, this lemma states that, under realizability and the well-formedness property of the logged data, training using \treetrain based on its induced IPS CSMC examples yields a tree policy that has a $h$-smoothed loss competitive with any tree policy in tree class $\Fcal_K$.

\begin{lemma}[Off-policy optimization with tree classes under realizability]
Suppose:
\begin{enumerate}
\item $(\Fcal, \Dcal)$ is $(h, K)$-realizable for $h > 0$, $K = 2^D$ for some $D$ in $\NN$.
\item The logged data $\cbr{(x_s, a_s, P_s(a_s \mid x_s), \ell_s(a_s))}_{s=1}^n$  is $p_{\min}$-well-formed.
\end{enumerate}
In addition, Algorithm~\ref{alg:tree-train} is run with dataset $S = \cbr{(x_s, \tilde{c}_s)}_{s=1}^n$ (a set of CSMC examples induced by the logged data using IPS; see Section~\ref{sec:prelims} for the definition of $\tilde{c}_s$), bandwidth $h$, discretization level $K$, base class $\Fcal$.
Then, with probability $1-\delta$, the policy $\tree$ returned is such that:
\begin{eqnarray*}
  V(\tree_h)
  \leq
  \min_{\tree' \in \Fcal_K} V(\tree_h')
    + 20 \sqrt{\frac{K^2 \log K}{n p_{\min} h} \cdot \rbr{\ln\frac{2 n K \abr{\Fcal}}\delta}}
\end{eqnarray*}
\label{lem:off-policy-fixed}
\end{lemma}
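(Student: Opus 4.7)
The plan is a bottom-up induction on levels of the tree, leveraging the partition of the dataset into $D = \log K$ disjoint blocks $B_0,\ldots,B_{D-1}$ performed by \treetrain, with $B_d$ used to train classifiers at level $d$. This partition is crucial: classifiers at levels strictly below $d$ (and hence the subtrees rooted there) depend only on $\bigcup_{d' > d} B_{d'}$ and are therefore independent of $B_d$. Conditional on those subtrees, the examples indexed by $B_d$ remain a $p_{\min}$-well-formed logged dataset (Definition~\ref{def:well-formed}), so standard concentration inequalities for IPS estimators apply cleanly at each node.

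The core identity I would use is a pointwise regret decomposition. Let $\tree^\star$ be the realizable tree from Definition~\ref{def:well-specified-lh}, and for any subtree $\tree'$ write $V_x(\tree') \defeq \EE[\ell_h(\tree'(x)) \mid x]$. For an internal node $\vt$ with children $\mathtt{l}, \mathtt{r}$ and a binary classifier $f$ on $\Xcal$, define $\tree^{\vt, f}(x) \defeq \tree^{\vt.\mathtt{l}}(x)$ if $f(x) = \leftt$ and $\tree^{\vt.\mathtt{r}}(x)$ otherwise. Then
\[V(\tree^\vt_h) - V((\tree^\star)^\vt_h) = \bigl(\bar L^\vt(f^\vt) - \bar L^\vt(f^{\vt,\star})\bigr) + \EE_x\!\sbr{V_x(\tree^{\vt, f^{\vt,\star}}) - V_x((\tree^\star)^{\vt, f^{\vt,\star}})},\]
where $\bar L^\vt(f) \defeq \EE_x V_x(\tree^{\vt, f})$ is the population counterpart of the CSMC loss that \treetrain approximately minimizes at $\vt$. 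The second term is at most $[V(\tree^{\vt.\mathtt{l}}_h) - V((\tree^\star)^{\vt.\mathtt{l}}_h)] + [V(\tree^{\vt.\mathtt{r}}_h) - V((\tree^\star)^{\vt.\mathtt{r}}_h)]$, using that each $(\tree^\star)^{\vt'}$ is pointwise Bayes-optimal on $\range(\tree^{\vt'})$ by realizability, so each summand is nonnegative. Unrolling from the root yields the clean sum $V(\tree_h) - V((\tree^\star)_h) \leq \sum_{\vt \text{ internal}} (\bar L^\vt(f^\vt) - \bar L^\vt(f^{\vt,\star}))$ over the $K-1$ internal nodes, and because $\tree^\star$ is Bayes-optimal in $\Acal_K$ this also upper bounds the excess against $\min_{\tree' \in \Fcal_K} V(\tree'_h)$.

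Each summand is then controlled by a single concentration step. Conditional on the subtrees below $\vt$, the IPS cost $\tilde c^\vt(f(x_s))$ is bounded by $1/(p_{\min}h)$ (using $\smooth_h(\cdot\mid a) \leq 1/h$) and has conditional second moment controlled by $\bar L^\vt(f)/(p_{\min}h)$. A Freedman--Bernstein martingale inequality applied uniformly over $f \in \Fcal$ --- needed because the propensities $P_s$ are history-adapted in the online use of \cats --- combined with a union bound over $\Fcal$ and the $K-1$ nodes, yields a per-node excess of order $\sqrt{\ln(nK|\Fcal|/\delta)/(n' p_{\min} h)}$ with $n' = n/D$, which sums to the stated $O(\sqrt{K^2 \log K \cdot \ln(nK|\Fcal|/\delta)/(n p_{\min} h)})$ bound. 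The main obstacle I anticipate is compositional conditioning: each inductive step must invoke the Freedman bound in a conditional-expectation form, and one has to verify that the towers of conditional expectations line up correctly so that the per-node regrets compose into the unrolled sum above without hidden cross-level dependencies. The level-wise partition of the data is precisely what buys this independence, but the bookkeeping --- threading the conditional and the unconditional statements through $D$ levels while keeping a single high-probability event --- is the step that requires the most care.
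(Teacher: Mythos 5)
Your proposal is correct and follows essentially the same route as the paper's proof: a bottom-up telescoping over internal nodes in which the per-node ERM excess $\bar L^\vt(f^\vt) - \bar L^\vt(f^{\vt,\star})$ is controlled by a Freedman-type bound uniform over $\Fcal$ on the level-$d$ data block (conditional on the already-trained lower subtrees), realizability supplies both the comparator $f^{\vt,\star}$ and the pointwise nonnegativity needed to pass from the indicator-weighted subtree regrets to the unconditional ones, and a union bound over the $K-1$ nodes plus the observation that $\tree^\star$ is Bayes-optimal over $\Acal_K$ yields the stated bound. The paper packages the same argument as an induction on the quantity $\EE[\ell_h(\tree^\vt(x))] - \EE[\min_{a\in\range(\tree^\vt)}\EE[\ell_h(a)\mid x]]$ scaled by the subtree size $\abs{\tree^\vt}$, which is equivalent to your unrolled sum.
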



\begin{proof}[Proof of Lemma~\ref{lem:off-policy-fixed}]
We will show the following claim: for every node $\vt$ in $\tree$, there exists an event $E_\vt$ that happens with probability at least $1- \nicefrac{\delta \abs{\tree^\vt}}{2K}$, in which
\begin{eqnarray}
\EE \sbr{ \EE[\ell_h (\tree^\vt(x)) \mid x] - \min_{a \in \range(\tree^\vt)} \EE[\ell_h (a) \mid x]}
&=&
\EE \sbr{ \ell_h (\tree^\vt(x)) } - \EE \sbr{\min_{a \in \range(\tree^\vt)} \EE[\ell_h(a) \mid x]} \nonumber \\
&\leq& \abs{\tree^\vt} \rbr{ 8 \sqrt{\frac{\ln\frac{2n'  K |\Fcal|}{\delta}}{n' p_{\min} h}} + 4 \frac{\ln\frac{2n'  K |\Fcal|}{\delta}}{n' p_{\min} h}},
\label{eqn:induct-tree}
\end{eqnarray}
where $\abs{\tree^\vt}$ is the total number of nodes in subtree $\tree^\vt$ (including internal nodes and leaves), and $n' = \frac{n}{\log K}$ is the number of examples for training at each level of $\tree$. As $\tree$ is a complete binary tree with $K - 1$ internal nodes and $K$ leaves,$\abs{\tree^\vt} = 2K - 1$.
To see why it completes the proof, we set $\vt$ to be the root of $\tree$. In this case, we get that with probability $1 - \nicefrac{\delta \abs{\tree}}{2K} \geq 1 - \delta$,
\begin{eqnarray*}
\EE \sbr{ \ell_h(\tree(x)) } - \EE \sbr{\min_{a \in \Acal_K} \EE[\ell_h(a) \mid x]}
\leq (2K-1) \cdot \rbr{ 8 \sqrt{\frac{\ln\frac{2n' K |\Fcal|}{\delta}}{n' p_{\min} h}} + 4 \frac{\ln\frac{2n' K |\Fcal|}{\delta}}{n' p_{\min} h}}.
\end{eqnarray*}
Observing that as $\range(\tree') = \Acal_K$ for all $\tree'$ in $\Fcal_K$, we have that $\EE \sbr{\min_{a \in \Acal_K} \EE[\ell_h(a) \mid x]} \leq  \min_{\tree' \in \Fcal_K} \EE \sbr{ \EE[\ell_h(\tree'(x)) \mid x] } = \min_{\tree' \in \Fcal_K} V(\tree_h)$. In conjunction with the fact that $\frac{n}{2\log K} \leq n' \leq n$, we get that
\begin{eqnarray*}
\EE \sbr{ \ell_h(\tree(x)) } - \min_{\tree' \in \Fcal_K} \EE \sbr{\ell_h(\tree'(x))}
\leq \rbr{ 16 \sqrt{\frac{ K^2 \log K \ln\frac{2n K |\Fcal|}{\delta}}{n p_{\min} h}} + \frac{8 K \log K \ln\frac{2n K |\Fcal|}{\delta}}{n p_{\min} h}}.
\end{eqnarray*}
The lemma follows, because if $\frac{8 K \log K \ln\frac{2n K |\Fcal|}{\delta}}{n p_{\min} h} \geq \frac12$, the lemma statement is trivially true, as the right hand is at least $1$, and the left hand side is at most $1$; otherwise, $\frac{8 K \log K \ln\frac{2n K |\Fcal|}{\delta}}{n p_{\min} h} \leq 4 \sqrt{\frac{K \log K \ln\frac{2n K |\Fcal|}{\delta}}{n p_{\min} h}}$, in which case the right hand side is at most $20 \sqrt{\frac{K^2 \log K \ln\frac{2n K |\Fcal|}{\delta}}{n p_{\min} h}}$.

Next we turn to show the above claim by induction.

\paragraph{Base case.} If $\vt$ is of depth $D-1$, i.e. it is the parent of a pair of leaves $\lt \defeq \vt.\leftt \in \Acal_K$ and $\rt \defeq \vt.\rightt \in \Acal_K$, then
$c^\vt(\leftt) = \tilde{c}(\labelt(\lt))$, $c^\vt(\rightt) = \tilde{c}(\labelt(\rt))$.
In addition, $\range(\tree^\vt) = \cbr{ \labelt(\lt), \labelt(\rt) }$.
 Given a classifier $f: \Xcal \to \cbr{\leftt, \rightt}$ in $\Fcal$, we define its induced tree policy at node $\vt$, $\pi_f: \Xcal \to \Acal_K$, as: $\pi_f(x) = \labelt(\vt.f(x))$.


%
Observe that the CSMC examples $\cbr{(x_s, \tilde{c}_s^h)}_{s \in B_{D-1}}$ (where $B_{D-1} = [n']$) can be viewed as induced by a set of $p_{\min}$-well-formed logged data $\cbr{(x_s, a_s, P_s(a_s \mid x_s), \ell_s(a_s))}_{s \in B_{D-1}}$ using IPS.
From Lemma~\ref{lem:unif-conv-log} in Appendix~\ref{sec:concentration}, we have that there exists an event $E_\vt$ such that $\PP(E_\vt) \geq 1 - \nicefrac{\delta}{K}$, on which for all $f$ in $\Fcal$,
\begin{equation}
 \abs{ \hat{V}_{B_{H-1}}(\pi_{f,h}) - V(\pi_{f,h})}
 \leq
 \rbr{
 	  4 \sqrt{\frac{\ln|\Fcal| + \ln\frac{2n' K}{\delta}}{n' p_{\min} h}} +
      2 \frac{\ln|\Fcal| + \ln\frac{2n' K}{\delta}}{n' p_{\min} h}
     }.
\label{eqn:uc-v-leaf}
\end{equation}
We henceforth condition on $E_\vt$ happening.

Observe that $\hat{V}_{B_{D-1}}(\pi_{f,h}) = \EE_{S^\vt} \sbr{c^\vt(f(x))}$;
As $f^\vt = \argmin_{f \in \Fcal} \EE_{S^\vt} \sbr{c^\vt(f(x))}$, we have that:
$\hat{V}_{B_{D-1}}(\pi_{f^\vt,h}) \leq \hat{V}_{B_{D-1}}(\pi_{f^{\vt,\star},h})$ for $f^{\vt,\star}$
defined in Definition~\ref{def:well-specified-lh}.
This fact, in conjunction with Equation~\eqref{eqn:uc-v-leaf}, gives that
\begin{equation*}
 V(\pi_{f^\vt,h}) - V(\pi_{f^{\vt,\star},h})
 \leq
 \rbr{
 	  8 \sqrt{\frac{\ln|\Fcal| + \ln\frac{2n' K}{\delta}}{n' p_{\min} h}} +
      4 \frac{\ln|\Fcal| + \ln\frac{2n' K}{\delta}}{n' p_{\min} h}
     }.
\end{equation*}


Also, by Definition~\ref{def:well-specified-lh},
$
 V(\pi_{f^{\vt,\star},h})
 = \EE \sbr{ \ell_h (\pi_{f^{\vt,\star}}(x)) }
 = \EE \sbr{ \ell_h (\labelt(\vt.f^{\vt,\star}(x))) }
 = \EE \sbr{
              \min_{ a \in \range(\tree^\vt) } \EE[c(a) \mid x]
           }
$. In addition, by the definition of $\pi_{f}$, $\tree^\vt = \pi_{f^\vt}$.
Therefore,
\[
    \EE [\ell_h (\tree^\vt(x))] - \EE \sbr{\min_{a \in \range(\tree^\vt)} \EE[\ell_h(a) \mid x]}
    \leq
    \rbr{
 	  8 \sqrt{\frac{\ln|\Fcal| + \ln\frac{2n' K}{\delta}}{n' p_{\min} h}} +
      4 \frac{\ln|\Fcal| + \ln\frac{2n' K}{\delta}}{n' p_{\min} h}
     },
\]
proving the base case.

\paragraph{Inductive case.} Suppose that the results holds for all nodes $\vt$ at level $\geq d + 1$. For node $\vt$ at depth $d$, suppose $\lt = \vt.\leftt$ and $\rt = \vt.\rightt$ are its two children at level $d+1$. In this notation,
given an IPS CSMC example $(x, \tilde{c})$ in $B_d$,
$c^\vt(\leftt) = \tilde{c}(\tree^\lt(x))$, $c^\vt(\rightt) = \tilde{c}(\tree^\rt(x))$.
Given a classifier $f: \Xcal \to \cbr{\leftt, \rightt}$ in $\Fcal$, and the subtree policies $\tree^\lt$, $\tree^\rt$,
we define its induced tree policy at $\vt$,  $\pi_f: \Xcal \to \Acal_K$ as: $\pi_f(x) = \tree^{\vt.f(x)}(x)$.


First, consider the training of classifier $f^{\vt}$ at node $\vt$. We note that given logged data with indices
$\cup_{d'=d+1}^{H-1} B_{d'} = [ (H - d - 1) n' ]$ used to learn downstream classifiers in internal nodes of $\tree_\lt$ and $\tree_\rt$, the CSMC examples $\cbr{(x_s, \tilde{c}_s^h)}_{s \in B_l}$ can be viewed as induced by  a set of $p_{\min}$-well-formed logged data $\cbr{(x_s, a_s, P_s(a_s \mid x_s), \ell_s(a_s))}_{s \in B_d}$ using IPS (See Definition~\ref{def:well-formed}). Therefore, applying Lemma~\ref{lem:unif-conv-log}, we get that there exists an event $E_\vt^1$ such that $\PP(E_\vt^1) \geq 1 - \nicefrac{\delta}{K}$, on which for all $f$ in $\Fcal$,
\begin{equation}
 \abs{ \hat{V}_{B_d}(\pi_{f,h}) - V(\pi_{f,h})}
 \leq
 \rbr{
 	  4 \sqrt{\frac{\ln|\Fcal| + \ln\frac{2n' K}{\delta}}{n' p_{\min} h}} +
      2 \frac{\ln|\Fcal| + \ln\frac{2n' K}{\delta}}{n' p_{\min} h}
     }.
\label{eqn:uc-v-internal}
\end{equation}
In addition, by inductive hypothesis, we have that there exists two events $E_\lt$ and $E_{\rt}$, happening with probability $1-\nicefrac{\abr{\tree_\lt} \delta}{2K}$ and $1-\nicefrac{\abr{\tree_\rt} \delta}{2K}$ respectively, in which
\begin{align}
	& \EE \sbr{ \EE[\ell_h(\tree^\lt(x))]|x] - \min_{a \in \range(\tree^\lt)} \EE[\ell_h(a)|x] } \nonumber \\
	&\leq  \abr{\tree_\lt} \rbr{ 8 \sqrt{\frac{\ln|\Fcal| + \ln\frac{2n' K}{\delta}}{n' p_{\min} h}} + 4 \frac{\ln|\Fcal| + \ln\frac{2n' K}{\delta}}{n' p_{\min} h}},
	\label{eqn:bias-l}
\end{align}
and
\begin{align}
	& \EE \sbr{ \EE[\ell_h(\tree^\rt(x))]|x] - \min_{a \in \range(\tree^\rt)} \EE[\ell_h(a)|x] } \nonumber \\
	&\leq  \abr{\tree_\rt} \rbr{ 8 \sqrt{\frac{\ln|\Fcal| + \ln\frac{2n' K}{\delta}}{n' p_{\min} h}} + 4 \frac{\ln|\Fcal| + \ln\frac{2n' K}{\delta}}{n' p_{\min} h}},
	\label{eqn:bias-r}
\end{align}
holds respectively. We define $E_\vt = E_\vt^1 \cap E_\lt \cap E_{\rt}$. By union bound, $\PP(E_\vt) \geq 1 - \nicefrac{\abr{\tree_\vt}}{2K}$. We henceforth condition on $E_\vt$ happening.


First, we note that by Equation~\eqref{eqn:uc-v-internal} and the optimality of $\pi_{f^\vt,h}$, $\hat{V}_{B_d}(\pi_{f^\vt,h}) \leq \hat{V}_{B_d}(\pi_{f^{\vt,\star},h})$ for $f^{\vt,\star}$
defined in Definition~\ref{def:well-specified-lh}.
This fact, in conjunction with Equation~\eqref{eqn:uc-v-internal}, gives that
\begin{align}
 \EE \sbr{ \ell_h(\tree^\vt(x)) } - \EE \sbr{ \ell_h( \tree^{\vt.f^{\vt,\star}(x)}(x) ) }
 &=
 V(\pi_{f^\vt,h}) - V(\pi_{f^{\vt,\star},h}) \nonumber \\
 &\leq
 \rbr{
 	  8 \sqrt{\frac{\ln|\Fcal| + \ln\frac{2n' K}{\delta}}{n' p_{\min} h}} +
      4 \frac{\ln|\Fcal| + \ln\frac{2n' K}{\delta}}{n' p_{\min} h}
     }.
\label{eqn:erm-v-internal}
\end{align}



We have the following inequalities:
\begin{eqnarray}
&& \EE \sbr{\ell_h( \tree^{\vt.f^{\vt,\star}(x)}(x) )} \nonumber \\
&=& \EE \sbr{ \EE[\ell_h (\tree^{\lt}(x))|x] \ind(f^{\vt,\star}(x) = \leftt) + \EE[\ell_h (\tree^{\rt}(x))|x] \ind(f^{\vt,\star}(x) = \rightt) } \nonumber \\
&\leq& \EE \sbr{ \min_{a \in \range(\tree^\lt)} \EE[c(a)|x] \ind(f^{\vt,\star}(x) = \leftt) } + \EE \sbr{ \min_{a \in \range(\tree^\rt)} \EE[c(a)|x] \ind(f^{\vt,\star}(x) = \rightt) } \nonumber \\
&& + (|\tree^\lt| + |\tree^\rt|) \rbr{ 8 \sqrt{\frac{\ln|\Fcal| + \ln\frac{n' K}{\delta}}{n' p_{\min} h}} + 4 \frac{\ln|\Fcal| + \ln\frac{n' K}{\delta}}{n' p_{\min} h}} \nonumber \\
&\leq& \EE \sbr{ \min_{a \in \range(\tree^\vt)} \EE[c(a)|x] } + (|\tree^\lt| + |\tree^\rt|) \rbr{ 8 \sqrt{\frac{\ln|\Fcal| + \ln\frac{2n' K}{\delta}}{n' p_{\min} h}} + 4 \frac{\ln|\Fcal| + \ln\frac{2n' K}{\delta}}{n' p_{\min} h}}.
\label{eqn:bias-all}
\end{eqnarray}
where the first inequality is from Equations~\eqref{eqn:bias-l} and~\eqref{eqn:bias-r}, the second inequality is from the $(h,K)$-realizability assumption.

Therefore, combining Equations~\eqref{eqn:erm-v-internal} and~\eqref{eqn:bias-all}, we get
\begin{align*}
& \EE \sbr{ \ell_h(\tree^\vt(x)) } -  \EE \sbr{ \min_{a \in \range(\tree^v)} \EE[\ell_h(a)|x] } \\
& \leq (1 + |\tree^\lt| + |\tree^\rt|) \rbr{ 8 \sqrt{\frac{\ln|\Fcal| +  \ln\frac{2n' K}{\delta}}{n' p_{\min} h}} + 4 \frac{\ln|\Fcal| + \ln\frac{2n' K}{\delta}}{n' p_{\min} h}}  \\
& = |\tree^\vt| \rbr{ 8 \sqrt{\frac{\ln|\Fcal| + \ln\frac{2n' K}{\delta}}{n' p_{\min} h}} + 4 \frac{\ln|\Fcal| + \ln\frac{2n' K}{\delta}}{n' p_{\min} h}}.
\end{align*}
This completes the induction, and proves the claim.
\end{proof}

\subsection{Proof of Theorem~\ref{thm:greedy-tree}}

We first give a formal statement of Theorem~\ref{thm:greedy-tree} in the theorem below.
\begin{theorem}
Suppose Algorithm~\ref{alg:greedy-tree} is run with greedy parameter $\epsilon$,  smoothing parameter $h$, discretization scale $K$, and base hypothesis class $\Fcal$.
In addition, suppose $(\Fcal, \Dcal)$ is $(h, K)$-realizable.
Then with probability $1-\delta$, it has $h$-smoothed regret against $\Fcal_\infty$ bounded as:
\begin{align*}
&\Reg(T, \Fcal_\infty, h) \leq
\order{\left(\epsilon + \frac{1}{Kh}\right) T + K \sqrt{\frac{T}{\epsilon h} \cdot \rbr{\ln\frac{\abr{\Fcal}}\delta}}}.
\end{align*}
Taking $\epsilon = \rbr{\frac{\ln\frac{\abr{\Fcal}}\delta}{T h^3}}^{\nicefrac{1}{5}}$,
$K = \rbr{\frac{T}{h^2 \ln\frac{\abr{\Fcal}}\delta}}^{\nicefrac{1}{5}}$, we have
$
  \Reg(T, \Fcal_\infty, h)
  \leq
  \order{
  	  \rbr{
	  	  \nicefrac{T^4 \ln\frac{\abr{\Fcal}} \delta}{h^3}
		  }^{\nicefrac{1}{5}}
		}.
$
\label{thm:greedy-tree-formal}
\end{theorem}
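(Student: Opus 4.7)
}
The plan is to decompose the regret into three transparent pieces and then appeal to Lemma~\ref{lem:off-policy-fixed} to control the statistical term. First, I would bound the instantaneous expected loss of \cats. Since $P_t(\cdot \mid x_t) = (1-\epsilon)\pi_{t,h}(\cdot \mid x_t) + \epsilon \base$ mixes the smoothed tree policy with the uniform density, one has
\[
\E[\ell_t(a_t) \mid \pi_t] = (1-\epsilon)\lambda_h(\pi_t) + \epsilon \cdot \E_{(x,\ell)\sim\Dcal}\sbr{\textstyle\int_0^1 \ell(a)\dif a} \le \lambda_h(\pi_t) + \epsilon,
\]
because losses are in $[0,1]$. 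Summing gives $\sum_t \E[\ell_t(a_t)] \le \sum_t \E[\lambda_h(\pi_t)] + \epsilon T$.

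Next, I would relate $\inf_{\Fcal_\infty}\lambda_h$ to $\inf_{\Fcal_K}\lambda_h$ using the $1/h$-Lipschitz property of $\ell_h$ combined with the realizability assumption. Realizability implies $\inf_{\pi \in \Fcal_K}\lambda_h(\pi) = \E[\min_{a \in \Acal_K}\E[\ell_h(a)\mid x]]$; Lipschitzness of $\ell_h$ upgrades the $\Acal_K$-minimum to an $\Acal$-minimum at cost $1/(hK)$ per round; and any tree policy in $\Fcal_\infty$ maps into $\Acal$, so
\[
\inf_{\pi \in \Fcal_K}\lambda_h(\pi) \le \inf_{\pi \in \Fcal_\infty}\lambda_h(\pi) + \frac{1}{hK}.
\]
Thus the regret is bounded by $\epsilon T + T/(hK) + \sum_t \E[\lambda_h(\pi_t) - \inf_{\Fcal_K}\lambda_h]$.

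The core step is controlling the per-round excess risk $\lambda_h(\pi_t) - \inf_{\Fcal_K}\lambda_h$. At round $t$, $\pi_t$ was produced by \treetrain on the interaction log of rounds $1,\ldots,t-1$. Because of the $\epsilon$-greedy mixing, this logged data is $p_{\min}$-well-formed in the sense of Definition~\ref{def:well-formed} with $p_{\min} = \epsilon$. Applying Lemma~\ref{lem:off-policy-fixed} with failure parameter $\delta/T$ and taking a union bound over $t \in [T]$, I obtain with probability $1-\delta$,
\[
\lambda_h(\pi_t) - \inf_{\pi \in \Fcal_K}\lambda_h(\pi) \le 20 \sqrt{\frac{K^2 \log K}{(t-1)\epsilon h} \cdot \ln\frac{2tTK\abr{\Fcal}}{\delta}},\quad t = 2,\ldots,T,
\]
and $\lambda_h(\pi_1) - \inf_{\Fcal_K}\lambda_h \le 1$ trivially. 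Summing via $\sum_{t=1}^T (t-1)^{-1/2} \le 2\sqrt{T}$ yields the bound $\widetilde{O}\!\rbr{K\sqrt{T/(\epsilon h)\cdot \ln(\abr{\Fcal}/\delta)}}$, which combined with the two previous terms gives the first displayed inequality. Finally, plugging in $\epsilon = (\ln(\abr{\Fcal}/\delta)/(Th^3))^{1/5}$ and $K = (T/(h^2 \ln(\abr{\Fcal}/\delta)))^{1/5}$ balances all three contributions at $\widetilde{O}((T^4\ln(\abr{\Fcal}/\delta)/h^3)^{1/5})$.

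The main obstacle is the invocation of Lemma~\ref{lem:off-policy-fixed} at every round: I must check that the adaptive action distribution $P_t$ indeed satisfies the well-formedness condition with $p_{\min} = \epsilon$ (which follows immediately from $P_t \ge \epsilon\base$), and that the union bound over $T$ rounds does not inflate the statistical rate beyond the logarithmic factors absorbed in $\widetilde{O}$. The rest is a mechanical balancing of the three error terms.
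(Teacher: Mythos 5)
Your proposal is correct and follows essentially the same route as the paper: bound the instantaneous loss by $\lambda_h(\pi_t)+\epsilon$, invoke Lemma~\ref{lem:off-policy-fixed} with $p_{\min}=\epsilon$ and a union bound over rounds, sum via $\sum_t t^{-1/2}\le 2\sqrt{T}$, and pay $T/(hK)$ to pass from $\Fcal_K$ to $\Fcal_\infty$. The only (immaterial) difference is that you justify the $1/(hK)$ discretization cost via realizability plus Lipschitzness of $\ell_h$, whereas the paper truncates an arbitrary tree in $\Fcal_\infty$ to its top $\log K$ levels and applies Lipschitzness directly; both yield the same bound.
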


\begin{proof}[Proof of Theorem~\ref{thm:greedy-tree-formal}]
We will show that with probability $1-\delta$,
\begin{align*}
&\Reg(T, \Fcal_K, h)
\leq
\order{\epsilon T + K \sqrt{\frac{T}{\epsilon h} \cdot \rbr{\ln\frac{2 T K\abr{\Fcal}}\delta}}},
\end{align*}
to see why this completes the proof, we observe that for any policy $\tree$ in $\Fcal_\infty$, there is a policy $\tree_K$ in $\Fcal_K$, such that $\abs{\tree_K(x) - \tree(x)} \leq \frac{1}{K}$: we can take $\tree_K$ to be a truncation of
$\tree$ that only keeps its top $\log K$ levels. In addition, as $\ell_h$ is $\nicefrac 1 h$-Lipschitz, we have
\[
\EE \sbr{ \ell_h(\tree_K(x)) } \leq \EE \sbr{ \ell_h(\tree(x)) } + \frac{1}{K h}.
\]
This implies that $\min_{\tree \in \Fcal_K} \EE \sbr{ \ell_h(\tree(x)) } \leq \min_{\tree' \in \Fcal_\infty} \EE \sbr{ \ell_h(\tree'(x)) } + \frac{1}{K h}$. As a result,
\[
  \Reg(T, \Fcal_\infty, h)
  \leq \Reg(T, \Fcal_K, h) + \frac{T}{K h}
  = \order{ \rbr{\epsilon + \frac{1}{Kh}} T + K \sqrt{\frac{T}{\epsilon h} \cdot \rbr{\ln\frac{2 T K \abr{\Fcal}}\delta}}}.
\]

We now come back to the proof of the above claim. First observe that the $h$-smoothed regret can be rewritten as:
\begin{align}
\Reg(T, \Fcal_K, h)
&= \sum_{t=1}^T \rbr{\EE \sbr{\ell_t(a_t)} - \min_{\tree' \in \Fcal_K} V(\tree'_h)}.
\end{align}


Let $\pi_{t+1}$ denote the tree $\Tcal$ at the beginning of time step $t+1$, which is learned from CSMC examples $\cbr{(x_s, \tilde{c}_s)}_{s=1}^t$ by \treetrain.
Define event
\[
E = \cbr{ \text{for all time steps $t$ in $[T-1]$, }
  V(\pi_{t+1, h})
  \leq
  \min_{\tree' \in \Fcal_K} V(\tree'_h)
    + 20 \sqrt{\frac{K^2 \log K}{\epsilon h t} \cdot \rbr{\ln\frac{2TK \abr{\Fcal}}\delta}} }.
\]
From Lemma~\ref{lem:off-policy-fixed} with $p_{\min} = \epsilon$, $\delta' = \frac{\delta}{T}$, and a union bound over all $t \in [T]$, we get that $\PP(E) \geq 1-\delta$.

Now, conditioned on event $E$ happening, we conclude the regret bound.
We first have the following upper bound on the algorithm's instantaneous loss at time $t$, namely $\EE \sbr{\ell_t(a_t)}$:
\begin{align}
\EE[\ell_t(a_t)] &= (1-\epsilon) \cdot \EE_{(x_t,\ell_t) \sim D} \EE_{a \sim \pi_{t,h}(\cdot \mid x_t)}[\ell_t(a)] + \epsilon \cdot \EE_{(x_t,\ell_t) \sim D} \EE_{a \sim U(\Acal)}[\ell_t(a)] \nonumber \\
& \leq V(\pi_{t,h}) + \epsilon.
\end{align}

Therefore, for all $t \in \cbr{2,\ldots,T}$, we have
\begin{equation}
\EE[\ell_t(a_t)] \leq \epsilon + \min_{\tree' \in \Fcal_K} V(\tree'_h)
    + 20 \sqrt{\frac{K^2 \log K}{\epsilon h (t-1)} \cdot \rbr{\ln\frac{2TK \abr{\Fcal}}\delta}}
\label{eqn:inst-reg}
\end{equation}

We now conclude the regret bound:
\begin{align*}
\Reg(\Fcal_K, T, h)
& = \sum_{t=1}^T \rbr{ \EE[\ell_t(a_t)] - \min_{\tree' \in \Fcal_K} V(\tree'_h) } \\
&\leq 1 + \epsilon (T-1) + \sum_{t=2}^T 20 \sqrt{\frac{K^2 \log K}{\epsilon h (t-1)} \cdot \rbr{\ln\frac{2T K\abr{\Fcal}}\delta}} \\
&\leq 1 + \epsilon T + 40 \cdot \sqrt{\frac{T K^2 \log K}{\epsilon h} \cdot \rbr{\ln\frac{2T K\abr{\Fcal}}\delta}}.
\end{align*}
where the first inequality uses the fact that $\EE[\ell_t(a_t)] - \min_{\tree' \in \Fcal_K} V(\tree'_h)$ is at most 1 if $t = 1$, and is at most $\epsilon + 20 \sqrt{\frac{K^2 \log K}{\epsilon h (t-1)} \cdot \rbr{\ln\frac{2TK\abr{\Fcal}}\delta}}$ if $t \geq 2$, and the second inequality uses the fact that $\sum_{t=1}^{T-1} \frac{1}{\sqrt{t}} \leq 2\sqrt{T}$. The theorem follows.
\end{proof}

\subsection{Proof of Theorem~\ref{thm:off-policy-opt}}

We first give a formal statement of Theorem~\ref{thm:off-policy-opt} below.

\begin{theorem}\label{thm:off-policy-opt-formal}
Suppose Algorithm~\ref{alg:off-policy-opt} is run with a set of $p_{\min}$-well-formed logged data $\cbr{x_t, a_t, P_t(a_t \mid x_t), \ell_t(a_t)}_{t=1}^T$, set of (bandwidth, discretization) combinations $\Jcal \subset [0,1] \times 2^\NN$, base hypothesis class $\Fcal$.
In addition, suppose $(\Fcal, \Dcal)$ is $(h, K)$-realizable for all $(h,K) \in \Jcal$. Then, with probability $1-\delta$, its returned policy $\hat{\pi}$ ensures:
\begin{align*}
	\lambda_0(\hat{\pi})
	\leq \min_{(h,K) \in \Jcal, \pi \in \Fcal_K} &
	             \rbr{ \lambda_h(\pi) +  \order{K \sqrt{\nicefrac{\ln\frac{\abr{\Fcal} \abr{\Jcal}}\delta}{(p_{\min} h T)}} } }.
\end{align*}
\end{theorem}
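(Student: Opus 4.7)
The plan is to bound $\lambda_0(\hat{\pi})$ in three conceptual steps: (i) write $\lambda_0(\hat{\pi})$ as a progressive-validation average, (ii) use the structural-risk-minimization choice of $(\hat{h},\hat{K})$ together with a concentration bound that exactly matches $\Pen(h,K)$ to dominate it by the true loss of any competitor tree $\tree^{h,K}_t$, and (iii) invoke Lemma~\ref{lem:off-policy-fixed} to transfer the bound to $\min_{\pi \in \Fcal_K} \lambda_h(\pi)$.

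For step (i), fix any $(h,K) \in \Jcal$. Since $\tree^{h,K}_t$ is measurable with respect to history $\mathcal{H}_{t-1} = \{(x_s,a_s,P_s,\ell_s)\}_{s<t}$, standard IPS gives $\EE\bigl[\tilde{c}_t^h(\tree^{h,K}_t(x_t)) \mid \mathcal{H}_{t-1}\bigr] = V(\tree^{h,K}_{t,h})$, so the sequence $Z_t := \tilde{c}_t^h(\tree^{h,K}_t(x_t)) - V(\tree^{h,K}_{t,h})$ is a martingale difference sequence. Moreover, each term is bounded by $\tfrac{1}{p_{\min} h}$ (since $\smooth_h(a_t\mid i/K) \leq \tfrac{1}{2h}$ and $\ell_s \in [0,1]$), and its conditional second moment is at most $\tfrac{1}{p_{\min} h} V(\tree^{h,K}_{t,h})$. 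By uniform output sampling, $\lambda_0(\hat{\pi}) = \tfrac{1}{T}\sum_{t=1}^T V(\tree^{\hat{h},\hat{K}}_{t,\hat{h}})$, which is exactly what the progressive validation average estimates.

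For step (ii), I would apply an empirical Freedman/Bernstein inequality (e.g., Maurer–Pontil) to both the sum $\sum_t Z_t$ and $\sum_t (-Z_t)$, converting the unknown variance proxy into the empirical loss $\tfrac{1}{T}\sum_t \tilde{c}_t^h(\tree^{h,K}_t(x_t))$. After a standard algebraic rearrangement (using $\sqrt{a+b}\leq \sqrt{a}+\sqrt{b}$ and an AM-GM step), this gives simultaneously for all $(h,K) \in \Jcal$ and all $t\leq T$, with probability $1-\delta$ after union-bounding over $\Jcal$,
\begin{align*}
\Bigl|\tfrac{1}{T}\sum_{t=1}^T V(\tree^{h,K}_{t,h}) - \tfrac{1}{T}\sum_{t=1}^T \tilde{c}_t^h(\tree^{h,K}_t(x_t))\Bigr| \leq \Pen(h,K).
\end{align*}
This is exactly the penalty shape used in line~\ref{line:srm-off}. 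Now the SRM choice gives, for any $(h,K)\in \Jcal$,
\begin{align*}
\lambda_0(\hat{\pi}) &\leq \tfrac{1}{T}\textstyle\sum_t \tilde{c}_t^{\hat h}(\tree^{\hat h,\hat K}_t(x_t)) + \Pen(\hat h,\hat K) \\
&\leq \tfrac{1}{T}\textstyle\sum_t \tilde{c}_t^{h}(\tree^{h,K}_t(x_t)) + \Pen(h,K) \leq \tfrac{1}{T}\textstyle\sum_t V(\tree^{h,K}_{t,h}) + 2\Pen(h,K).
\end{align*}

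For step (iii), apply Lemma~\ref{lem:off-policy-fixed} at each $t$ (with sample size $t-1$, confidence $\delta/(T|\Jcal|)$), taking the minimum over $\pi \in \Fcal_K$, and sum over $t$: the resulting $\sum_t \tfrac{1}{\sqrt{t-1}}$ contributes a $\sqrt{T}$ factor which, divided by $T$, gives the $K\sqrt{\ln(|\Fcal||\Jcal|/\delta)/(p_{\min}hT)}$ rate. Combined with $\Pen(h,K) = O(\sqrt{\ln(|\Jcal|/\delta)/(p_{\min} h T)})$ plus the lower-order additive term, this matches the target bound. Finally, taking the minimum over $(h,K)\in \Jcal$ on the right-hand side yields the claim.

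The main technical obstacle will be the variance-adaptive concentration step (ii). A naive Hoeffding bound on $Z_t$ would give a deviation of order $\tfrac{1}{\sqrt{T p_{\min}^2 h^2}}$, which is too loose and would not match the form of $\Pen$. One must use empirical Bernstein so that the deviation scales with $\sqrt{V/(p_{\min} h T)}$ rather than $1/(p_{\min} h \sqrt{T})$; this is what makes the bound competitive when the best tree has small loss. Handling this correctly for a martingale (not i.i.d.) sequence, and arranging the algebra so that the \emph{empirical} progressive-validation loss, not the unknown expected loss, appears inside the square root in $\Pen$, is the step that requires the most care.
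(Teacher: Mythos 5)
Your proposal is correct and follows essentially the same route as the paper: progressive-validation concentration matching $\Pen(h,K)$, the SRM optimality chain over $(\hat h,\hat K)$, and a per-round application of Lemma~\ref{lem:off-policy-fixed} summed via $\sum_t t^{-1/2} \leq 2\sqrt{T}$. The one obstacle you flag --- getting the \emph{empirical} loss inside the square root for a martingale sequence --- is handled in the paper not by an empirical-Bernstein inequality but by the martingale Freedman bound of Lemma~\ref{lem:unif-conv-log} (true variance proxy) followed by the elementary implication $A \leq B + C\sqrt{A} \Rightarrow A \leq B + C^2 + C\sqrt{B}$, which converts between the two forms.
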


\begin{proof}[Proof of Theorem~\ref{thm:off-policy-opt-formal}]
For every $(h,K)$ in $\Jcal$, recall that $\Tcal_t^{h,K}$ denotes the policy trained by \catsop at the beginning of iteration $t$ for that $(h,K)$ combination.

Define events
\begin{align*}
  E_1 =
  \Big\{ \forall (h,K) \in\Jcal, \forall t \in [T-1],
    V(\tree^{h, K}_{t+1, h})
    &  \leq
    \min_{\tree' \in \Fcal_K} V(\tree'_h) \\
    & + 20 \sqrt{\frac{K^2 \log K}{p_{\min} h t} \cdot \rbr{\ln\frac{4TK \abr{\Fcal}\abr{\Jcal}}\delta}}
    \Big\}
\end{align*}
\begin{align*}
  E_2 =
  \Big\{ \forall (h,K) \in\Jcal, \forall t \in [T], &
   \abs{\frac 1 T \sum_{t=1}^T \tilde{c}_t^h(\tree^{h, K}_{t}(x_t)) - \frac 1 T \sum_{t=1}^T V(\tree^{h, K}_{t, h}) } \\
   & \leq
   8 \sqrt{
   \rbr{\frac 1 T \sum_{t=1}^T V(\tree^{h, K}_{t, h})} \cdot
   \frac{\ln\frac{2 T \abr{\Jcal}}\delta}{p_{\min} h T}}
   + 4 \frac{\ln\frac{4 T \abr{\Jcal}}\delta}{p_{\min} h T}
   \Big\}
\end{align*}

From Lemma~\ref{lem:off-policy-fixed} in Appendix~\ref{sec:concentration} and union bound, we know that $\PP(E_1) \geq 1 - \frac{\delta}{2}$; from Lemma~\ref{lem:unif-conv-log}, item~\ref{item:pv} and union bound over all $(h,K) \in \Jcal$, we get that $\PP(E_2) \geq 1 - \frac{\delta}{2}$.
Define event $E \defeq E_1 \cap E_2$. By union bound, $\PP(E) \geq 1 - \delta$. We henceforth condition on event $E$ happening.




We denote $\hat{g}(h, K) \defeq \frac 1 T \sum_{t=1}^T \tilde{c}_t^h(\tree^{h, K}_{t}(x_t))$, $g(h,K) \defeq \frac 1 T \sum_{t=1}^T V(\tree^{h, K}_{t, h})$, $\sigma(h, K) \defeq \frac{64 \ln\frac{4 T \abr{\Jcal}}\delta}{p_{\min} h T}$.
Using this notation, and by the definition of $E_2$, for all $(h,K)$ in $\Jcal$,
\[
\abr{ \hat{g}(h, K) - g(h, K) } \leq \sqrt{ g(h, K) \sigma(h, K)} + \sigma(h, K)
\]
Specifically,
\begin{equation}
g(h, K) \leq \hat{g}(h, K) + \sqrt{\hat{g}(h, K) \sigma(h, K)} + \sigma(h, K),
\label{eqn:hat-g-g}
\end{equation}

In addition, from the elementary fact that $A \leq B + C\sqrt{A} \Rightarrow A \leq B + C^2 + C\sqrt{B}$, we have
\begin{equation}
g(h, K) \leq \hat{g}(h, K) + \sqrt{\hat{g}(h, K) \sigma(h, K)} + 3\sigma(h, K).
\label{eqn:g-hat-g}
\end{equation}

By the optimality of $\hat{h}, \hat{K}$, for all $(h,K)$ in $\Jcal$,
\begin{equation}
  \hat{g}(\hat{h}, \hat{K}) + \sqrt{\hat{g}(\hat{h}, \hat{K}) \sigma(\hat{h}, \hat{K})} + 3 \sigma(\hat{h}, \hat{K})
  \leq
  \hat{g}(h, K) + \sqrt{\hat{g}(h, K) \sigma(h, K)} + 3\sigma(h, K).
  \label{eqn:opt-hat}
\end{equation}

Therefore, we have the following set of inequalities for every $h \in \Hcal$ and $K \in \Kcal$:
\begin{align}
g(\hat{h}, \hat{K}) & \leq \hat{g}(\hat{h}, \hat{K}) + \sqrt{ \hat{g}(\hat{h}, \hat{K}) \sigma(\hat{h}, \hat{K}) } + 3\sigma(\hat{h}, \hat{K}) \nonumber \\
&\leq \hat{g}(h, K) + \sqrt{ \hat{g}(h, K) \sigma(h, K) } + 3\sigma(h, K) \nonumber\\
&\leq g(h, K) + 3 \sqrt{ g(h, K) \sigma(h, K) } + 6 \sigma(h, K)
\label{eqn:g-hat-h-K}
\end{align}
where the first inequality uses Equation~\eqref{eqn:hat-g-g}; the second inequality is from Equation~\eqref{eqn:opt-hat}, the third inequality again uses Equation~\eqref{eqn:hat-g-g} and algebra.

We claim that $g(\hat{h}, \hat{K}) \leq g(h, K) + 9 \sqrt{\sigma(h,K)}$, because If $\sigma(h, K) \geq 1$, the statement is trivially true as 
$g(\hat{h}, \hat{K}) \leq 1$; otherwise, $6 \sigma(h, K) \leq 6 \sqrt{\sigma(h, K)}$, and the RHS of the above inequality is at most $g(h, K) + (3+6) \sqrt{\sigma(h,K)} \leq g(h, K) + 9 \sqrt{\sigma(h,K)}$.

Rephrasing the above inequality using our previous notation, we have:
\begin{equation}
\frac 1 T \sum_{t=1}^T V(\tree^{\hat{h}, \hat{K}}_{t,\hat{h}})
\leq \frac 1 T \sum_{t=1}^T V(\tree^{h, K}_{t, h}) + 72 \sqrt{ \frac{\ln\frac{4T \abr{\Jcal}}\delta}{p_{\min} h T} }.
\label{eqn:srm-result}
\end{equation}

Meanwhile, observe that by the definition of $E_1$, we can bound $\frac 1 T \sum_{t=1}^T V(\tree^{h, K}_{t-1, h})$ as follows:
\begin{align}
\frac 1 T \sum_{t=1}^T V(\tree^{h, K}_{t, h})
&\leq \min_{\tree \in \Fcal_K} V(\tree_h) + \frac 1 T \rbr{ 1 + \sum_{t=1}^{T-1} 44 \sqrt{\frac{K^2 \log K}{p_{\min} h t} \cdot \rbr{\ln\frac{4T K \abr{\Fcal}\abr{\Jcal}}\delta}}} \nonumber \\
&\leq \min_{\tree \in \Fcal_K} V(\tree_h) + \frac 1 T + 88 \sqrt{\frac{K^2 \log K}{p_{\min} h T} \cdot \rbr{\ln\frac{4T K \abr{\Fcal}\abr{\Jcal}}\delta}}
\label{eqn:drop-rel-bound}
\end{align}
where the first inequality uses the simple fact that $V(\tree^{h, K}_{0, h}) \leq 1$; the second inequality uses the algebraic fact that $\sum_{t=1}^{T-1} \frac{1}{\sqrt{t}} \leq 2\sqrt{T}$.



Combining Equations~\eqref{eqn:srm-result} and~\eqref{eqn:drop-rel-bound}, along with some algebra, we get:
\begin{align*}
\frac 1 T \sum_{t=1}^T V(\tree^{\hat{h}, \hat{K}}_{t,\hat{h}}) &\leq \min_{\tree \in \Fcal_K} V(\tree_h) + \frac1T + 88 \sqrt{\frac{K^2 \log K}{p_{\min} h T} \cdot \rbr{\ln\frac{4T K \abr{\Fcal}\abr{\Jcal}}\delta}} + 72 \sqrt{ \frac{\ln\frac{4T \abr{\Jcal}}\delta}{p_{\min} h T}} \\
&\leq \min_{\tree \in \Fcal_K} V(\tree_h) + 160 \sqrt{\frac{K^2 \log K}{p_{\min} h T} \cdot \rbr{\ln\frac{4T K \abr{\Fcal}\abr{\Jcal}}\delta}}.
\end{align*}
The theorem follows by recognizing that the left hand side is $\EE V(\hat{\pi}) = \EE \lambda_0(\hat{\pi})$, where $\hat{\pi}$ is drawn uniformly at random from $\cbr{\tree^{\hat{h}, \hat{K}}_{t,\hat{h}}}_{t=1}^T$.
\end{proof}

\section{\cats with adaptive bandwidth}
\label{sec:adaptive-tree}
As can be seen from Theorem~\ref{thm:greedy-tree}, \cats obtains smoothed regret guarantees with respect to a fixed value of $h$; in practice, as different loss function have different smoothness properties, it would be useful to develop an algorithm that has performance competitive with $\tree_h$ for all $\tree$ in $\Fcal_\infty$ and all $h$ in $(0,1]$ simultaneously. In this section, we develop a variant of \cats, namely Algorithm~\ref{alg:adaptive-tree}, that has such guarantees. Specifically, with appropriate tuning of its greedy parameters, it achieves the following type of high-probability regret guarantee for some function $R$ in terms of bandwidth $h$, number of discretized actions $K$, base class $\Fcal$, time horizon $T$:
\[
\forall h \in [0,1] \centerdot \Reg(T, \Fcal_\infty, h) \leq R(h, K, \abs{\Fcal}, T),
\]
under the realizability assumptions stated in Definition~\ref{def:well-specified-lh}.

At a high level, Algorithm~\ref{alg:adaptive-tree} follows the same outline of Algorithm~\ref{alg:greedy-tree}: it has an $\epsilon$-greedy action selection step (lines~\ref{line:greedy-start-adaptive} to~\ref{line:greedy-end-adaptive}) and has a tree training step (lines~\ref{line:po-tree-start-adaptive} to~\ref{line:po-tree-end-adaptive}). A crucial difference between Algorithm~\ref{alg:adaptive-tree} and Algorithm~\ref{alg:greedy-tree} is that, it now maintains $\abr{\Hcal}$ policies $\cbr{\tree_{t}^h}_{h \in \Hcal}$ over time as opposed to only one; to this end, it accumulates $\abr{\Hcal}$ CSMC datasets $\cbr{ \cbr{(x_s, \tilde{c}_s^h)}_{s=1}^t}_{h \in \Hcal}$. After generating policies $\cbr{\tree_{t}^h}_{h \in \Hcal}$, it selects $\tree_{t}^{h_t}$ using structural risk minimization~\cite{vapnik1995nature} (line~\ref{line:srm}). This choice of $h_t$ ensures that the expected loss of $\tree_{t,h_t}^{h_t}$ is competitive with all $\tree_{t,h}^h$'s. Finally, we remark that the set of bandwidth $\Hcal$ acts as a covering of the $[0,1]$ interval; as we will see, setting $\Hcal$ to be a fine grid as in Algorithm~\ref{alg:adaptive-tree} ensures that for any $\tree$ in $\Fcal_K$, and every $h$ in $[0,1]$, there exists a $h'$ in $\Hcal$ such that the optimal $\tree_{h'}$ has expected loss close to that of $\tree_h$.


\begin{algorithm}[h]
\caption{\cats with adaptive bandwidth}
\label{alg:adaptive-tree}
	\begin{algorithmic}[1]
    \REQUIRE{Greedy parameter $\epsilon$, number of discretized actions $K = 2^D$, base class $\Fcal$.}
    \STATE Let $\Hcal = \cbr{ h \in \cbr{\frac1{4T^2}, \frac{2}{4T^2}, \ldots, 1}: h \geq \frac1{2T} }$ be the set of bandwidths in consideration.
    \STATE Let $\pi_t$ be an arbitrary policy in $\Fcal_K$.
	\FOR{$t = 1,2,\ldots, T$}	
    \STATE Define policy $P_t(a \mid x) := (1-\epsilon) \pi_t(a|x) + \epsilon$.
    \label{line:greedy-start-adaptive}
    \STATE Observe context $x_t$, select action $a_t \sim P_t(\cdot \mid x_t)$, observe cost $\ell_t(a_t)$.
    \label{line:greedy-end-adaptive}

	\FOR{all $h$ in $\Hcal$}
	\label{line:po-tree-start-adaptive}
	 \STATE $\tilde{c}_t^h(\nicefrac{i}{K}) \gets \frac{\smooth_h(a_t \mid \nicefrac{i}{K})}{P_t(a_t \mid x_t)} \ell_t(a_t)$ for all $i$.
    \STATE Let $\tree^h \gets \treetrain(\cbr{(x_s, \tilde{c}_s^h)}_{s=1}^t)$.
	\ENDFOR
	\label{line:po-tree-end-adaptive}
	
	\STATE Let $h_t \in \argmin_{h \in \Hcal} \rbr{ \hat{V}_t(\tree_{h}^h) + 4 \sqrt{\frac{K \ln|\Fcal| + \ln\frac {8T^4} \delta}{t \epsilon h}} + 2 \frac{K \ln|\Fcal| + \ln\frac {8T^4} \delta}{t \epsilon h}}$, and let $\pi_{t+1} = \tree_{{h_t}}^{h_t}$.
	\label{line:srm}
	
    \ENDFOR
  \end{algorithmic}
\end{algorithm}

%

We next present a theorem on the regret guarantee of Algorithm~\ref{alg:adaptive-tree}.

\begin{theorem}
Suppose Algorithm~\ref{alg:adaptive-tree} is run with greedy parameter $\epsilon$, number of discretized actions $K$, and base class $\Fcal$.
In addition, suppose $(\Fcal, \Dcal)$ satisfies the $(h,K)$-realizability assumption for all $h \in (0,1)$.
Then with probability $1-\delta$, it has uniform $h$-smoothed regret bounded as:
\begin{align*}
\forall h \in [0,1] \centerdot
&\Reg(T, \Fcal_\infty, h) \leq
\otil{(\epsilon + \frac{1}{Kh}) T + \sqrt{\frac{K^2 \log K \cdot T \cdot \rbr{\ln\frac{\abr{\Fcal}}\delta}}{\epsilon h}} }.
\end{align*}
Specifically, by taking $\epsilon = \rbr{\frac{\ln\frac{\abr{\Fcal}}\delta}{T}}^{\nicefrac{1}{5}}$,
$K = \rbr{\frac{T}{\ln\frac{\abr{\Fcal}}\delta}}^{\nicefrac{1}{5}}$, we have
\[
  \forall h \in [0,1] \centerdot
  \Reg(T, \Fcal_\infty, h)
  \leq
  \otil{
  	  \frac 1 h \cdot \rbr{
	  	  T^4 \ln\frac{\abr{\Fcal}} \delta
		  }^{\nicefrac{1}{5}}
		}.
\]
\label{thm:adaptive-tree}
\end{theorem}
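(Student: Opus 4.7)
The plan is to prove Theorem~\ref{thm:adaptive-tree} in three stages: reduce $h$-smoothed regret against $\Fcal_\infty$ to regret against $\Fcal_K$; bound the SRM-selected policy $\pi_{t+1} = \tree^{h_t}_{t,h_t}$ on the finite grid $\Hcal$ via Lemma~\ref{lem:off-policy-fixed} combined with uniform concentration; and extend from $\Hcal$ to arbitrary $h \in [0,1]$ by grid snapping.

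For the first stage, I repeat the argument already used in the proof of Theorem~\ref{thm:greedy-tree-formal}: any $\tree \in \Fcal_\infty$ has a $\log K$-level truncation in $\Fcal_K$ whose prediction differs pointwise by at most $1/K$, and since $\ell_h$ is $(1/h)$-Lipschitz this gives $\min_{\tree \in \Fcal_K} V(\tree_h) \leq \min_{\tree \in \Fcal_\infty} V(\tree_h) + 1/(Kh)$. It therefore suffices to bound $\Reg(T, \Fcal_K, h)$ up to an additive $T/(Kh)$. The $\epsilon$-greedy step in line~\ref{line:greedy-start-adaptive} makes the interaction log $\epsilon$-well-formed (Definition~\ref{def:well-formed}), so Lemma~\ref{lem:off-policy-fixed} applies to each $\tree^h_t$ with $p_{\min} = \epsilon$.

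For the second stage, I would define a global high-probability event on which two families of inequalities hold simultaneously for every $(h,t) \in \Hcal \times [T]$. By Lemma~\ref{lem:off-policy-fixed} with failure probability $\delta/(4T|\Hcal|)$,
\[
V(\tree^h_{t,h}) \leq \min_{\tree' \in \Fcal_K} V(\tree'_h) + O\!\left(K\sqrt{\tfrac{\log K \cdot \ln(T|\Hcal||\Fcal|/\delta)}{\epsilon h t}}\right).
\]
In parallel, a Bernstein-type uniform concentration of the IPS estimator over $\Fcal_K$ (log-cardinality $(K{-}1)\ln|\Fcal|$), analogous to Lemma~\ref{lem:unif-conv-log}, gives
\[
|\hat V_t(\tree_h) - V(\tree_h)| \leq 4\sqrt{\tfrac{K\ln|\Fcal|+\ln(8T^4/\delta)}{t\epsilon h}} + 2\tfrac{K\ln|\Fcal|+\ln(8T^4/\delta)}{t\epsilon h} =: \Pen_t(h)
\]
for every $\tree \in \Fcal_K$, matching the penalty used in line~\ref{line:srm}; the $\ln(8T^4/\delta)$ absorbs a union bound over $\Hcal \times [T]$ using $|\Hcal| \leq 4T^2$. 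The key point enabling the Bernstein rate is that $\epsilon$-greedy forces $P_t(a|x) \geq \epsilon$, so the IPS cost has variance $O(1/(\epsilon h))$ rather than squared range $1/(\epsilon h)^2$. On this event, optimality of $h_t$ in line~\ref{line:srm} plus two applications of the deviation bound give, for every $h \in \Hcal$,
\[
V(\tree^{h_t}_{t,h_t}) \leq \hat V_t(\tree^{h_t}_{h_t}) + \Pen_t(h_t) \leq \hat V_t(\tree^h_h) + \Pen_t(h) \leq V(\tree^h_{t,h}) + 2\Pen_t(h).
\]
Combining with the Lemma~\ref{lem:off-policy-fixed} bound and using $\EE[\ell_t(a_t)] \leq V(\pi_{t,h_{t-1}}) + \epsilon$, summing over $t$ with $\sum_{t=1}^T t^{-1/2} = O(\sqrt T)$ yields the claimed regret for every $h \in \Hcal$.

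For the third stage, to cover arbitrary $h \in [0,1]$ I split into two cases. If $h \leq 1/(2T)$, the stated bound is already vacuous because the $T/(Kh)$ contribution exceeds $T$ for any reasonable $K$, so the trivial regret bound suffices. If $h > 1/(2T)$, the closest $h' \in \Hcal$ satisfies $|h-h'| \leq 1/(4T^2)$, and a direct calculation shows $\DTV(\smooth_h(a), \smooth_{h'}(a)) \leq |h-h'|/\max(h,h') \leq 1/(2T)$ for every $a \in \Acal$. This implies $|V(\tree_h) - V(\tree_{h'})| \leq 1/(2T)$ uniformly over trees $\tree$, so both the algorithm's smoothed loss and the competitor $\min_{\tree'} V(\tree'_h)$ can be compared to their $h'$-analogues with only $O(1)$ additional cumulative regret. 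The main obstacle I anticipate is ensuring that the uniform concentration over the data-dependent policies $\tree^h_t$ yields a log factor tight enough to match the $K\ln|\Fcal| + \ln(8T^4/\delta)$ inside the penalty of line~\ref{line:srm}: this requires the Bernstein rather than Hoeffding bound and crucially exploits the $1/(\epsilon h)$ variance control provided by the $\epsilon$-greedy exploration floor, without which the SRM step would not produce the advertised guarantee.
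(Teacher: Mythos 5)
Your proposal is correct and follows essentially the same route as the paper's (sketched) proof: the same reduction from $\Fcal_\infty$ to $\Fcal_K$ via truncation and $1/h$-Lipschitzness, the same use of Lemma~\ref{lem:off-policy-fixed} with $p_{\min}=\epsilon$ combined with a uniform (Bernstein-type) IPS concentration over $\Fcal_K$ to justify the SRM selection of $h_t$ on the grid $\Hcal$, and the same grid-snapping step to pass from $\Hcal$ to all $h\in[0,1]$. The only difference is that you spell out details the paper defers to citations (the standard SRM chain of inequalities and the total-variation argument behind \citet[Lemma 20]{Krish2019colt}), and those details check out.
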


Before going into the proof of the theorem, we remark that the only difference between the above regret guarantee of Algorithm~\ref{alg:adaptive-tree} and that of \cats (Theorem~\ref{thm:greedy-tree}) is that, the order of $h$ is different ($\frac1h$ versus $\frac{1}{h^{3/5}}$). This can be seen as a price we pay for adaptivity: Algorithm~\ref{alg:adaptive-tree} sets $K$ independent of $h$, whereas \cats can set $K$ that depends on $h$.

\begin{proof}[Proof sketch]
By standard analysis on structural risk minimization~\citep[see e.g.][]{vapnik1995nature}, and union bound, it can be shown that with probability $1-\delta/2$, for all time steps $t$ in $[T]$ and all $h \in \Hcal$,
\[ V(\tree^{h_t}_{t, h_t}) \leq V(\tree^{h}_{t, h}) + \order{\sqrt{\frac{K \ln\frac{T \abr{\Fcal}}{\delta}}{\epsilon h t }}}. \]

On the other hand, from Lemma~\ref{lem:off-policy-fixed} and union bound over all time steps $t$ in $[T]$, we have that with probability $1-\delta/2$,
\[ V(\tree^{h}_{t, h}) \leq \min_{\tree \in \Fcal_K} V(\tree_K) + \order{\sqrt{\frac{K^2 \log K \cdot \rbr{\ln\frac{T \abr{\Fcal}}\delta}}{\epsilon h t}}}.
\]
Combining the above two inequalities, we have that with probability $1-\delta$, for all $h$ in $\Hcal$,
\[
V(\tree^{h_t}_{t, h_t}) \leq \min_{\tree \in \Fcal_K} V(\tree_K) + \order{\sqrt{\frac{K^2 \log K \cdot \rbr{\ln\frac{T \abr{\Fcal}}\delta}}{\epsilon h t}}}.
\]

By the setting of $\Hcal$, we can guarantee that the above also implies that the equation above holds for all $h \in (0,1]$ (see~\citet[][Lemma 20]{Krish2019colt} for a detailed argument).
By standard regret analysis of $\epsilon$-greedy exploration, this implies that for all $h \in (0, 1]$,
\[
\Reg(T, \Fcal_K, h)
\leq
\epsilon T + \order{\sum_{t=1}^T  \sqrt{\frac{K^2 \log K \cdot \rbr{\ln\frac{T\abr{\Fcal}}\delta}}{\epsilon h t}} }
=
\order{\epsilon T +  \sqrt{\frac{K^2 \log K \cdot T \cdot \rbr{\ln\frac{T\abr{\Fcal}}\delta}}{\epsilon h}} }.
\]
We conclude the first item, by the above inequality, and observing that for any tree policy in $\Fcal_\infty$, there exists a tree policy in $\Fcal_K$ that has extra $h$-smoothed expected loss at most $\frac{1}{hK}$.

The second item follows directly by the settings of $\epsilon$, $K$ and algebra.
\end{proof}


\section{Algorithms for general policy classes}
\label{sec:general}
In this section, we generalize \cats and propose two algorithms, namely Algorithms~\ref{alg:greedy} and~\ref{alg:adaptive}, that works with general policy classes $\Pi$. On one hand, the two algorithms presented in this section may not be computationally efficient in general, because off-policy optimization w.r.t $\Pi$ can be computationally intractable; on the other hand, they have similar regret guarantees as \cats and Algorithm~\ref{alg:adaptive-tree} while being able to handle policy classes beyond trees.

We first present Algorithm~\ref{alg:greedy}, an algorithm that naturally generalizes the $\epsilon$-greedy algorithm~\citep[e.g.][]{Langford-nips07} in the discrete action space setting to the continuous action space setting. It has two input parameters: a bandwidth parameter $h$, and a parameter $\eps \in [0,1]$ that controls the exploration-exploitation tradeoff.


\begin{algorithm}
\caption{Smoothed $\eps$-greedy algorithm with general policy classes}
\label{alg:greedy}
	\begin{algorithmic}[1]
    \STATE Input: Greedy parameter $\epsilon$, smoothing parameter $h$, policy class $\Pi$.
    \STATE Let $\pi_1$ be an arbitrary policy in $\Pi$.
    \FOR{$t=1,2,\ldots$}

    \STATE Define policy $P_t(a|x) := (1-\epsilon) \pi_{t,h}(a|x) + \epsilon$.

    \STATE Observe context $x_t$, select action $a_t \sim P_t(\cdot|x_t)$, observe loss $\ell_t(a_t)$.


    \STATE Find $\pi_{t+1} \gets \argmin_{\pi \in \Pi} \hat{V}_t(\pi_h)$, 
    where
    \[ \hat{V}_t(\pi_h) := \frac{1}{t} \sum_{s=1}^{t} \frac{\pi_h(a_s|x_s)}{P_s(a_s|x_s)} \ell_s(a_s). \]

    \label{line:po-greedy}
    \ENDFOR
  \end{algorithmic}
\end{algorithm}

As we will see, given bandwidth parameter $h$, the algorithm provides a $h$-smoothed regret guarantee. Furthermore, if $\epsilon$ is large, the algorithm explores more, and learns more on the loss function at each round; in contrast, a choice of small $\epsilon$ lets the algorithm focuses more on exploitation, i.e. utilizing the learned policy more extensively.

The algorithm proceeds in rounds.
At round $t$, it generates a stochastic policy $P_t$ that is a mixture of $\pi_{t,h}$ and the uniform distribution, where the mixture weights are $(1-\epsilon)$ and $\epsilon$ respectively. Based on this policy, the algorithm selects an action $a_t \sim P_t(\cdot|x_t)$.
After action $a_t$ is taken, the algorithm observes its loss incurred $\ell_t(a_t)$ and add the tuple $(x_t, a_t, P_t(a_t \mid x_t), \ell_t(a_t))$ into the interaction log.
Then, it uses the interaction log collected up to round $t$ to build policy loss estimators $\hat{V}_{t}(\pi_h)$ for every policy $\pi$ in $\Pi$, which serves a proxy of $\pi_h$'s expected loss $\lambda_h (\pi)$. Then, it finds policy $\pi_{t+1}$ that minimizes $\hat{V}_t(\pi_h)$.
The rationale is that, as $\hat{V}_t(\pi_h)$ concentrates around $\lambda_h (\pi)$ for all $\pi$,
$\pi_{t+1}$ will also approximately minimize $\lambda_h(\cdot)$ among all policies in $\Pi$.

We have the following theorem that characterizes the $h$-smoothed regret of Algorithm~\ref{alg:greedy}.

\begin{theorem}
Suppose Algorithm~\ref{alg:greedy} is run with greedy parameter $\epsilon$, smoothing parameter $h$ and policy class $\Pi$. Then with probability $1-\delta$, it has $h$-smoothed regret bounded as:
\begin{equation*}
\Reg(T, \Pi, h) \leq \otil{ \epsilon T + \sqrt{\frac{T}{\epsilon h} \cdot \rbr{\ln\abr{\Pi} + \ln\frac1\delta}}}.
\end{equation*}
Furthermore, setting $\epsilon= \min\rbr{1, \rbr{\frac{\ln\abr{\Pi} + \ln\frac1\delta}{h T}}^{\frac13}}$, we have that
\begin{eqnarray*}
\Reg(T, \Pi, h)
&\leq& \otil{\rbr{\frac{T^2}{h} \rbr{\ln\abr{\Pi} + \ln\frac1\delta}}^{\frac13}  + \sqrt{\frac{T}{h} \cdot \rbr{\ln\abr{\Pi} + \ln\frac1\delta}}}.
\end{eqnarray*}
\label{thm:greedy}
\end{theorem}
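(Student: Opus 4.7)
\textbf{Proof proposal for Theorem~\ref{thm:greedy}.} The plan is to follow the standard $\epsilon$-greedy template, but with the $h$-smoothed loss $\lambda_h$ playing the role of the expected loss, and with the IPS range controlled by both the exploration floor $\epsilon$ and the smoothing bandwidth $h$. First I would decompose the per-round expected loss. Since $a_t \sim P_t(\cdot\mid x_t) = (1-\epsilon)\pi_{t,h}(\cdot\mid x_t) + \epsilon \cdot \base$ and $\ell_t \in [0,1]$, one immediately gets $\E[\ell_t(a_t)] \leq (1-\epsilon)\lambda_h(\pi_t) + \epsilon \leq \lambda_h(\pi_t) + \epsilon$. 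Letting $\pi^\star \in \argmin_{\pi \in \Pi} \lambda_h(\pi)$, this reduces the problem to bounding $\sum_{t=1}^T (\lambda_h(\pi_t) - \lambda_h(\pi^\star))$.

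Next I would invoke the uniform IPS concentration lemma already used in the paper (Lemma~\ref{lem:unif-conv-log}) applied to the $p_{\min}$-well-formed logged data produced by Algorithm~\ref{alg:greedy}. Here $p_{\min} = \epsilon$ because $P_s(a \mid x) \geq \epsilon$ for every $a \in \Acal$, and the smoothed density $\pi_h(a\mid x) \leq 1/h$ yields an IPS weight bounded by $1/(\epsilon h)$. Combined with a union bound over all $t \in [T]$ and $\pi \in \Pi$, this yields an event $E$ of probability $\geq 1-\delta$ on which, for every $t$ and every $\pi \in \Pi$,
\begin{equation*}
\bigl|\hat{V}_t(\pi_h) - \lambda_h(\pi)\bigr| \leq O\!\left( \sqrt{\tfrac{\ln(|\Pi|T/\delta)}{t\,\epsilon\, h}} + \tfrac{\ln(|\Pi|T/\delta)}{t\,\epsilon\, h}\right).
\end{equation*}

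Conditioning on $E$, the ERM step at line~\ref{line:po-greedy} yields $\hat{V}_t(\pi_{t+1,h}) \leq \hat{V}_t(\pi^\star_h)$, so the standard two-sided-deviation argument gives $\lambda_h(\pi_{t+1}) - \lambda_h(\pi^\star) \leq O\bigl(\sqrt{\ln(|\Pi|T/\delta)/(t\epsilon h)} + \ln(|\Pi|T/\delta)/(t\epsilon h)\bigr)$. Plugging this into the per-round decomposition, bounding the $t=1$ term trivially by $1$, and using $\sum_{t=1}^{T-1} t^{-1/2} \leq 2\sqrt{T}$ and $\sum_{t=1}^{T-1} t^{-1} \leq 1+\ln T$ yields
\begin{equation*}
\Reg(T,\Pi,h) \leq \epsilon T + \tilde{O}\!\left(\sqrt{\tfrac{T}{\epsilon h}\,(\ln|\Pi| + \ln\tfrac{1}{\delta})}\right),
\end{equation*}
which is the first claim. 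Optimizing the tradeoff $\epsilon T \asymp \sqrt{T\ln|\Pi|/(\epsilon h)}$ gives $\epsilon \asymp (\ln|\Pi|/(hT))^{1/3}$ (clipped at $1$), producing the stated $\tilde{O}((T^2(\ln|\Pi|+\ln\tfrac{1}{\delta})/h)^{1/3})$ rate.

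The only genuinely delicate step is the uniform concentration over $\Pi$ and $t$: because the logged data are not i.i.d.\ (the distribution $P_s$ is adapted to history), one needs a martingale Bernstein/Freedman bound exploiting both the $1/(\epsilon h)$ bound on the IPS weights and the fact that the conditional variance of $\frac{\pi_h(a_s\mid x_s)}{P_s(a_s\mid x_s)}\ell_s(a_s)$ is at most $\lambda_h(\pi)/(\epsilon h)$. However, this is precisely what Lemma~\ref{lem:unif-conv-log} in Appendix~\ref{sec:concentration} already provides, so the remaining work is bookkeeping. No realizability assumption is required here, in contrast to the tree-structured results, because the algorithm performs exact ERM over $\Pi$.
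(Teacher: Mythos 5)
Your proposal is correct and follows essentially the same route as the paper's proof: the per-round decomposition $\E[\ell_t(a_t)] \leq \lambda_h(\pi_t) + \epsilon$, the uniform deviation event from Lemma~\ref{lem:unif-conv-log} with $p_{\min}=\epsilon$ and a union bound over $t$ and $\Pi$, the ERM comparison against $\pi^\star$, and the harmonic-type summation. The only cosmetic difference is that the paper absorbs the second-order $1/(t\epsilon h)$ deviation term via a case analysis (using that instantaneous regret is at most $1$) rather than summing it to a $\ln T$ factor, but both resolutions yield the stated $\tilde{O}$ bound.
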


The above theorem gives a regret bound or order $\rbr{\frac{T^2}{h}\ln\abr{\Pi}}^{\frac13}$, which is similar to the $\rbr{T^2 K \ln\abr{\Pi}}^{\frac13}$ regret bound by $\epsilon$-greedy algorithms obtained in the discrete $K$-action setting~\citep[See e.g.][]{Langford-nips07}. Intuitively, $\frac1h$ characterizes the difficulty of obtaining a $h$-smoothed regret guarantee, which serves as the counterpart of the action set size in the discrete action setting.

The most computationally expensive step of Algorithm~\ref{alg:greedy} is line~\ref{line:po-greedy}, where we find the policy $\pi$ in $\Pi$ that has the smallest IPS loss $V_t(\pi)$. As discussed in Section~\ref{sec:prelims}, if $\Pi$ consists of policies that takes actions in the discrete set
$\cbr{\nicefrac{i}{K}}_{i=0}^{K-1}$, the policy optimization problem can be cast as a CSMC problem, where heuristic algorithms that perform approximate ERM abound; indeed, the \treetrain procedure in \cats can be viewed as one such algorithm.



\begin{proof}[Proof of Theorem~\ref{thm:greedy}]
We let $\pi_\star = \argmin_{\pi \in \Pi} V(\pi_h)$ denote the optimal policy in $\Pi$ after $h$-smoothing.
In this notation, recall that the $h$-smoothed regret can be written as:
\begin{align}
\Reg(\Pi, T, h)
&= \sum_{t=1}^T \rbr{\EE \sbr{\ell_t(a_t)} - V(\pi_{\star, h})}.
\label{eqn:sreg-alt-def-pi}
\end{align}



Define event
\[
E = \cbr{ \text{for all $t$ in $[T]$ and all $\pi$ in $\Pi$}, \abr{ \hat{V}_t (\pi_h) - V (\pi_h)} \leq 8 \sqrt{\frac{\ln\frac{2T \abr{\Pi}}{\delta}}{t \epsilon h}} + 4 \frac{\ln\frac{2T \abr{\Pi}}{\delta}}{t \epsilon h}. }
\]
Using Lemma~\ref{lem:unif-conv-log} with $\delta' = \frac{\delta}{T}$ for every $t = 1,2,\ldots, T$, $p_{\min} = \epsilon$, along with union bound over all $t$'s in $[T]$, we get that $\PP(E) \geq 1-\delta$.
We condition on event $E$ happening in the sequel.
We first provide an excess loss bound for policy $\pi_{t,h}$. At time step $t+1$, $\pi_{t+1,h}$ is an empirical risk minimizer, therefore:
\begin{equation}
\hat{V}_t(\pi_{t+1,h}) \leq \hat{V}_t(\pi_{\star,h}).
\label{eqn:approx-erm-star}
\end{equation}
Hence,
\begin{align}
V(\pi_{t+1,h}) &\leq \hat{V}_t(\pi_{t+1,h}) + 8 \sqrt{\frac{\ln\frac{2T \abr{\Pi}}{\delta}}{t \epsilon h}} + 4 \frac{\ln\frac{2T \abr{\Pi}}{\delta}}{t \epsilon h} \nonumber \\
& \leq \hat{V}_t(\pi_{\star,h})+ 8 \sqrt{\frac{\ln\frac{2T \abr{\Pi}}{\delta}}{t \epsilon h}} + 4 \frac{\ln\frac{2T \abr{\Pi}}{\delta}}{t \epsilon h} \nonumber \\
& \leq V(\pi_{\star,h}) + 16 \sqrt{\frac{\ln\frac{2T \abr{\Pi}}{\delta}}{t \epsilon h}} + 8 \frac{\ln\frac{2T \abr{\Pi}}{\delta}}{t \epsilon h} \nonumber,
\end{align}
where the first inequality is from the definition of $E$, and $\pi_t \in \Pi$; the second inequality is from Equation~\eqref{eqn:approx-erm-star}; the third inequality is from the definition of $E$, and $\pi_\star \in \Pi$; 

We now claim that 
\begin{equation} 
V(\pi_{t+1,h}) \leq V(\pi_{\star,h}) + 24 \sqrt{\frac{\ln\frac{2T \abr{\Pi}}{\delta}}{t \epsilon h}}.
\label{eqn:pi_t}
\end{equation}
This is from a standard case analysis, and the simple fact that $V(\pi_{t+1,h}) \leq 1$: if $ \frac{\ln\frac{2T \abr{\Pi}}{\delta}}{t \epsilon h} \geq 1$ the inequality is trivial; otherwise, $16 \sqrt{\frac{\ln\frac{2T \abr{\Pi}}{\delta}}{t \epsilon h}} + 8 \frac{\ln\frac{2T \abr{\Pi}}{\delta}}{t \epsilon h} \leq (16 + 8) \sqrt{\frac{\ln\frac{2T \abr{\Pi}}{\delta}}{t \epsilon h}} = 24 \sqrt{\frac{\ln\frac{2T \abr{\Pi}}{\delta}}{t \epsilon h}}$. 

We now conclude the regret bound. We first have the following upper bound on the algorithm's instantaneous loss $\EE \ell_t(a_t)$:
\begin{align}
\EE[\ell_t(a_t)] &= (1-\epsilon) \EE_{(x_t,\ell_t) \sim D} \EE_{a \sim \pi_{t,h}(\cdot \mid x_t)}[\ell_t(a)] + \epsilon \EE_{(x_t,\ell_t) \sim D} \EE_{a \sim U(\Acal)}[\ell_t(a)] \nonumber \\
& \leq V(\pi_{t,h}) + \epsilon.
\label{eqn:ell-pi_t}
\end{align}

Combining Equations~\eqref{eqn:sreg-alt-def-pi},~\eqref{eqn:pi_t},~\eqref{eqn:ell-pi_t}, along with algebra, we have:
\begin{align*}
\Reg(\Pi, T, h) &\leq \sum_{t=1}^T \rbr{ \epsilon + V(\pi_{t,h}) - V(\pi_{\star,h}) } \\
&\leq \epsilon T + 1 + \sum_{t=2}^T \rbr{ 24 \sqrt{\frac{\ln\frac{2T \abr{\Pi}}{\delta}}{(t-1) \epsilon h}}} \\
&\leq \epsilon T + 1 + 48 \sqrt{ \frac{T \ln\frac{2T\abr{\Pi}}{\delta}}{\epsilon h }}.
\end{align*}
The theorem follows.
\end{proof}

We next present Algorithm~\ref{alg:adaptive}, which achieves $h$-smoothed regret guarantees against $\Pi$ for all $h$ in $(0,1]$ {\em simultaneously}.
It has the following key differences from Algorithm~\ref{alg:greedy}:
\begin{enumerate}
\item Instead of working with a fixed bandwidth $h$, it works with a set of bandwidths $\Hcal$ that provides a covering of the set of bandwidths $(0,1]$ we compete with.

\item Instead of finding a policy $\pi$ that minimizes $\hat{V}_t(\pi_h)$ for a fixed $h$, the algorithm first finds a minimizer of $\hat{V}_t(\pi_h)$ for every $h \in \Hcal$ (namely $\pi_{t+1,h}$), and selects $\pi_{t+1}$ among the set $\cbr{\pi_{t+1,h}}_{h \in \Hcal}$, using a structural risk minimization~\cite{vapnik1995nature} procedure (line~\ref{line:srm}).
Specifically, the choice of $h_{t+1}$ ensures that the expected loss of $\pi_{t+1,h_{t+1}}$ has competitive performance compared with those of the $\pi_{h}$'s, for all $\pi$ in $\Pi$ and all $h$ in $\Hcal$. Here, the bandwidth-dependent penalty term $P(t,h) \defeq 2\sqrt{\frac{\ln|\Pi| + \ln\frac {8T^4} \delta}{t \epsilon h}} + 3\frac{\ln|\Pi| + \ln\frac {8T^4} \delta}{t \epsilon h}$ is crucial, as it accounts for the different concentration rates from $\hat{V}_t(\pi_{t+1,h})$ to $V(\pi_{t+1,h})$ form different values of $h$.
\end{enumerate}

\begin{theorem}
Suppose Algorithm~\ref{alg:adaptive} is run with greedy parameter $\epsilon$ and policy class $\Pi$. Then with probability $1-\delta$, the algorithm has smoothed regret guarantee simultaneously for all $h \in (0,1]$:
\begin{equation*}
\Reg(T, \Pi, h) \leq \otil{ \epsilon T + \sqrt{\frac{T}{\epsilon h} \cdot \rbr{\ln\abr{\Pi} + \ln\frac1\delta}}}.
\end{equation*}
Furthermore, setting $\epsilon = \min\rbr{1, \rbr{\frac{\ln\abr{\Pi} + \ln\frac1\delta}{T}}^{\frac13}}$, we have that for all $h \in (0,1]$:
\begin{eqnarray*}
\Reg(T, \Pi, h)
&\leq& \otil{\frac{\rbr{T^2 \rbr{\ln\abr{\Pi} + \ln\frac1\delta}}^{\frac13}}{\sqrt{h}}}.
\end{eqnarray*}
\label{thm:adaptive}
\end{theorem}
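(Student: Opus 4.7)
The plan is to adapt the proof of Theorem~\ref{thm:greedy} with two extra ingredients to handle the SRM bandwidth-selection step and the simultaneous quantification over every $h\in(0,1]$. The overall template stays the same: establish uniform concentration of IPS estimates, convert it into an excess-loss bound for the selected policy-bandwidth pair, and then close by the standard $\epsilon$-greedy regret decomposition.

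First, I would invoke Lemma~\ref{lem:unif-conv-log} with $p_{\min}=\epsilon$ and confidence parameter $\delta/(T\abr{\Hcal})$, then union-bound over $t\in[T]$ and $h\in\Hcal$. Since $\abr{\Hcal}\leq 4T^2$, the logarithmic overhead is only $O(\ln(T\abr{\Pi}/\delta))$, and on a high-probability event $E$ of probability at least $1-\delta$ the deviation $\abr{\hat V_t(\pi_h)-V(\pi_h)}$ is bounded by the SRM penalty $P(t,h)$ appearing in line~\ref{line:srm}, uniformly over $t\in[T]$, $h\in\Hcal$ and $\pi\in\Pi$.

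Second, I would analyze the SRM step on event $E$. For each $\tilde h\in\Hcal$, let $\hat\pi^{\tilde h}_t\defeq\argmin_{\pi\in\Pi}\hat V_t(\pi_{\tilde h})$. Combining ERM optimality of $\hat\pi^{\tilde h}_t$, two applications of the concentration inequality, and the SRM rule picking $h_{t+1}$ to minimize $\hat V_t((\hat\pi^h_t)_h)+P(t,h)$ over $h\in\Hcal$, I would obtain
\begin{equation*}
  V((\hat\pi^{h_{t+1}}_{t})_{h_{t+1}}) \leq V(\tilde\pi_{\tilde h}) + O\rbr{P(t,\tilde h)}
\end{equation*}
simultaneously for all $\tilde h\in\Hcal$ and $\tilde\pi\in\Pi$. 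This is the direct analogue of the excess-loss inequality in Theorem~\ref{thm:greedy}'s proof, now ranging over the entire bandwidth grid.

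Third, I would extend the guarantee from the grid $\Hcal$ to every $h\in(0,1]$ by a covering argument. For $h\geq 1/(2T)$, the nearest $\tilde h\in\Hcal$ satisfies $\abr{h-\tilde h}\leq 1/(4T^2)$; since $\smooth_h(\cdot\mid a)$ and $\smooth_{\tilde h}(\cdot\mid a)$ are uniform distributions on nearly-identical intervals, their total-variation distance is $O(\abr{h-\tilde h}/h)=O(1/(T^2 h))$, hence $\abr{V(\pi_h)-V(\pi_{\tilde h})}\leq O(1/T)$, which is negligible compared to the target regret. For $h<1/(2T)$ the stated bound already exceeds $T$ and is trivial; a detailed version of this reduction appears as Lemma~20 in~\cite{Krish2019colt}. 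Finally, the usual $\epsilon$-greedy decomposition gives $\EE[\ell_t(a_t)]\leq V((\hat\pi^{h_t}_t)_{h_t})+\epsilon$; summing across $t$, using $\sum_{t=1}^T 1/\sqrt{t}=O(\sqrt{T})$ together with the preceding bounds, yields the claim, and tuning $\epsilon$ balances $\epsilon T$ against $\sqrt{T/(\epsilon h)}$. The main obstacle is calibrating the covering: $\Hcal$ must be fine enough that the approximation error $\abr{V(\pi_h)-V(\pi_{\tilde h})}$ is negligible uniformly in $\pi\in\Pi$, yet coarse enough to keep $\abr{\Hcal}=\mathrm{poly}(T)$ so the union-bound overhead stays logarithmic; the $1/(4T^2)$ spacing in Algorithm~\ref{alg:adaptive} is calibrated precisely for this via the $O(\abr{h-\tilde h}/h)$ TV-continuity of $\smooth_h$ in $h$.
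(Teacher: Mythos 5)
Your proposal follows essentially the same route as the paper's own (sketch) proof: uniform Freedman-type concentration from Lemma~\ref{lem:unif-conv-log} union-bounded over $t\in[T]$ and $h\in\Hcal$, an SRM/ERM chain giving the excess-loss bound for the selected pair simultaneously over the grid, extension from $\Hcal$ to all $h\in(0,1]$ via the covering argument of Lemma~20 of~\cite{Krish2019colt}, and the standard $\epsilon$-greedy decomposition with $\sum_t t^{-1/2}=O(\sqrt{T})$. The only difference is that you spell out the grid-to-continuum step (TV-continuity of $\smooth_h$ in $h$ and triviality of the bound for $h<1/(2T)$) where the paper simply cites the external lemma; this is a correct and welcome elaboration, not a departure.
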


Before proving the theorem, we make two important remarks:
\begin{enumerate}
\item Theorem 12 of~\cite{Krish2019colt} shows that a combination of  \corral\cite{agarwal2016corralling} with \expf~\cite{auer2002nonstochastic}, using an appropriate tuning of learning rate, can obtain a
 uniform-$h$-smoothed regret of the same order, i.e. $\order{\frac{T^{\nicefrac 2 3} \ln\abr{\Pi}^{\frac{1}{3}}}{\sqrt{h}}}$. However, their algorithm requires explicit enumeration of policies from policy class $\Pi$; in contrast, our algorithm can be reduced to a sequence of policy optimization problems, which can admit much more efficient implementations.

\item The above uniform-$h$-smoothed regret rate in terms of $h$ and $T$, i.e. $\order{\frac{T^{\nicefrac 2 3}}{\sqrt{h}}}$, is unimprovable in general, and is therefore {\em Pareto optimal}. This can be seen from the following result from~\citep[Theorem 11]{Krish2019colt}: there exists a continuous-action CB problem with action space $[0,1]$, constants $c, T_0 > 0$, such that for any algorithm and any $T \geq T_0$, there exist two bandwidths $h_1 = \Theta(1)$ and $h_2 = o(1)$\footnote{subject to $T \to \infty$.} such that
$\Reg(T, \Pi, h_1) > \frac{c T^{\nicefrac 2 3}}{\sqrt{h_1}}$ or
$\Reg(T, \Pi, h_2) > \frac{c T^{\nicefrac 2 3}}{\sqrt{h_2}}$.
As a result, for any $\alpha > 0$, designing an algorithm that obtains a uniform-$h$-smoothed-regret guarantee of order $\order{\frac{T^{\frac23-\alpha}}{h^{\frac12}}}$ or order $\order{\frac{T^{\frac23}}{h^{\frac12-\alpha}}}$ is impossible. This result is perhaps surprising, as it shows that an $\epsilon$-greedy algorithm, well known to have suboptimal regret guarantees in the discrete action CB setting, possesses certain optimality properties in the continuous action CB setting, with appropriate modifications.
\end{enumerate}

\begin{proof}[Proof sketch]
By standard analysis on structural risk minimization~\citep[see e.g.][]{vapnik1995nature}, it can be shown that with high probability, for all $h \in \Hcal$:
\[ V(\pi_{t+1,h_{t+1}}) \leq  \min_{\pi \in \Pi} V(\pi_h) + \order{\sqrt{\frac{\ln\frac{T \abr{\Pi}}{\delta}}{t \epsilon h}}}. \]

By the setting of $\Hcal = \cbr{ h \in \cbr{\frac1{4T^2}, \frac{2}{4T^2}, \ldots, 1}: h \geq \frac1{2T} }$, we can show that that the above guarantee implies that the equation above holds for all $h \in (0,1]$; see~\citet[][Lemma 20]{Krish2019colt} for a detailed proof.

By standard regret analysis of $\epsilon$-greedy algorithms and the above upper bound on the instantenous loss of $\pi_{t+1,h_{t+1}}$, we get that
\[
\Reg(T, \Pi, h)
\leq
\epsilon T + \order{\sum_{t=1}^T \sqrt{\frac{\ln\frac{T \abr{\Pi}}{\delta}}{t  \epsilon h}}}
=
\order{\epsilon T + \sqrt{T \frac{\ln\frac{T\abr{\Pi}}{\delta}}{\epsilon  h}}}.
\]
The second item follows directly by the setting of $\epsilon$ and algebra.
\end{proof}


\begin{algorithm}
	\caption{A Pareto-optimal adaptive-$h$ algorithm}
  \label{alg:adaptive}
	\begin{algorithmic}[1]
    \STATE Input: Greedy parameter $\epsilon$, policy class $\Pi$.
    \STATE Let $\Hcal = \cbr{ h \in \cbr{\frac1{4T^2}, \frac{2}{4T^2}, \ldots, 1}: h \geq \frac1{2T} }$ be the set of bandwidths in consideration.

    \STATE Let $\pi_1$ be an arbitrary policy in $\Pi$, and $h_1$ be an arbitrary number in $\Hcal$.

    \FOR{$t=1,2,\ldots,T$}

    \STATE Define policy $P_t(a|x) := (1-\epsilon) \pi_{t,h_t}(a|x) + \epsilon$.

    \STATE Observe context $x_t$, select action $a_t \sim P_t(\cdot|x_t)$, observe loss $\ell_t(a_t)$.


	\STATE For every $h$ in $\Hcal$, compute $\pi_{t+1}^h \in \Pi$ such that
	\begin{equation}
	\hat{V}_t(\pi^h_{t+1,h}) \leq \min_{\pi \in \Pi} \hat{V}_t(\pi^h_{t,h}),
	\end{equation}
	where
	\[ \hat{V}_t(\pi_h) \defeq \frac{1}t \sum_{s=1}^t \frac{\pi_h(a_s|x_s)}{P_s(a_s|x_s)} \ell_s(a_s). \]
	\label{line:po-adaptive}

    \STATE Select $\pi_{t+1} = \pi_{t+1}^{h_{t+1}}$, where
    \begin{equation*}
    h_t \in \argmin_{h \in \Hcal} \rbr{ \hat{V}_t(\pi^h_{t+1, h}) + P(t,h)}.
    \end{equation*}
    where
    \[
    P(t,h) \defeq 2\sqrt{\frac{\ln|\Pi| + \ln\frac {8T^4} \delta}{t \epsilon h}} + 3\frac{\ln|\Pi| + \ln\frac {8T^4} \delta}{t \epsilon h}.
    \]
     \label{line:srm-adaptive}

    \ENDFOR
  \end{algorithmic}
\end{algorithm}




\section{Concentration inequalities}
\label{sec:concentration}

We first recall a well-known variant of Freedman's inequality~\cite{freedman1975tail,bartlett2008high} that is useful to establish our policy evaluation concentration bounds.
\begin{lemma}[See~\cite{bartlett2008high}, Lemma 2]
Suppose $X_1, \ldots, X_n$ is a martingale difference sequence adapted to filtration $\cbr{\Bcal_i}_{i=0}^n$, where $\abs{X_i} \leq M$ almost surely. Denote by $V_n = \sum_{j=1}^n \EE\sbr{X_j^2 \mid \Bcal_{j-1}}$. Then for any constant $\delta \in (0,\frac1e)$,
with probability $1-\delta$,
\begin{equation}
\abs{\sum_{i=1}^n X_i} \leq 4 \sqrt{V_n \ln\frac{2n}{\delta}} + 2 M \ln\frac{2n}\delta.
\label{eqn:freedman}
\end{equation}
\label{lem:freedman}
\end{lemma}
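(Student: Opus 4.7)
The plan is to derive this Freedman-type inequality by stratifying (``peeling'') the random conditional variance $V_n$ and reducing to the classical Freedman bound with a deterministic threshold on the variance. Write $S_n = \sum_{i=1}^n X_i$.

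First, I would invoke the classical Freedman inequality: since $(X_i)$ is a martingale difference sequence adapted to $\cbr{\Bcal_i}$ with $\abs{X_i}\leq M$ almost surely, for every $a>0$ and every deterministic $b>0$,
\[
\PP\sbr{S_n \geq a,\; V_n \leq b} \leq \exp\!\rbr{-\frac{a^2}{2b + \tfrac{2}{3}Ma}}.
\]
This is the standard consequence of the exponential supermartingale $\exp(\lambda S_n - g(\lambda) V_n)$ with $g(\lambda) = \lambda^2/(2(1-M\lambda/3))$, combined with Markov's inequality and optimization over $\lambda \in (0, 3/M)$.

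Second, since $V_n$ is random I cannot apply the above with $b=V_n$ directly. Instead I use the trivial bound $V_n \leq nM^2$ and partition the possible values of $V_n$ into dyadic shells. Set $K = \lceil \log_2 n \rceil$ and, for $k = 0,1,\ldots,K$, define
\[
\mathcal{E}_k \defeq \cbr{ V_n \in (M^2 2^{k-1},\, M^2 2^k ] },
\]
with the convention that $\mathcal{E}_0$ also swallows the piece $V_n \leq M^2$. On $\mathcal{E}_k$ I apply the classical inequality with $b = M^2 2^k$ and threshold
\[
t_k \defeq 2\sqrt{M^2 2^k \,\ln\tfrac{2n}{\delta}} + \tfrac{4}{3}M\ln\tfrac{2n}{\delta},
\]
chosen by solving the quadratic $t_k^2 \geq 2(2b + \tfrac{2}{3}Mt_k)\ln(2n/\delta)$, so the failure probability on this shell is at most $\delta/(2n)$. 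Union-bounding over the $K+1 \leq \log_2(2n)$ shells (absorbed into the $\ln(2n/\delta)$ factor), on $\mathcal{E}_k$ I have $M^2 2^k \leq 2 V_n$, whence
\[
\abs{S_n} \leq 2\sqrt{2\, V_n \ln\tfrac{2n}{\delta}} + \tfrac{4}{3}M\ln\tfrac{2n}{\delta}
\leq 4\sqrt{V_n \ln\tfrac{2n}{\delta}} + 2M\ln\tfrac{2n}{\delta},
\]
where I then repeat the argument for the martingale $(-X_i)$ to get the two-sided statement, contributing the factor of $2$ inside $\ln(2n/\delta)$.

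The main obstacle is the constant bookkeeping: one must choose the dyadic base, the treatment of the lowest shell (where $V_n$ could be zero or very small and the $M\ln$ term dominates), and the partition of the failure probability $\delta$ across the $O(\log n)$ shells so that the final prefactors come out exactly as $4$ and $2$. The structural reduction is standard, but matching the precise constants in the Bartlett--Mendelson--Philips statement requires the slightly loose choices indicated above.
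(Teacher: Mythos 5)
Your route is genuinely different from the paper's. The paper does not re-derive the peeling argument at all: it invokes Lemma~2 of Bartlett et al.\ as a black box, which already gives $\sum_i X_i \leq \max\rbr{4\sqrt{V_n\ln\frac{1}{\delta'}},\,2\ln\frac{1}{\delta'}}$ with probability $1-\delta'\log n$, then sets $\delta'=\frac{\delta}{2\log n}$, uses $\log n\leq n$ to absorb the inflation of the failure probability into the $\ln\frac{2n}{\delta}$ factor, and finally union-bounds with the same bound applied to $(-X_i)$ to get the two-sided form. What you propose is essentially a self-contained re-proof of that cited lemma from the classical Freedman inequality; the stratification of $V_n$ into dyadic shells, the allocation of $\frac{\delta}{2n}$ per shell, and the sufficiency check for your threshold $t_k$ are all sound for $k\geq 1$.

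The gap is in the lowest shell, and it is more than bookkeeping: your final substitution $M^2 2^k\leq 2V_n$ is only valid on shells where $V_n> M^2 2^{k-1}$, i.e.\ for $k\geq 1$. On the shell that swallows $\cbr{V_n\leq M^2}$ you must instead bound $t_0=2M\sqrt{\ln\frac{2n}{\delta}}+\tfrac43 M\ln\frac{2n}{\delta}$ by $4\sqrt{V_n\ln\frac{2n}{\delta}}+2M\ln\frac{2n}{\delta}$ with no lower bound on $V_n$ available; taking $V_n=0$ this requires $2\sqrt{c}\leq\tfrac23 c$ with $c=\ln\frac{2n}{\delta}$, i.e.\ $c\geq 9$, which fails for moderate $n$ and $\delta$ near $\frac1e$. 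So as written the target inequality is not established on the low-variance event. The standard repair is to start the peeling at the scale where the variance and range terms balance, e.g.\ take the bottom shell to be $\cbr{V_n\leq M^2 c/2}$ and apply Freedman there with $b=M^2c/2$, which yields a threshold of order $Mc$ that is genuinely below $2Mc$; but note that with your (deliberately loose) quadratic solution the bottom-shell threshold comes out near $\tfrac{10}{3}Mc$, so recovering the exact constants $4$ and $2$ requires the tight root of the quadratic rather than the relaxed one. None of the downstream uses in the paper are sensitive to these constants, so a version of the lemma with slightly larger numerical constants obtained by your method would serve equally well; but to prove the lemma \emph{as stated} you must either carry out the bottom-shell case carefully or do what the paper does and defer to the cited result.
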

\begin{proof}
Lemma 2 of ~\cite{bartlett2008high} states that for any $\delta' \in (0,\frac 1 e)$, with probability $1-\delta' \cdot \log n$,
\[ \sum_{i=1}^n X_i \leq \max\rbr{ 4\sqrt{ V_n \ln \frac 1 {\delta'}   }, 2 \ln \frac 1 {\delta'} } \]

Letting $\delta' = \frac{\delta}{2\log n}$, we have that with probability $1-\delta/2$,
\[ \sum_{i=1}^n X_i \leq \max\rbr{ 4\sqrt{ V_n \ln \frac 1 {\delta'}   }, 2 \ln \frac 1 {\delta'} } \leq 4 \sqrt{V_n \ln\frac{2n}{\delta}} + 2 M \ln\frac{2n}\delta,   \]
where the second inequality is by algebra and the fact that $\log n \leq n$.
Similarly, by considering random variable $\cbr{-X_i}_{i=1}^n$, we have that with probability $1-\delta/2$,
\[ \sum_{i=1}^n X_i \geq - \rbr{ 4 \sqrt{V_n \ln\frac{2n}{\delta}} + 2 M \ln\frac{2n}\delta} ,   \]
The lemma is concluded by union bound.
\end{proof}

The above lemma implies the following important concentration result on off-policy evaluation and optimization. First we set up some notations.


Suppose logged data $\cbr{ (x_s, a_s, P_s(a_s \mid x_s), \ell_s(a_s)) : s \in [t]}$ is $p_{\min}$-well-formed (recall Definition~\ref{def:well-formed}).
Define a filtration $\cbr{\Bcal_s}_{s=0}^t$ as follows: for all $s \in \cbr{0, 1,\ldots,t}$, $\Bcal_s \defeq \sigma(x_1, a_1, \ell_1, \ldots, x_{s}, a_{s}, \ell_{s})$.
A sequence of random variables $\cbr{Z_s}_{s=1}^t$ is said to be {\em predictable} w.r.t. filtration $\cbr{\Bcal_{s}}_{s=1}^t$ if $Z_s$ is $\Bcal_{s-1}$-measurable. Using the above notation, we see that the sequence of logging policies $\cbr{P_s}_{s=1}^t$ is predictable wrt $\cbr{\Bcal_{s}}_{s=0}^t$.
Lastly, recall from Section~\ref{sec:prelims} that $\tilde{c}_s^h(\nicefrac i K) = \frac{\smooth_{h}(a_s \mid \nicefrac i K)}{P_s(a_s \mid x_s)} \ell_s(a_s)$ for $i \in \cbr{0,1,\ldots, K-1}$, and therefore, $\tilde{c}_s^h(\pi(x_s)) = \frac{\pi_h(a_s \mid x_s)}{P_s(a_s \mid x_s)} \ell_s(a_s)$.

\begin{lemma}
Suppose the setting is described as above. Then,
\begin{enumerate}
\item With probability $1-\delta'$, we have that for any sequence of policies $\cbr{\pi_s}_{s=1}^t$ predictable w.r.t. $\cbr{\Bcal_{s}}_{s=0}^t$,
\begin{equation}
\abr{ \frac 1 t \sum_{s=1}^t \tilde{c}_s^h(\pi_s(x_s)) - \frac 1 t \sum_{s=1}^t V(\pi_{s,h}) }
\leq
\sqrt{ \rbr{\frac 1 t \sum_{s=1}^t V(\pi_{s,h})} \frac{16\ln\frac{2t}{\delta'}}{t \; p_{\min} \; h}} + \frac{2\ln\frac{2t}{\delta'}}{t \; p_{\min} \; h}.
\label{eqn:pv-conv}
\end{equation}
\label{item:pv}
\item Given a finite set of policies $\Pi$, with probability $1-\delta'$, for all $\pi$ in $\Pi$,
\begin{equation}
\abr{ \hat{V}_t (\pi_{h}) - V (\pi_{h}) }
\leq
4 \sqrt{\frac{\ln \abr{\Pi} + \ln\frac{2 t}{\delta'}}{t \; p_{\min} \; h}} + 2 \frac{ \ln \abr{\Pi} + \ln\frac{2 t}{\delta'}}{t \; p_{\min} \; h}.
\label{eqn:unif-conv-log}
\end{equation}
\label{item:uc}
\end{enumerate}
\label{lem:unif-conv-log}
\end{lemma}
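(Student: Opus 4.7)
The plan is to derive item~1 by a direct application of Freedman's inequality (Lemma~\ref{lem:freedman}) to a carefully chosen martingale difference sequence, and then obtain item~2 by specializing item~1 to a constant policy sequence and taking a union bound over $\Pi$.

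For item~1, I would define $X_s \defeq \tilde{c}_s^h(\pi_s(x_s)) - V(\pi_{s,h})$. Since $\pi_s$ and $P_s$ are $\Bcal_{s-1}$-measurable by predictability, while $(x_s,\ell_s)$ is drawn from $\Dcal$ independently of $\Bcal_{s-1}$ and $a_s \sim P_s(\cdot\mid x_s)$, the unbiasedness of IPS gives $\EE[\tilde{c}_s^h(\pi_s(x_s)) \mid \Bcal_{s-1}] = V(\pi_{s,h})$, so $\{X_s\}$ is a martingale difference sequence adapted to $\{\Bcal_s\}$. The next step is to verify the two inputs to Freedman's inequality: (a) an almost sure bound $|X_s| \leq M \defeq \frac{1}{p_{\min} h}$, which follows from $\pi_{s,h}(a\mid x) \leq 1/h$ (since $\smooth_h(\cdot\mid a')$ is uniform on an interval of length at least $h$), $P_s(a\mid x) \geq p_{\min}$, and $\ell_s(a)\in[0,1]$; and (b) a conditional second-moment bound $\EE[X_s^2 \mid \Bcal_{s-1}] \leq \frac{1}{p_{\min} h} V(\pi_{s,h})$, obtained by writing the IPS square as $\frac{\pi_{s,h}(a_s\mid x_s)}{P_s(a_s\mid x_s)} \cdot \frac{\pi_{s,h}(a_s\mid x_s)}{P_s(a_s\mid x_s)} \ell_s(a_s)^2$, bounding one $\frac{\pi_{s,h}}{P_s}$ factor and one factor of $\ell_s$ by $\frac{1}{p_{\min} h}$ and $1$ respectively, and recognizing the conditional expectation of the remaining IPS term as $V(\pi_{s,h})$.

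Plugging these into Freedman's inequality yields $|\sum_{s=1}^t X_s| \leq 4\sqrt{V_t \ln(2t/\delta')} + 2M\ln(2t/\delta')$ with $V_t \leq \frac{1}{p_{\min} h}\sum_{s=1}^t V(\pi_{s,h})$; dividing through by $t$ and pulling a factor of $1/t$ inside the square root rearranges exactly into \eqref{eqn:pv-conv}. For item~2, I would specialize item~1 to the (trivially predictable) constant sequence $\pi_s \equiv \pi$ with failure probability $\delta'/|\Pi|$, then union-bound over $\pi \in \Pi$. Since $V(\pi_h)\leq 1$, the variance-style term simplifies via $\sqrt{V(\pi_h)\cdot \frac{16\ln(2t|\Pi|/\delta')}{tp_{\min} h}} \leq 4\sqrt{\frac{\ln|\Pi| + \ln(2t/\delta')}{tp_{\min} h}}$, and the deterministic term splits analogously as $\ln(2t|\Pi|/\delta') = \ln|\Pi| + \ln(2t/\delta')$, producing exactly \eqref{eqn:unif-conv-log}.

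The main subtlety will be the conditional second-moment bound in (b): one must be careful that it is the smoothing density $\pi_{s,h}$, not the underlying deterministic policy $\pi_s$, that enters the IPS ratio, and that the factor we pull out is $\frac{\pi_{s,h}}{P_s} \leq \frac{1}{p_{\min} h}$ rather than a coarser $\frac{1}{p_{\min}^2 h^2}$ bound on the whole ratio squared — this refined ``Bernstein-type'' variance estimate is what produces the $\sqrt{V(\pi_{s,h})}$-weighted term in \eqref{eqn:pv-conv} (as opposed to a $\sqrt{1/(p_{\min} h)}$ factor), and is exactly what \catsop leverages in its penalty $\mathrm{Pen}(h,K)$. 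Everything else is bookkeeping.
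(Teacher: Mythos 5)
Your proposal is correct and follows essentially the same route as the paper: the same IPS martingale difference sequence, the same almost-sure bound $M = \tfrac{1}{p_{\min}h}$, the same Bernstein-type second-moment bound obtained by pulling out one factor of $\tfrac{\pi_{s,h}}{P_s} \le \tfrac{1}{p_{\min}h}$ and one factor of $\ell_s \le 1$ so that the remaining integral is exactly $V(\pi_{s,h})$, followed by Freedman's inequality, and then item~2 via the constant-sequence specialization with a union bound over $\Pi$ and the crude bound $V(\pi_h)\le 1$. Nothing is missing.
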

\begin{proof}
For the first item, we define
$X_s \defeq \frac{\pi_{s,h}(a_s|x_s)}{P_s(a_s \mid x_s)} \ell_s(a_s)$.
In this notation,
$
\frac{1}{t} \sum_{s=1}^t \tilde{c}_s^h(\pi_s(x_s)) = \frac{1}{t} \sum_{s=1}^{t} X_s
$.
Observe that
\begin{align}
\EE\sbr{X_s \mid \Bcal_{s-1}} &= \EE_{(x_s, \ell_s) \sim \Dcal} \EE_{a_s \sim P_s(\cdot \mid x_s)} \frac{\pi_{s,h}(a_s \mid x_s)}{P_s(a_s \mid x_s)} \ell_s(a_s) \nonumber \\
&= \EE_{(x_s, \ell_s) \sim \Dcal} \EE_{a_s \sim \pi_{s,h}(\cdot \mid x_s)} \ell_s(a_s) = V(\pi_{s,h}).
\end{align}
Let $Z_s = X_s - \EE\sbr{X_s \mid \Bcal_{s-1}} = X_s - V(\pi_{s,h})$. It can be seen that $\cbr{Z_s}_{s=1}^t$ is a martingale difference sequence adapted to filtration $\cbr{\Bcal_s}_{s=0}^t$.

Let $M = \frac{1}{h \; p_{\min}}$; From the definition of $Z_s$, along with the facts that $P_s(a_s \mid x_s) \geq p_{\min}$, and $\pi_{s,h}(a_s \mid x_s) \in [0, \frac 1 h]$ with probability 1, we get that $\abr{Z_s(\pi)}\leq M$ with probability 1.

We now show an upper bound on the conditional variance of $Z_s$:
\begin{align*}
\EE \sbr{Z_s^2 \mid \Bcal_{s-1}}
&\leq \E \sbr{ X_s^2  \mid \Bcal_{s-1}} \\
&= \E_{(x_s,\ell_s) \sim D} \E_{a_s \sim P_s(\cdot \mid x_s)}  \sbr{\frac{\pi_{s, h}(a_s|x_s)^2}{P_s(a_s \mid x_s)^2} \ell_s(a_s)^2} \\
&\leq \E_{(x_s,\ell_s) \sim D} \E_{a_s \sim P_s(\cdot \mid x_s)}  \sbr{\frac{\pi_{s, h}(a_s|x_s)^2}{P_s(a_s \mid x_s)^2} \ell_s(a_s)} \\
&= \E_{(x_s,\ell_s) \sim D} \sbr{ \int_{[0,1]} \frac{\pi_{s, h}(a \mid x_s)^2}{P_s(a \mid x_s)^2} P_s(a \mid x_s) \ell_s(a) \diff a} \\
&= \E_{(x_s,\ell_s) \sim D} \int_{[0,1]} \frac{\pi_{s, h}(a \mid x_s)}{P_s(a \mid x_s)} \pi_{s, h}(a|x_s) \ell_s(a) \diff a \\
&\leq \E_{(x_s,\ell_s) \sim D} \frac{1}{p_{\min} h } \cdot \int_{[0,1]} \pi_{s, h}(a|x_s) \ell_s(a) \diff a = \frac{V(\pi_{s, h})}{p_{\min} h}.
\end{align*}
where the first inequality uses the fact that $\ell_s(a_s) \in [0,1]$, and the second inequality uses the facts that $\pi_h(a \mid x_s) \in [0, \frac1h]$, and $P_s(a_s \mid x_s) \geq p_{\min}$. Consequently, $\sum_{s=1}^t \EE \sbr{Z_s^2 \mid \Bcal_{s-1}} \leq \frac{1}{p_{\min} h} \sum_{s=1}^t V(\pi_{s, h})$.

Applying Lemma~\ref{lem:freedman} on $Z_s$'s, with $n = t$, $M = \frac{1}{h p_{\min}}$, $\delta = \delta'$, we have that with probability $1-\delta'$:
\[
\abr{\sum_{s=1}^t \tilde{c}_s^h(\pi_s(x_s)) - \sum_{s=1}^t V(\pi_{s,h})} \leq 4 \sqrt{ \rbr{\sum_{s=1}^t V(\pi_{s,h})} \frac{\ln\frac{2 t}{\delta'}}{p_{\min}h}} + \frac{2 \ln\frac{2 t}{\delta'}}{p_{\min}h}.
\]


The first item now follows from dividing both sides of the above inequality by $t$.

We now use the first item to show the second item. Fix a $\pi$ in $\Pi$. We take $\cbr{\pi_s}_{s=1}^t$ such that $\pi_s = \pi$ for all $s$. By the previous item, we have that with probability $1-\frac{\delta'}{\abr{\Pi}}$,
\begin{align*}
\abr{ \frac 1 t \sum_{s=1}^t \tilde{c}_s^h(\pi(x_s)) - \frac 1 t \sum_{s=1}^t V(\pi_{h}) }
& \leq
4 \sqrt{ \rbr{\frac 1 t \sum_{s=1}^t V(\pi_{h})} \frac{\ln\frac{2 \abr{\Pi} t}{\delta'}}{t \; p_{\min} \; h}} + \frac{2 \ln\frac{2 \abr{\Pi} t}{\delta'}}{t \; p_{\min} \; h} \\
& \leq
4 \sqrt{ \frac{\ln\frac{2 \abr{\Pi} t}{\delta'}}{t \; p_{\min} \; h}} + \frac{2 \ln\frac{2 \abr{\Pi} t}{\delta'}}{t \; p_{\min} \; h}.
\label{eqn:pv-conv2}
\end{align*}
We conclude the item by taking a union bound on all $\pi$ in $\Pi$. 
\end{proof}

\section{Experimental Details}
\label{sec:experimental_details}
Of the six datasets five were selected randomly from OpenML with the criterion of having millions of samples with unique regression values. These include  \texttt{wisconsin}, \texttt{cpu\_act}, \texttt{auto\_price},  \texttt{black\_friday} (customer purchases on black Friday) and \texttt{zurich\_delay} (Zurich public transport delay data). We also included a synthetic dataset, namely \texttt{ds}, which was created by linear regression of standard gaussians with additive noise. \\
Our main comparator is the discretized $\epsilon$-greedy algorithm $\dl$ in Vowpal Wabbit which by default uses the doubly robust approach~\cite{dudik2011doubly} for policy evaluation and optimization. This method reduces to cost-sensitive one-against-all multi-class classification which has computational complexity linear w.r.t number of discrete actions.
Our other comparator is $\dt$, the discretized filter tree which is equivalent to \cats without smoothing, i.e. with zero bandwidth. For all the approaches we used $\epsilon = 0.05$ and a parameter free update rule based on coin betting~\cite{Coin16}. \\
We implemented \cats in Vowpal Wabbit. The details of the implementation are explained in the next section.

\section{CATS implementation with $\order{\log K}$ time per example}
\label{sec:cats-implementation}

In this section, we present the details of our online implementation of \cats that has $\order{\log K}$ time cost per example. Our implementation can be generalized to the setting where the action space $\Acal$ is a continuous interval in $\RR$; for simplicity of presentation, we focus on $\Acal = [0,1]$ in this section. Before going into the details, we introduce some additional notation.

Recall that $K = 2^D$ is the discretization level; the corresponding discretized action space is defined as $\Acal_K = \cbr{0, \frac{1}{K}, \ldots, \frac{K-1}{K}}$. We will consider choices of bandwidth $h$ in $\Hcal_K = \cbr{2^{-i}: i \in [K]}$; our algorithm can be easily generalized to other values of $h$'s, by modifying the tree initialization procedure. For a bandwidth $h$ in $\Hcal_K$, define an auxiliary parameter $m^{\#} = \log_2(K \cdot h)$, which is an integer. It can be easily seen that $h = 2^{m^{\#}} / K$.


\begin{algorithm}
 \caption{\treetrain with no data partitioning}
 \begin{algorithmic}[1]
    \STATE Input: $K=2^D$, $\Fcal$, training data $\{(x_s,c_s)\}_{s=1}^n$ with $c_s \in \RR^K$
    \FOR{level $d$ from $D-1$ down to $0$}
    \FOR{nodes $\vt$ at level $d$}  
    \STATE For each $(x_s,c_s)$ define binary cost $c_s^{\vt}$ with
    \begin{align*}
        c_s^{\vt}(\leftt) &= c_s(\vt.\leftt.\getaction(x_s))\\
        c_s^{\vt}(\rightt) &= c_s(\vt.\rightt.\getaction(x_s)).
    \end{align*}
    \STATE Train $f^{\vt} \in \Fcal$ on $S^{\vt} = \{(x_s,c_s^{\vt}): s \in [n], c_s^{\vt}(\leftt) \ne c_s^{\vt}(\rightt)\}$:
    \begin{align*}
        f^{\vt} \in \argmin_{f \in \Fcal} \EE_{S^\vt} \sbr{c^\vt(f(x))}.
    \end{align*}
    \label{line:tree-node-erm2}
    \ENDFOR
    \ENDFOR
    \STATE Return tree $\Tcal$ with $\{f^{\vt}\}$ as node classifiers.
 \end{algorithmic}
 \label{alg:tree-train-no-part}
\end{algorithm}

\begin{algorithm}
  \begin{algorithmic}
    \REQUIRE{Tree $\tree$ with depth $D$, $m^{\#}$}
    \COMMENT{Initialize a tree policy $\tree$ with $K = 2^D$ leaves by assigning $\id$'s to each node; in addition, initialize the nodes such that the leftmost and rightmost $2^{m^{\#}}$ leaves are unreachable}
    \ENSURE{Initialized tree policy $\tree$ }
    \STATE $\tree.\roott.\id \gets 0$
    \FOR{level $d$ in $\cbr{0, \ldots, D-1}$}
    \FOR{nodes $\vt$ at level $d$ of $\tree$}
    \STATE Initialize the online CSMC base learner at $\vt$
    \STATE $\vt.\leftt.\id =2 \times \vt.\id+1$
    \STATE $\vt.\rightt.\id =2 \times \vt.\id+2$
    \ENDFOR
    \ENDFOR
   \FOR{nodes $\vt$ at level $D$ of $\tree$}
  	\STATE Set $\labelt(\vt) \gets ( \vt.\id - (2^{D}-1) ) / K$.
  \ENDFOR

    \STATE Set $\vt^\onlyright$ to be the node $\vt$ with $\vt.\id = 2^{D-{m^{\#}}-1} - 1$, and let $f^{\vt^\onlyright} \equiv \rightt$.
  \STATE Set $\vt^{\texttt{only-left}}$ to be the node $\vt$ with $\vt.\id = 2^{D-{m^{\#}}} - 2$, and let $f^{\vt^\onlyleft} \equiv \leftt$.

  \end{algorithmic}
  \caption{\buildtree}
  \label{alg:build-tree}
\end{algorithm}

\begin{algorithm}
  \begin{algorithmic}[1]
    \REQUIRE Tree policy \tree, context $x$, cost vector $\tilde{c}$ implicitly represented by
    actions $\mina, \maxa$ in $\Acal_K$ and cost $c^*$ in $\RR_+$, such that for all $a \in \Acal_K$, $\tilde{c}(a) = c^*$ if $a \in [\mina, \maxa]$, and $\tilde{c}(a) = 0$ otherwise.
    \ENSURE Updated tree policy \tree.

     \STATE $\at \gets$ leaf corresponding to action $\mina$, $\bt \leftarrow$ leaf corresponding to action $\maxa$
     \STATE $\at.\cost \gets c^*$, $\bt.\cost \gets c^*$
     \label{line:cost-leaf}
     \STATE $\alpha_D \gets \at$, $\beta_D \gets \bt$.
\FOR{level $d$ from $D$ down to $1$}
    \IF{$\alpha_d.\parent \; \neq \beta_d.\parent$}
    	\STATE $S_d \gets \cbr{ \alpha_d, \beta_d }$
    \ELSE
    	\STATE $S_d \gets \cbr{\alpha_d}$;
    \ENDIF
	\FOR{nodes $\vt \in \texttt{$S_d$}$}
	\STATE $\ut \gets \vt.\parent$ \COMMENT{Goal: update the online learner in $\ut$, the parent of $\vt$}
	
	\IF{$\ut \in \cbr{ \vt^{\onlyleft}, \vt^{\onlyright } }$}
	    \STATE \textbf{continue}; \COMMENT{No updates on $\vt^{\onlyleft}$ and $\vt^{\onlyright}$}
	    \label{line:noupdate}
	\ENDIF
	
	\STATE $\wt \gets$ the sibling of node $\vt$. \COMMENT{Create cost vector $c^\ut$}
	\STATE $\wt.\cost \gets \rc(\wt, \alpha_d, \beta_d)$
	\IF{ $\vt = \ut.\leftt$ }
		\STATE $c^\ut(\leftt) \gets \vt.\cost, c^\ut(\rightt) \gets \wt.\cost$ \COMMENT{$\vt$ is the left child of $\ut$}
		\label{line:cost-construction-1}
	\ELSE
		\STATE $c^\ut(\leftt) \gets \wt.\cost, c^\ut(\rightt) \gets \vt.\cost$ \COMMENT{$\vt$ is the right child of $\ut$}
		\label{line:cost-construction-2}
	\ENDIF
	
	\STATE $\ut.\learn(f^\ut, (x, c^\ut) )$ \COMMENT{Update the online CSMC base learner in $\ut$}
	\label{line:update-u}
	\STATE $\ut.\cost \gets c^\ut(f^\ut(x))$ \COMMENT{Compute $c(\tree^{\ut}(x))$ for training in nodes of higher level}
	\label{line:cost-u}

	\ENDFOR
 	\STATE $\alpha_{d-1} \gets \alpha_d.\parent$ \COMMENT{Compute the ancestors of $\at$, $\bt$ to a level up}
 	\STATE $\beta_{d-1} \gets \beta_d.\parent$
\ENDFOR

  \end{algorithmic}
  \caption{\onlinetreetrain with $\order{\log K}$ time cost per example} 
  \label{alg:tree-learn}
\end{algorithm}

\begin{algorithm}
  \caption{\rc}
  \label{alg:return-cost}
\begin{algorithmic}
\REQUIRE Tree node $\wt$; Tree nodes $\alpha$ and $\beta$, which are the ancestors of $\at$ and $\bt$, respectively, at the same level of $\wt$.
\IF{$\wt.\id < \alpha.\id$ or $\wt.\id > \beta.\id$}
\RETURN 0
\ELSIF{$\alpha.\id < \wt.\id < \beta.\id$}
\RETURN $c^*$
\ELSIF{$\wt.\id = \alpha.\id$}
\RETURN $\alpha.\cost$
\ELSIF{$\wt.\id = \beta.\id$}
\RETURN $\beta.\cost$
\ENDIF
\end{algorithmic}
\end{algorithm}

\subsection{\buildtree: initialization of tree policy}

We now describe a procedure \buildtree, namely Algorithm~\ref{alg:build-tree}, that provides essential initialization of our tree policy $\tree$. First, \buildtree assigns a unique $\id$ for each node $\vt$ in the tree $\tree$ through traversing the tree in a top-down fashion. It also supplies the action labels of all $K$ leaves.
The nodes' $\id$'s are assigned such that within the same level, the $\id$'s are increasing from left to right.
Furthermore, it initializes the online binary CSMC base learners in all its internal nodes. To ensure $\order{\log K}$ time cost of the tree learning algorithm, we disallow actions in $\Acal_K \cap [0, h]$ and $\Acal_K \cap [1 - h, 1]$ to be taken by the tree policy. To this end, two classifiers in nodes $\vt^{\onlyleft}$ and $\vt^{\onlyright}$ are set to fixed classifiers $f^{\vt^{\onlyleft}} \equiv \leftt$ and $f^{\vt^{\onlyright}} \equiv \rightt$, both of which are {\em read-only}.

To see why the above restriction helps with ensuring $\order{\log K}$ time cost per example, we now recall the definition of the IPS CSMC example $(x_t, \tilde{c}_t)$ generated by log data $(x_t, a_t, \ell_t(a_t), P_t(a_t \mid x_t))$ in \cats. We first show that $\tilde{c}_t$ has a simple structure: if $a$ is in $\Acal_K \cap [h, 1-h]$, we have a concise formula of $\tilde{c}_t(a)$:
\begin{align}
\tilde{c}_t(a) &=
\begin{cases}
 		\frac{\ell_t(a_t)}{2 h P_t(a_t|x_t)}, & \abs{a - a_t} \leq h, \\
		0, & \text{otherwise}.
\end{cases}
\end{align}
Observe that $\tilde{c}_t$ is a piecewise constant function over $\Acal_K \cap [h, 1-h]$ with at most 3 pieces:
$[0,a_t-h]$ (if $a_t > h$), $[\max(0,a_t - h), \min(1,a_t + h)]$, and $[a_t + h, 1]$ (if $a_t < 1-h$).
The IPS cost vector $\tilde{c}_t$ can be summarized by three numbers: $c^* = \frac{\ell_t(a_t)}{2 h P_t(a_t|x_t)}$, the nonzero value in $\tilde{c}_t$, $\mina = \max(0, \frac{\lceil K(a_t - h) \rceil}{K})$, the minimum $a \in \Acal_K$ such that $\tilde{c}_t(a) = c^*$; $\maxa = \min(\frac{K-1}{K}, \frac{\lfloor K(a_t + h) \rfloor}{K})$, the maximum $a \in \Acal_K$ such that $\tilde{c}_t(a) = c^*$.

We remark that $\tilde{c}_t$ may not be a piecewise constant function {\em globally} over $\Acal_K$. This is because in general, $\tilde{c}_t(a) = \frac{\ell_t(a_t) \smooth(a_t \mid a)}{P_t(a_t|x_t)}  =  \frac{\ell_t(a_t) \one \rbr{a - a_t \leq h} }{ \vol( [a-h, a+h] \cap[0,1] ) \cdot P_t(a_t|x_t) }$, where $\vol(\cdot)$ denotes the Lebesgue measure. Therefore, if, say $a_t$ is in $[0, h]$, the induced IPS cost function $\tilde{c}_t$ can take many possible positive values for $a$ in region $[0,h]$, depending on the value of $\vol( [a-h, a+h] \cap[0,1])$. It turns out that enforcing the piecewise constant structure of the cost vector (as is done by restricting the CSMC vectors to only consider entries in $a$ in $\Acal_K \cap [h, 1-h]$) is vital to achieve $\order{\log K}$ per-example time cost, as we will see next.




\subsection{\onlinetreetrain: online update of tree policy}
\label{subsec:online_update}
In our implementation, to maximize data-efficiency, we will implement a more practical variant of \treetrain, namely Algorithm~\ref{alg:tree-train-no-part}; the difference between it and \treetrain is that, instead of partitioning the input data to train each level separately, we use the full input data to train nodes at all levels.

The tree policy training algorithm, namely \onlinetreetrain (Algorithm~\ref{alg:tree-learn}), is an online implementation of Algorithm~\ref{alg:tree-train-no-part}. It is used by \cats (in its line~\ref{line:tree-train}) to process the IPS CSMC example generated at every round $t$, to obtain an updated tree policy.
It receives a IPS CSMC example $(x, \tilde{c})$ as input, represented by context $x$, and $\mina$, $\maxa$, $c^*$ (representing $\tilde{c}$, as discussed in the previous section), and a tree \tree trained over previous CSMC examples $S$; specifically, $(x_t,\tilde{c}_t)$'s in \cats are its valid inputs.
Here we assume that the input $\tree$ is such that for every node $\vt$, its stored classifier $f^\vt$ is an approximation of $\argmin_{f \in \Fcal} \EE_{(x,c) \sim S} [c^\vt(f(x))]$ (recall the definition of $c^\vt$ in Algorithm~\ref{alg:tree-train-no-part}).
\onlinetreetrain updates the input \tree with $(x, \tilde{c})$, such that it approximates the output of \treetrain over $S \cup \cbr{(x,\tilde{c})}$, that is, for every node $\vt$, its stored classifier $f^\vt$ is an approximation of $\argmin_{f \in \Fcal} \EE_{(x,c) \sim S \cup \cbr{(x, \tilde{c})}} [c^\vt(f(x))]$.
Our online implementation replaces line~\ref{line:tree-train} of \cats with $\tree \gets \onlinetreetrain(\tree, (x_t, \tilde{c}_t))$, with the goal of ensuring the updated $\tree$ after round $t$ closely approximates $\treetrain(K,\Fcal,\{(x_s,\tilde{c}_s)\}_{s=1}^t)$.


The tree policy update proceeds in a bottom-up fashion. Given two leaves of the tree $\at, \bt$ that correspond to actions $\mina, \maxa$, we use them as ``seeds'' to ``climb up'' the tree, reaching nodes that need updating. Specifically, for every level $d \in [D]$, we maintain $\at_d$ and $\bt_d$ that correspond to the ancestors of $\at$ and $\bt$, respectively, at that level.

As discussed in the main text, for a given node $\vt$, if $c^{\vt}(\leftt) = c^{\vt}(\rightt)$, there is no need to update the online CSMC learner at $\vt$, because $f^\vt_{t+1}$, the ERM at node $\vt$ at time $t+1$, will be equal to $f^\vt_{t}$.
From Lemma~\ref{lem:update-correctness} below, it turns out that it suffices to only update the CSMC online learners in $\alpha_d$'s and $\beta_d$'s at levels $d \in \cbr{0,\ldots,D-1}$. In addition, to update an internal node $\vt$, one needs to obtain $c^\vt(\leftt)$ and $c^\vt(\rightt)$, which corresponds to costs of the action routed by its left and right subtrees, i.e. $\tilde{c}(\tree^{\vt.\leftt}(x))$ and $\tilde{c}(\tree^{\vt.\rightt}(x))$.
To ensure computational efficiency, Algorithm~\ref{alg:tree-learn} calls a carefully-designed subprocedure, namely $\rc$ (Algorithm~\ref{alg:return-cost}), that given any node $\vt$ at level $d$,
returns the cost $\tilde{c}(\tree^{\vt}(x))$ in {\em constant time}, provided that $\at_d, \bt_d$, the ancestors of $\at,\bt$ at the level $d$, have been identified. We refer the reader to Claim~\ref{claim:cost-v} for a proof of correctness of $\rc$.
Upon receiving binary CSMC example $(x, c^\vt)$, the CSMC oracle at node $\vt$ gets updated using an incremental update rule (such as stochastic gradient descent) on $(x, c^\vt)$ at line~\ref{line:update-u} of \onlinetreetrain, which we assume takes $\order{1}$ time (where the $\order{\cdot}$ notation here is only with respect to the discretization level $K$). Specifically, our implementation of \cats in Vowpal Wabbit uses base CSMC learners that performs a reduction from classification to online least-squares regression to approximate ERM: at every node, its corresponding base learner learns to predict the cost of going to the left and right branch respectively, and the learned classifier takes the branch with lower predicted cost. Furthermore, we use a parameter-free gradient update rule~\cite{Coin16} to implement our online least square regression procedure. As a result, in our implementation, the time costs of each base learner's prediction and update are both $\order{d}$, where $d$ is dimension of the context space.

We finally remark that in line~\ref{line:noupdate} of Algorithm~\ref{alg:tree-learn}, we skip updates on nodes $\vt^\onlyleft$ and $\vt^\onlyright$, ensuring that the tree policy never outputs actions in $\Acal_K \cap [0, h]$ or $\Acal_K \cap [1 - h, 1]$.


\subsubsection{Proof of correctness of \onlinetreetrain}

We now prove that Algorithm~\ref{alg:tree-learn} does not miss updating nodes that needs updates, i.e. the nodes $\ut$ such that $c^\ut(\leftt) \neq c^\ut(\rightt)$; recall that $c^\ut(\leftt) = \tilde{c}(\tree^{\ut.\leftt}(x))$ and $c^\ut(\rightt) = \tilde{c}(\tree^{\ut.\rightt}(x))$.

\begin{lemma}
For every internal node $\ut$ in $\tree$, if $c^\ut(\leftt) \neq c^\ut(\rightt)$, then Algorithm~\ref{alg:tree-learn} updates $\ut$ with binary cost-sensitive example $(x, c^\ut)$. Consequently, \onlinetreetrain (Algorithm~\ref{alg:tree-learn}) faithfully implements \treetrain (Algorithm~\ref{alg:tree-train}) in an online fashion.
\label{lem:update-correctness}
\end{lemma}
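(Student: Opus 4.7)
The plan is to argue the contrapositive: any internal node $\ut$ that \onlinetreetrain leaves untouched --- setting aside the two fixed-classifier nodes $\vt^\onlyleft, \vt^\onlyright$ initialized by \buildtree --- must automatically satisfy $c^\ut(\leftt) = c^\ut(\rightt)$, so no learning update is needed in the first place.

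First I would characterize the set of internal nodes the algorithm actually updates. Starting from the leaves $\at, \bt$ corresponding to actions $\mina$ and $\maxa$, the main loop of Algorithm~\ref{alg:tree-learn} maintains $\alpha_d, \beta_d$ as the unique ancestors of $\at$ and $\bt$ at level $d$; at each level it trains $\vt.\parent$ for every $\vt \in \{\alpha_d, \beta_d\}$, with the test $\alpha_d.\parent \neq \beta_d.\parent$ removing the duplicate once the two ancestor chains meet. Therefore the set of trained internal nodes is exactly the strict ancestors of $\at$ or $\bt$ in $\tree$, minus $\{\vt^\onlyleft, \vt^\onlyright\}$, which are explicitly skipped in line~\ref{line:noupdate} because their classifiers were hard-wired by \buildtree.

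Next comes the key structural observation: if $\ut$ is an updatable internal node that is \emph{not} a strict ancestor of $\at$ or $\bt$, then $c^\ut(\leftt) = c^\ut(\rightt)$. Since $\tree$ is a complete binary tree whose leaves, read left to right, carry labels $0, 1/K, \ldots, (K-1)/K$, the subtree rooted at $\ut$ covers a contiguous interval of action labels. By hypothesis on the input CSMC vector, $\tilde c$ equals $c^*$ on $[\mina, \maxa]\cap \Acal_K$ and $0$ elsewhere, so its only two ``boundaries'' are located at the leaves $\at$ and $\bt$. Because $\ut$ is not an ancestor of either, the positions of both $\at$ and $\bt$ lie outside the leaf interval of $\ut$, and a simple case split (subtree entirely left of $\at$, entirely right of $\bt$, or entirely strictly between the two) shows that $\tilde c$ is constant on $\range(\tree^\ut)$. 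In particular $c^\ut(\leftt) = \tilde c(\tree^{\ut.\leftt}(x))$ and $c^\ut(\rightt) = \tilde c(\tree^{\ut.\rightt}(x))$ both equal that single value, as required.

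The final step dispatches the two special nodes and then derives the ``consequently'' assertion. By construction $f^{\vt^\onlyleft}\equiv \leftt$ and $f^{\vt^\onlyright}\equiv \rightt$ are not learned from data, so they lie outside the scope of ``updatable internal nodes'' (equivalently, the induced tree policy class was designed to freeze these classifiers so that \cats never outputs actions in $[0,h)\cup[1-h,1)$); skipping them in line~\ref{line:noupdate} therefore loses no correctness. Faithfulness of \onlinetreetrain with respect to \treetrain then follows because feeding the online CSMC learner at $\ut$ an example $(x, c^\ut)$ with $c^\ut(\leftt) = c^\ut(\rightt)$ is a no-op: for exact ERM a constant offset to every hypothesis' empirical risk leaves the minimizer unchanged, and for the regression-based online learner used in our implementation the stochastic gradient on such an example vanishes. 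Hence the bottom-up sweep in Algorithm~\ref{alg:tree-learn} produces precisely the per-node state that \treetrain would yield offline on the accumulated dataset. The main subtlety, and the point I expect to be the most delicate in the write-up, is Step 3: one must check that $\vt^\onlyleft, \vt^\onlyright$ genuinely require no learning updates even in the corner case where they happen to be ancestors of $\at$ or $\bt$ (which occurs whenever $\mina$ or $\maxa$ falls in a boundary strip of width $O(h)$); the crucial point is that the targeted policy class freezes these two classifiers \emph{a priori}, so by design there is no ERM to perform at these nodes.
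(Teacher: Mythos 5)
Your argument for \emph{which} nodes must be updated is essentially the paper's: you prove the contrapositive that any internal node $\ut$ that is not a strict ancestor of $\at$ or $\bt$ has its subtree's leaf labels entirely inside one of the three constant pieces of $\tilde c$, hence $c^\ut(\leftt)=c^\ut(\rightt)$ and skipping it is a no-op for the ERM. That half matches the paper (its Claim on the structure of $\tilde c$ over $\range(\tree^\vt)$), and your handling of $\vt^\onlyleft,\vt^\onlyright$ as frozen, non-learned classifiers is also consistent with the intended policy class.

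However, there is a genuine gap: the lemma asserts that the algorithm updates $\ut$ \emph{with the binary cost-sensitive example $(x,c^\ut)$}, i.e.\ with the correct costs $c^\ut(\leftt)=\tilde c(\tree^{\ut.\leftt}(x))$ and $c^\ut(\rightt)=\tilde c(\tree^{\ut.\rightt}(x))$. Your proof takes these equalities as definitional, but \onlinetreetrain never traverses the subtrees $\tree^{\ut.\leftt}$, $\tree^{\ut.\rightt}$ to evaluate them; it assembles the cost vector from the cached field $\vt.\cost$ of the child on the $\at$/$\bt$ ancestor chain and from a constant-time call to \rc for the sibling. Establishing that these cached and reconstructed values actually equal $\tilde c(\tree^{\vt}(x))$ is the delicate part of the paper's proof: it requires a bottom-up induction showing that after node $\ut$ is trained, the value stored as $\ut.\cost = c^\ut(f^\ut(x))$ coincides with $\tilde c(\tree^{\ut}(x))$ for the \emph{just-updated} classifier $f^\ut$, and that \rc correctly returns $0$, $c^*$, or the cached cost depending on whether the queried node lies left of $\alpha_d$, right of $\beta_d$, strictly between them, or on the ancestor chain. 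Without this induction the claim that the sweep ``produces precisely the per-node state that \treetrain would yield offline'' is unsupported, since an incorrect cost vector at one level would corrupt the classifier there and hence the routing (and cached costs) at all levels above. You should add this correctness-of-costs induction to close the argument.
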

\begin{proof}
With the notations defined in \onlinetreetrain (Algorithm~\ref{alg:tree-learn}), denote by $\at$ (resp. $\bt$) the leaf with action label $\mina$ (resp. $\maxa$).
It can be seen from the description of \onlinetreetrain that if node $\ut$ is an ancestor of $\at$ or $\bt$, the base CSMC learner in $\ut$ will get updated. We now show that if $c^\ut(\leftt) \neq c^\ut(\rightt)$, $\ut$ must be an ancestor of either $\at$ or $\bt$, which will let us conclude that all nodes $\ut$ with $c^\ut(\leftt) \neq c^\ut(\rightt)$ will be updated.

We will prove the above statement's contrapositive: if neither $\at$ nor $\bt$ is a child of $\ut$, then $c^\ut(\leftt) = c^\ut(\rightt)$. Indeed, suppose $\ut$ is at level $d$, and denote by $\alpha_d$ and $\beta_d$ the ancestors of $\at$, $\bt$ at level $d$ respectively. Then, it must be the case that $\ut \neq \alpha_d$ and $\ut \neq \beta_d$.
From the first two items of Claim~\ref{claim:cost-v} below, we have that $\tilde{c}(a)$ must agree unanimously for all  actions $a$ in $\range(\tree^\ut)$. Now, because both $c^\ut(\leftt)$ and $c^\ut(\rightt)$ take values in $\range(\tree^\ut)$, they must also be equal.

In addition, from the last item in Claim~\ref{claim:cost-v} below, along with the description of \onlinetreetrain's lines~\ref{line:cost-construction-1} and~\ref{line:cost-construction-2}, if node $\ut$ gets updated, the $\leftt$ (resp. $\rightt$) entry of the binary cost vector $c^\ut(\leftt)$ (resp. $c^\ut(\rightt)$) takes value as $\ut.\leftt.\cost$ (resp. $\ut.\rightt.\cost$), which is $\tilde{c}(\tree^{\ut.\leftt}(x))$ (resp. $\tilde{c}(\tree^{\ut.\rightt}(x))$). Therefore the binary CSMC example $\ut$ receives is indeed $(x, c^\ut)$. This completes the proof of the lemma.
\end{proof}

\begin{claim}
For every level $d \in [D]$, denote by $\at_d$ and $\bt_d$ the ancestor of $\at$, $\bt$ at level $d$ in $\tree$ respectively. Then, for node $\vt$ at level $d$:
\begin{enumerate}
\item If $\vt.\id < \at_d.\id$ or $\vt.\id > \bt_d.\id$, then for all $a \in \range(\tree^\vt)$, $\tilde{c}(a) = 0$.
\item If $\at_d.\id < \vt.\id < \bt_d.\id$, then for all $a \in \range(\tree^\vt)$, $\tilde{c}(a) = c^*$.
\item If $\vt.\cost$ is available, it must equal $\tilde{c}(\tree^\vt(x))$; in addition, $\rc(\vt, \at_d, \bt_d)$ returns $\tilde{c}(\tree^\vt(x))$ correctly.  
\end{enumerate}
\label{claim:cost-v}
\end{claim}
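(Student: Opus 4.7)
}
The key preliminary observation is a structural fact about the ID assignment done in \buildtree (Algorithm~\ref{alg:build-tree}): because \buildtree assigns $\vt.\leftt.\id = 2\vt.\id + 1$ and $\vt.\rightt.\id = 2\vt.\id + 2$ in a top-down manner, within every level $d$ the IDs are increasing from left to right, and leaves at level $D$ have action label $(\vt.\id - (2^D-1))/K$. Consequently, for any node $\vt$ at level $d$, the set $\range(\tree^\vt)$ is a contiguous block of $2^{D-d}$ consecutive actions in $\Acal_K$; moreover, if two nodes $\vt_1, \vt_2$ at the same level satisfy $\vt_1.\id < \vt_2.\id$, then every action in $\range(\tree^{\vt_1})$ is strictly less than every action in $\range(\tree^{\vt_2})$. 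I would prove this preliminary fact first, by a short induction on the level $d$.

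Parts 1 and 2 then follow by direct application of this ordering together with the definition of $\at_d, \bt_d$ as the level-$d$ ancestors of the leaves encoding $\mina$ and $\maxa$. If $\vt.\id < \at_d.\id$, then every $a \in \range(\tree^\vt)$ is strictly less than $\mina$ (since $\mina \in \range(\tree^{\at_d})$), so $a \notin [\mina, \maxa]$ and $\tilde{c}(a) = 0$; the case $\vt.\id > \bt_d.\id$ is symmetric, which proves part~1. For part 2, if $\at_d.\id < \vt.\id < \bt_d.\id$, then every $a \in \range(\tree^\vt)$ lies strictly between $\mina$ and $\maxa$, hence $a \in [\mina, \maxa]$ and $\tilde{c}(a) = c^*$.

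Part 3 I would prove by induction on the level $d$ from $D$ down to $0$, following the bottom-up order in which \onlinetreetrain (Algorithm~\ref{alg:tree-learn}) sets the $\cost$ fields. For the base case $d = D$, only the two leaves $\at, \bt$ have their costs set (line~\ref{line:cost-leaf}), to the value $c^*$; since $\tree^\at(x) = \mina$ and $\tree^\bt(x) = \maxa$ both lie in $[\mina,\maxa]$, we have $\tilde{c}(\tree^\at(x)) = \tilde{c}(\tree^\bt(x)) = c^*$, as needed. For the inductive step, the only internal nodes at level $d < D$ whose $\cost$ is set are those $\ut$ visited at line~\ref{line:cost-u} that were not skipped by the guard at line~\ref{line:noupdate}. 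For such $\ut$, one of its children (the one lying on the ascent path from $\at$ or $\bt$) has its cost set and, by the inductive hypothesis, equals $\tilde{c}$ evaluated on that subtree's policy prediction; the other (sibling) child $\wt$ has $\wt.\cost$ freshly computed by $\rc(\wt, \alpha_{d+1}, \beta_{d+1})$, whose correctness I will verify in the last step below. Hence, using lines~\ref{line:cost-construction-1}--\ref{line:cost-construction-2}, the binary cost vector satisfies $c^\ut(\leftt) = \tilde{c}(\tree^{\ut.\leftt}(x))$ and $c^\ut(\rightt) = \tilde{c}(\tree^{\ut.\rightt}(x))$, so $\ut.\cost = c^\ut(f^\ut(x)) = \tilde{c}(\tree^{\ut.f^\ut(x)}(x)) = \tilde{c}(\tree^\ut(x))$ by the semantics of tree execution (Algorithm~\ref{alg:filter-tree-pred}).

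To close the loop I would then verify the second sentence of part 3, namely correctness of $\rc$, by a four-way case split on $\vt.\id$ versus $\alpha.\id, \beta.\id$. The outer two cases ($\vt.\id < \alpha.\id$ or $\vt.\id > \beta.\id$, returning $0$; and $\alpha.\id < \vt.\id < \beta.\id$, returning $c^*$) are immediate from parts~1 and~2 respectively, since they force $\tilde{c}$ to be constant on $\range(\tree^\vt)$. The two boundary cases reduce to $\vt = \alpha$ (respectively $\vt = \beta$), where $\rc$ returns $\alpha.\cost$ (respectively $\beta.\cost$), and the inductive hypothesis for part 3 just established guarantees this equals $\tilde{c}(\tree^\vt(x))$. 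The main subtlety to be careful about is the interleaving of inductions: the inductive step for $\ut.\cost$ at level $d$ relies on $\rc$'s correctness at level $d+1$, which in turn relies on the part-3 correctness of $\alpha_{d+1}.\cost$ and $\beta_{d+1}.\cost$; this is consistent because these latter costs are set earlier in the bottom-up traversal, and the nodes $\vt^\onlyleft, \vt^\onlyright$ skipped at line~\ref{line:noupdate} never play the role of $\alpha$ or $\beta$ in any subsequent $\rc$ call (their $\cost$ is simply never consulted), so the inductive chain is well-founded.
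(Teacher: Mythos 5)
Your proposal follows essentially the same route as the paper's proof: the same preliminary fact that each node's range is a contiguous, ID-ordered block of actions (the paper states it via explicit intervals $I_\ut$), the same direct comparisons for parts~1 and~2, and the same bottom-up induction for part~3 interleaved with the four-way case split establishing correctness of \rc. Your added remark about the well-foundedness of the interleaved induction and the skipped nodes $\vt^\onlyleft,\vt^\onlyright$ is extra care the paper does not spell out, but it does not change the argument.
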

\begin{proof}
 It can be seen that for every node $\ut$ at level $d$, $\range(\tree^\ut)$ spans a separate contiguous subinterval of $[0, 1]$. Specifically, for every $\ut$ at level $d$, define interval
\[ I_\ut = \left[ \frac{\ut.\id - (2^d - 1) }{2^d}, \frac{\ut.\id + 1 - (2^d - 1) }{2^d} \right), \]
we have $\range(\tree^\ut) = \range(\tree) \cap I_\ut$, and all $I_\ut$'s are disjoint for $\ut$'s at level $d$.

For the first item, suppose $\vt.\id < \at_d.\id$, i.e. $\vt$ is to the left of $\at_d$. In this case, all elements of $\range(\tree^\vt)$ must be less than $\mina$, and therefore for all $a \in \range(\tree^\vt)$, $c^\vt(a) = 0$. A similar reasoning applies to the case when $\vt.\id > \bt_d.\id$.

For the second item, suppose $\at_d.\id < \vt.\id < \bt_d.\id$, i.e. $\vt$ is in the middle of $\at_d$ and $\bt_d$. In this case, all elements of $\range(\tree^\vt)$ must be within the interval $[\mina, \maxa]$, therefore, by the definition of $\mina$ and $\maxa$, we have that for all $a \in \range(\tree^\vt)$, $c^\vt(a) = c^*$.

For the last item, we consider two cases.
\begin{enumerate}
\item If $\vt \neq \at_d$ and $\vt \neq \bt_d$, then from the first two items we have just shown, we can decide the value of $c^\vt(\tree^\vt(x))$ directly by comparison with the $\id$'s of $\alpha$ and $\beta$, which is consistent with the implementation of $\rc$; also note that in this case, $\vt.\cost$  gets assigned to $\rc(\vt, \at_d, \bt_d)$, which also equals $c^\vt(\tree^\vt(x))$.

\item Otherwise, $\vt = \at_d$ or $\vt = \bt_d$. In this case, $\rc$ returns the stored cost of $\vt$, i.e. $\vt.\cost$.
It suffices to show that $\at_d.\cost$ (resp. $\bt_d.\cost$), is indeed $\tilde{c}(\tree^{\at_d}(x))$ (resp. $\tilde{c}(\tree^{\bt_d}(x))$), which we show by induction:

\paragraph{Base case.} In the case when $d = D$, $\at_D.\cost = \at.\cost$ (resp. $\beta_D.\cost = \bt.\cost$) is directly calculated in line~\ref{line:cost-leaf} of Algorithm~\ref{alg:tree-learn}, and is indeed $\tilde{c}(\labelt(\at)) =  c^\vt(\at)$ (resp. $\tilde{c}(\labelt(\bt)) = c^\vt(\bt)$), and is equal to $c^*$.

\paragraph{Inductive case.} Suppose for level $d+1$, $\rc(\ut, \alpha_{d+1}, \beta_{d+1})$ returns $c(\tree^{\ut}(x))$ correctly for $\ut$  in $\cbr{ \alpha_{d+1}, \beta_{d+1} }$.
Now consider a node $\vt$ at level $d$, which is either $\at_d$ or $\bt_d$. By inductive hypothesis, and the correctness of $\rc$ on the costs of non-ancestors of $\at$,$\bt$ in the last item, for both $\vt.\leftt$ and $\vt.\rightt$, their costs $c^\vt(\leftt) = \tilde{c}(\tree^{\vt.\leftt}(x))$ and $c^\vt(\rightt) = \tilde{c}(\tree^{\vt.\rightt}(x))$ are calculated correctly by $\rc$. Hence, the cost calculated by $\rc$ on node $\vt$, $\vt.\cost$, at line~\ref{line:cost-u} in \onlinetreetrain, equals $c^\vt(f^\vt(x)) = \tilde{c}( \tree^{\vt.f^\vt(x)}(x) ) = \tilde{c}(\tree^\vt(x))$.
This completes the induction.
\end{enumerate}
The proof of the last item is complete.
\end{proof}

\subsection{Proof of Theorem~\ref{thm:cats-log-time}}

We are now ready to prove the time complexity guarantee of \cats, i.e. Theorem~\ref{thm:cats-log-time} in the main body.

\begin{proof}[Proof of Theorem~\ref{thm:cats-log-time}]
From Lemma~\ref{lem:update-correctness}, we see that \onlinetreetrain faithfully implements \treetrain in an online fashion. As other steps of \cats are intact, the online implementation of \cats faithfully implements the original \cats.

Moreover, consider the operations of \cats at every time step:
\begin{enumerate}
\item Predict $\tree(x)$: this takes $\order{D} = \order{\log K}$ time as can be directly seen from Algorithm~\ref{alg:filter-tree-pred}.
\item Generate $\epsilon$-greedy action distribution, take action, create $(x_t, \tilde{c}_t)$ implicitly by representing $\tilde{c}_t$ as $(\mina, \maxa, c^*)$: these steps take $\order{1}$ time as they are based on manipulations of piecewise constant density with at most 3 pieces.
\item $\onlinetreetrain(\tree, (x_t, \tilde{c}_t) )$: this takes $\order{D} = \order{\log K}$ time, because at each of the $D$ levels, there are at most 2 nodes to be updated, and for every such node, $\rc$ takes $\order{1}$ time to retrieve the costs of both subtrees.
\end{enumerate}
In summary, the total time cost of \cats at every time step is $\order{\log K}$.
\end{proof}
\section{Additional Experimental Results}
\label{sec:add-experiments}
Additional figures comparing running times of \cats against \dl and \dt for the rest of the datasets  are shown in Figures~\ref{fig:time2}-\ref{fig:time6}.
\begin{figure}[htp!]
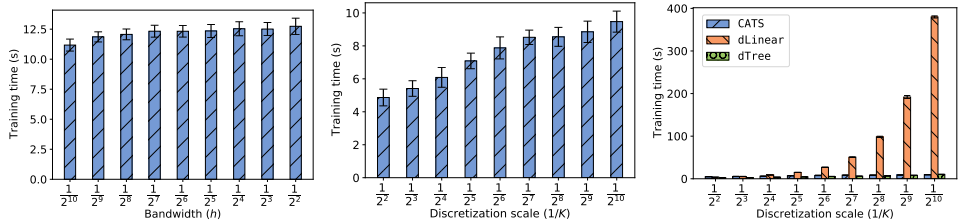

  \begin{center}
\includegraphics[width=0.3\textwidth]{Figures/BNG_cpu_act_timeh.png}
\includegraphics[width=0.3\textwidth]{Figures/BNG_cpu_act_timen.png}
\includegraphics[width=0.3\textwidth]{Figures/BNG_cpu_act_timenn.png}
\end{center}
\caption{Training time of \cats (blue bar) w.r.t: (\textbf{left}) bandwidth ($h$) with a fixed discretization scale $K = 2^{13}$; (\textbf{middle}) discretization scale ($1/K$) with a fixed $h = 1/4$; (\textbf{right}) discretization scale ($1/K$) with a fixed $h = 1/4$, compared against \dl (orange bar) and \dt (green bar), in the \texttt{cpu\_act} dataset.}
\label{fig:time2}
\end{figure}

\begin{figure}[htp!]
  \begin{center}
\includegraphics[width=0.3\textwidth]{Figures/zurich_timeh.png}
\includegraphics[width=0.3\textwidth]{Figures/zurich_timen.png}
\includegraphics[width=0.3\textwidth]{Figures/zurich_timenn.png}
\end{center}
\caption{Training time of \cats (blue bar) w.r.t: (\textbf{left}) bandwidth ($h$) with a fixed discretization scale $K = 2^{13}$; (\textbf{middle}) discretization scale ($1/K$) with a fixed $h = 1/4$; (\textbf{right}) discretization scale ($1/K$) with a fixed $h = 1/4$, compared against \dl (orange bar) and \dt (green bar), in the \texttt{zurich\_delay} dataset.}
\label{fig:time3}
\end{figure}

\begin{figure}[htp!]
  \begin{center}
\includegraphics[width=0.3\textwidth]{Figures/BNG_wisconsin_timeh.png}
\includegraphics[width=0.3\textwidth]{Figures/BNG_wisconsin_timen.png}
\includegraphics[width=0.3\textwidth]{Figures/BNG_wisconsin_timenn.png}
\end{center}
\caption{Training time of \cats (blue bar) w.r.t: (\textbf{left}) bandwidth ($h$) with a fixed discretization scale $K = 2^{13}$; (\textbf{middle}) discretization scale ($1/K$) with a fixed $h = 1/4$; (\textbf{right}) discretization scale ($1/K$) with a fixed $h = 1/4$, compared against \dl (orange bar) and \dt (green bar), in the \texttt{wisconsin} dataset.}
\label{fig:time4}
\end{figure}

\begin{figure}
  \begin{center}
\includegraphics[width=0.3\textwidth]{Figures/black_friday_timeh.png}
\includegraphics[width=0.3\textwidth]{Figures/black_friday_timen.png}
\includegraphics[width=0.3\textwidth]{Figures/black_friday_timenn.png}
\end{center}
\caption{Training time of \cats (blue bar) w.r.t: (\textbf{left}) bandwidth ($h$) with a fixed discretization scale $K = 2^{13}$; (\textbf{middle}) discretization scale ($1/K$) with a fixed $h = 1/4$; (\textbf{right}) discretization scale ($1/K$) with a fixed $h = 1/4$, compared against \dl (orange bar) and \dt (green bar), in the \texttt{black\_friday} dataset.}
\label{fig:time5}
\end{figure}
\begin{figure}[th!]
  \begin{center}
\includegraphics[width=0.3\textwidth]{Figures/BNG_auto_price_timeh.png}
\includegraphics[width=0.3\textwidth]{Figures/BNG_auto_price_timen.png}
\includegraphics[width=0.3\textwidth]{Figures/BNG_auto_price_timenn.png}
\end{center}
\caption{Training time of \cats (blue bar) w.r.t: (\textbf{left}) bandwidth ($h$) with a fixed discretization scale $K = 2^{13}$; (\textbf{middle}) discretization scale ($1/K$) with a fixed $h = 1/4$; (\textbf{right}) discretization scale ($1/K$) with a fixed $h = 1/4$, compared against \dl (orange bar) and \dt (green bar), in the \texttt{auto\_price} dataset.}
\label{fig:time6}
\end{figure}

\end{document}